\pdfoutput=1
\documentclass{article}
\usepackage[numbers]{natbib}
\usepackage[final]{neurips_2021}
\usepackage[utf8]{inputenc} 
\usepackage[T1]{fontenc}    
\usepackage{xcolor}

\usepackage{hyperref}
\hypersetup{colorlinks,allcolors=blue,linktocpage=true}
\usepackage{url, float}  
\usepackage{amsfonts}       
\usepackage{amssymb}
\usepackage{nicefrac}
\usepackage{algorithm, algorithmic}
\usepackage{microtype}
\usepackage{graphicx}
\usepackage{booktabs} 
\usepackage{amsthm}
\usepackage{bbm, amsmath, mathtools}
\usepackage{thm-restate}

\usepackage{cleveref}


\theoremstyle{definition}
\newtheorem{definition}{Definition}
\newtheorem{lemma}{Lemma}
\newtheorem{obsv}{Observation}
\newtheorem{proposition}{Proposition}
\newtheorem{corollary}{Corollary}
\newtheorem{assumption}{Assumption}
\newtheorem{theorem}{Theorem}
\DeclareMathOperator*{\argmax}{arg\,max}
\DeclareMathOperator*{\argmin}{arg\,min}

\usepackage{pifont}

\newcommand{\bitem}{\begin{itemize}}
\newcommand{\eitem}{\end{itemize}}
\newcommand{\benum}{\begin{enumerate}}
\newcommand{\eenum}{\end{enumerate}}
\newcommand{\bdefn}{\begin{definition}}
\newcommand{\edefn}{\end{definition}}
\newcommand{\bprop}{\begin{proposition}}
\newcommand{\eprop}{\end{proposition}}
\newcommand{\bque}{\begin{question}}
\newcommand{\eque}{\end{question}}
\newcommand{\bobsv}{\begin{observation}}
\newcommand{\eobsv}{\end{observation}}
\newcommand{\beqn}{\begin{equation}\begin{aligned}}
\newcommand{\eeqn}{\end{aligned}\end{equation}}
\newcommand{\ps}{\begin{proof}[Sketch]}
\newcommand{\brmk}{\begin{remark}}
\newcommand{\ermk}{\end{remark}}
\newcommand{\bduiqi}{\begin{aligned}}
\newcommand{\eduiqi}{\end{aligned}}
\newcommand{\bcoro}{\begin{corollary}}
\newcommand{\ecoro}{\end{corollary}}
\newcommand{\bcom}{}

\newcommand{\adap}{adaptive}

\newcommand{\apxn}{approximation}

\newcommand{\arb}{arbitrary}

\newcommand{\alg}{algorithm}

\newcommand{\Alg}{Algorithm}

\newcommand{\assu}{assumption}

\newcommand{\bs}{\backslash}

\newcommand{\constr}{constraint}

\newcommand{\corres}{corresponding}

\newcommand{\ci}{confidence interval}

\newcommand{\distr}{distribution}

\newcommand{\dec}{decision}

\newcommand{\elimn}{elimination}

\newcommand{\elem}{element}

\newcommand{\feas}{feasible}

\newcommand{\func}{function}

\newcommand{\hypo}{hypothesis}
\newcommand{\hypos}{hypotheses}

\newcommand{\ho}{\mathbb}

\newcommand{\indep}{independent}

\newcommand{\ins}{instance}

\newcommand{\ineq}{inequality}
\newcommand{\ineqs}{inequalities}

\newcommand{\ift}{it follows that}

\newcommand{\lar}{\leftarrow}

\newcommand{\omg}{\omega}
\newcommand{\Omg}{\Omega}
\newcommand{\ord}{ordinary}
\newcommand{\obsvn}{observation}

\newcommand{\obj}{objective}
\newcommand{\OTOH}{On the other hand}

\newcommand{\pa}{partially-adaptive}

\newcommand{\prb}{probability}

\newcommand{\parti}{particular}

\newcommand{\pmt}{parameter}

\newcommand{\rv}{random variable}

\newcommand{\rar}{\rightarrow}

\newcommand{\real}{\mathbb{R}}

\newcommand{\sat}{satisfy}
\newcommand{\sats}{satisfies}
\newcommand{\satd}{satisfied}
\newcommand{\sce}{scenario}
\newcommand{\sep}{separat}

\newcommand{\sln}{solution}
\newcommand{\sps}{suppose}
\newcommand{\Sps}{Suppose}

\newcommand{\strfwd}{straightforward}

\newcommand{\submod}{submodular}

\newcommand{\Sg}{Subgaussian}
\newcommand{\sg}{subgaussian}

\newcommand{\unif}{uniform}

\newcommand{\xulie}{sequence}

\let\eps\varepsilon

\newcommand{\KL}{\mathrm{KL}}
\newcommand{\OPT}{\mathrm{OPT}}
\newcommand{\PA}{\mathrm{PA}}
\newcommand{\FA}{\mathrm{FA}}
\newcommand{\poly}{\mathrm{poly}}
\newcommand{\CT}{\mathrm{CT}}

\newcommand{\rev}{\color{black}}

\title{Greedy Approximation Algorithms \\ for Active Sequential Hypothesis Testing}
\author{
  Kyra Gan$^\ast$, Su Jia\thanks{Equal contribution.}, Andrew A. Li \\
  Carnegie Mellon University\\
  Pittsburgh, PA 15213 \\
  \texttt{\{\href{mailto:kyragan@cmu.edu}{kyragan},\href{mailto:sjia1@andrew.cmu.edu}{sujia},\href{mailto:aali1@cmu.edu}{aali1}\}@cmu.edu}
}


\begin{document}
\maketitle
\begin{abstract}
In the problem of \emph{active sequential hypothesis testing} (ASHT), a learner seeks to identify the \emph{true} hypothesis from among a known set of hypotheses. The learner is given a set of actions and knows the random distribution of the outcome of any action under any true hypothesis. 
Given a target error $\delta>0$, the goal is to sequentially select the fewest number of actions so as to identify the true hypothesis with probability at least $1 - \delta$. Motivated by applications in which the number of hypotheses or actions is massive (e.g., genomics-based cancer detection), 
we propose efficient (greedy, in fact) algorithms and provide the first approximation guarantees for ASHT, under two types of adaptivity.  
Both of our guarantees are independent of the number of actions and logarithmic in the number of hypotheses.
We numerically evaluate the performance of our algorithms using both synthetic and real-world DNA mutation data, demonstrating that our algorithms outperform previously proposed heuristic policies by large margins.
\vspace{-5pt}
\end{abstract}
\section{Introduction}
Consider the problem of learning the \emph{true} hypothesis from among a (potentially large) set of candidate hypotheses $H$. Assume that the learner is given a (potentially large) set of actions
$A$, and knows the distribution of the noisy outcome of each action, under each potential hypothesis. The learner incurs a fixed cost each time an action is selected, and seeks to identify the true hypothesis with sufficient confidence, at minimum total cost. Finally, and most importantly, the learner is allowed to select actions {\em adaptively}.

This well-studied problem is referred to as {\em active sequential hypothesis testing}, and as we will describe momentarily, there exists a broad set of results that tightly characterizes the optimal achievable cost under various notions of adaptivity. Unfortunately, the corresponding optimal policies are typically only characterized as the optimal policy to a Markov decision process (MDP)---thus,
they remain computationally hard to compute when one requires a policy in practice. This deficiency becomes particularly apparent in modern applications where both the set of hypotheses and set of actions are large. As a concrete example, we will describe later on an application to cancer blood testing that
has tens of hypotheses and {\em billions} of tests at full scale. 
Thus motivated, {\em we provide the first approximation algorithms for ASHT}. 

We study ASHT under two types of adaptivity: {\em partial} and {\em full}, where partial adaptivity requires the sequence of actions to be decided upfront (with adaptively chosen stopping time), and full adaptivity allows the choice of action to depend on previous outcomes.
For both problems, we propose {\em greedy} algorithms that run in $O(|A||H|)$ time, and prove that their expected costs are upper bounded by a non-trivial multiplicative factor of the corresponding optimal costs. Most notably, these approximation guarantees are {\em independent} of $|A|$ (contrast this with the trivially-achievable guarantee of $O(|A|)$) and {\em logarithmic} in $|H|$ (the optimal cost itself is often $\Omega(|H|)$).

\begin{table*}[t!]
\centering
\begin{tabular}{ccccc}
\toprule
& {\bf Noise} & 
{\bf Approximation Ratio}
& {\bf Objective} & {\bf Adaptivity Type} \\ \midrule
\cite{naghshvar2013active}  & Yes& No & Average & Both \\ 
\cite{Nowak09} & Yes   & No & Worst-case & Fully adaptive\\ 
\cite{im2016minimum}    & No & Yes & Both           & Partially adaptive \\ 
\cite{kosaraju1999optimal, chakaravarthy2009approximating}        & No & Yes & Both & Fully adaptive\\ 
\cite{jia2019optimal}   & Semi*  & No & Both & Both \\ 
This Work& Yes   & Yes & Both & Both\\ \bottomrule
\end{tabular}
\vspace{-5pt}
\caption{ Summary of related work.
*\emph{Semi} refers to a restrictive special case.
}
\label{table:survey}
\vspace{-10pt}
\end{table*}

Our results rely on drawing connections to two existing problems: {\em submodular function ranking} (SFR) \cite{azar2011ranking} and the {\em optimal decision tree} (ODT) problem \cite{laurent1976constructing}. These connections allow us to tackle what is arguably the primary challenge in achieving approximation results for ASHT, which is its inherent {\em combinatorial} nature. We will argue that existing heuristics from statistical learning fail precisely because they disregard this combinatorial difficulty---indeed, they largely amount to solving the completely {\em non-adaptive} version of the problem.
At the same time, existing results for SFR and ODT fail to account for {\em noise} in a manner that would map directly to ASHT---this extension is among our contributions. 

\paragraph{Related Work }
Our work is closely related to three streams of research.
Table~\ref{table:survey} highlights the key differences between our contributions and those of the most relevant previous works.
\begin{enumerate}
    \item[(a)] \textbf{Hypothesis Testing and Asymptotic Performance:}
In the classical binary sequential hypothesis testing problem, a decision
maker is provided with one action 
whose outcome
is stochastic \cite{wald1945sequential, armitage1950sequential, lorden1977nearly}, and the goal is to use the minimum expected number of samples to identify the true hypothesis subject to some given error probability. 
The ASHT problem, first studied in~\cite{chernoff1959sequential}, generalizes this problem to multiple actions. Most related to our work is~\cite{naghshvar2013active}, who formulated a similar problem as an MDP. We will postpone describing and contrasting their work until the experiments section. 
%
%
%
%
%
\item[(b)]\textbf{Active Learning and Sample Complexity:}
In active learning, the learner is given access to a pool of unlabeled samples (cheaply obtainable) and is allowed to request the label of any sample (expensive) from that  pool. 
The goal is to learn an accurate classifier while requesting as few labels as possible.
Some nice surveys include~\cite{hanneke2014theory} and~\cite{settles2009active}.
Our model extends the classical discrete active learning model~\cite{dasgupta2005analysis} in which outcomes are noiseless (deterministic) for any pair of \hypo\ and unlabeled sample.
When outcomes are noisy, the majority of provable guarantees are provided via sample complexity.
%
%
%
\cite{castro2007minimax} showed tight minimax classification error rates for a broad class of distributions. Other sample complexity results on noisy active learning include~\cite{wang2014noise,Nowak09,BalcanBL06,awasthi2017power,hanneke2015minimax}. 
\item[(C)]\textbf{ Approximation Algorithms for Decision Trees:} 
Nearly all optimal \apxn\ \alg s for minimizing cover time are known in the noiseless setting \cite{kosaraju1999optimal, adler2008approximating,arkin1998decision}. 
When the outcome is stochastic, \cite{GolovinK11} proposed a framework for analyzing algorithms 
under the \emph{adaptive submodularity} assumption.
However, their  \assu\ does not hold for many natural setups including ASHT. \cite{chen2015sequential} considered 
a variant using ideas from the submodular max-coverage problem,
and provided a constant factor \apxn\ to the 
problem. 
Other works based on submodular function covering include \cite{navidi2020adaptive,guillory2010interactive,krause2008robust}.
\cite{jia2019optimal} provided \apxn\ ratios under the
\constr\ that the \alg\ may only terminate when it is completely confident about the outcome.

\end{enumerate}
\section{Model}
\label{sec:model}
We begin by formally introducing the problem.
Let $H$ be a finite set of {\em \hypos}, among which exactly one is the (unknown) {\em true} hypothesis that we seek to identify. 
In this paper, we study the {\em Bayesian} setting, wherein this true \hypo\ is drawn from a known prior \distr\ $\pi$ over $H$.

Let $A$ be the set of available {\em{actions}}. 
Selecting an action yields a random {\em outcome} drawn independently from a distribution within a given family $\mathcal{D} = \{D_\theta\}_{\theta \in \Theta}$ of distributions parameterized by $\Theta \subseteq \mathbb{R}$. 
We are given a function $\mu: H \times A \to \Theta$ 
such that 
if $h \in H$ is the underlying \hypo\ and we select action $a\in A$, then the random outcome is drawn independently from distribution $D_{\mu(h,a)}$.\footnote{In this noisy setting, an action can (and often should) be played for multiple times.}

An {\em{instance}} of the active sequential hypothesis testing problem is then fully specified by a tuple: $(H,A,\pi, \mu, \cal{D})$.
The goal is to sequentially select actions to identify the true \hypo\ with ``sufficiently high'' confidence, at minimal expected cost, where cost is measured as the number of actions, 
and the expectation is with respect to the Bayesian prior and the random outcomes. 
The notion of \emph{sufficiently high} confidence is encoded by a parameter $\delta \in (0,1)$, and requires that under any true $h \in H$, the probability of erroneously identifying a different hypothesis is at most $\delta$. An algorithm which satisfies this is said to have achieved {\bf $\delta$-PAC-error}.

We focus on two important families of $D_\theta$'s: the Bernoulli \distr\ $\mathrm{Ber}(\theta)$ and the Gaussian \distr\ $N(\theta, \sigma^2)$ where $\sigma^2$ is a known constant (with respect to $\theta$).
By re-scaling, without loss of generality we may assume $\sigma^2 =1$.
We require two additional assumptions to state our guarantees. 
The first assumption
{\rev is needed for relating the sub-gaussian norm to the KL-divergence, in the partially adaptive version. It}
ensures that the parameterization $\Theta$ is a meaningful one, in the sense that if $\theta,\theta' \in \Theta$ are far apart, then the distributions $D_\theta$ and $D_{\theta'}$ are also ``far'' apart (as measured by KL divergence).
Assumption~\ref{assumption:KL} is \satd\ for 
$\mathrm{Ber}(\theta)$ when $ \theta\in [\theta_{\min}, \theta_{\max}]$ for some constants $0<\theta_{\min}<\theta_{\max}<1$, and for $N(\theta,1)$ where $\theta$ lies in some bounded set in $\real$.
\begin{assumption}\label{assumption:KL}
There exist
$C_1, C_2>0$ such that for any $ \theta,\theta'\in\Theta$, we have 
$C_1\cdot \KL(D_\theta, D_{\theta'}) \leq (\theta-\theta')^2 \leq C_2\cdot \KL(D_\theta, D_{\theta'}),$
where $\KL(\cdot,\cdot)$ is the Kullback-Leibler divergence.
\end{assumption}
%



Our second major assumption simply ensures the existence of a valid algorithm, by assuming that every hypothesis is distinguishable via some action.
\begin{assumption}[Validity]\label{assu1}
For all $g,h\in H$ where $g\neq h$, there exists $a\in A$ with $\mu(g,a) \ne \mu(h,a)$.
\end{assumption}
In \parti, we do not preclude the possibility that for a given action $a$, there exist (potentially many) pairs of hypotheses $g,h$ such that $\mu(g,a) = \mu(h,a)$. In fact, eliminating such possibilities would effectively wash out any meaningful combinatorial dimension to this problem. On the other hand, any approximation guarantee should be parameterized by some notion of separation (when it exists). 
For any two hypotheses $g,h \in H$ and any action $a \in A$, define $d(g,h;a) = \KL\left(D_{\mu(g,a)},D_{\mu(h,a)}\right).$

\begin{definition}[$s$-\sep ed instance]
An ASHT \ins\ is said to be {\bf{$s$-\sep ed}}, if for any $a\in A$ and $g,h\in H$, $d(g,h;a)$ is either $0$ or at least $s$.
\end{definition}
%
{\rev Note that in real-world applications, the parameter $s$ could be arbitrarily small, and we introduce the notion of s-separability for the sake of proofs. We will show in Section~\ref{sec:experiments} how our algorithms can easily be modified to handle small $s$ values. In this work,}
we will study two classes of algorithms that differ in the extent to which adaptivity is allowed. 
%
%
%
%
%
%



\begin{definition}
A {\bf{fully \adap}} algorithm is a decision tree,\footnote{By approximating $D_\theta$'s with discrete \distr s, we may assume each node has a finite number of children.} each of whose interior nodes is labeled with some action, and each of whose edges corresponds to an outcome. 
Each leaf is labeled with a \hypo, corresponding to the output when the algorithm terminates.
\end{definition}

\begin{definition}
A {\bf{partially \adap}} algorithm $(\sigma,T)$ is specified by a fixed sequence of actions $\sigma = (\sigma_1,\sigma_2,...)$, with each $\sigma_i\in A$, and a {\it stopping} time $T$. In particular, 
under any true hypothesis $h^* \in H$ and for any $t\geq 1$,
the event
$\{T = t\}$ is independent of the outcomes of actions $\sigma_{t+1}, \sigma_{t+2},\hdots$ (At the stopping time, the choice of which hypothesis to identify is trivial in our Bayesian setting---it is simply the one with the highest ``posterior'' probability).
\end{definition}

Note that a partially \adap\ algorithm can be viewed as a special type of fully \adap\ algorithm: it is a decision tree with the additional restriction that the actions at each depth are the same. 
Therefore, a fully \adap\ \alg\ may be far cheaper than any partially \adap\ \alg. However, there are many \sce s (e.g., content recommendation and web search~\cite{azar2009multiple}) where it is desirable to fix the \xulie\ of actions in advance. Furthermore, in many problems the theoretical analysis of partially \adap\ \alg s turns out to be challenging (e.g., \cite{kamath2019anaconda,chawla2019learning}).

%
%
%
Thus, given an ASHT instance, 
there are two problems that we will consider, depending on whether the algorithms are partially or fully \adap. 
In both cases, our goal is to design fast approximation algorithms---ones that are computable in polynomial\footnote{Throughout this paper, \emph{polynomial time} refers to polynomial in $\left(|H|,|A|,s^{-1},\delta^{-1}\right)$}  time and that are guaranteed to incur expected costs at most within a multiplicative factor of the optimum. In the coming sections, we will describe our algorithms and approximation guarantees. Before moving on to this, it is worth noting that our problem setup is extremely generic and captures a number of well-known problems related to decision-making for learning including best-arm identification for multi-armed bandits \cite{bubeck2009pure,even2002pac,mannor2004sample}, group testing \cite{du2000combinatorial}, and causal inference \cite{gan2020causal}, just to name a few.




\section{Our Approximation Guarantees}
\label{sec:guarantees}
We are now prepared to state our approximation guarantees (the corresponding greedy algorithms will be defined in the next two sections).
Let $\OPT^{\PA}_\delta$ (resp. $\OPT^{\FA}_\delta$) denote the minimal expected cost of any partially \adap\ (resp. fully \adap) \alg\ that achieves $\delta$-PAC-error. 

\begin{restatable}{theorem}{thmPA}
\label{thm:pa}
Given an $s$-\sep ed \ins\ and any $\delta\in (0,1/2)$, 
there exists a polynomial-time partially \adap\ algorithm that achieves $\delta$-PAC-error with expected cost $O\left(s^{-1}\left(1+\log_{1/\delta}|H|\right)\log \left(s^{-1}|H| \log \delta^{-1}\right)\right) \OPT^\PA_\delta.$
\end{restatable}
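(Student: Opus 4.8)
The plan is to reduce partially adaptive ASHT to an instance of \emph{submodular function ranking} (SFR) in the sense of \cite{azar2011ranking}, in which the noise has been compiled into Kullback--Leibler thresholds. Fix a threshold $\tau = \Theta(\log(|H|/\delta))$, where the hidden constant is chosen large enough to beat the sub-gaussian constant of the log-likelihood-ratio increments (this is where Assumption~\ref{assumption:KL} enters). Take the ground set to be single plays of actions, i.e. $A\times[N]$ with $N=\Theta(|H|\tau/s)$ copies of each action (by $s$-separation, that many copies of the best action for a pair suffice to saturate it). For each $h\in H$ define the valuation
\[
 f_h(S) \;=\; \frac{1}{|H|-1}\sum_{g\neq h}\min\!\Big\{1,\ \tfrac1\tau\,\textstyle\sum_{a\in S} d(h,g;a)\Big\},
\]
with weight $\pi(h)$; each summand is concave in a modular function, so $f_h$ is monotone submodular, and $f_h(S)=1$ exactly when $S$ accumulates at least $\tau$ units of KL for every competitor of $h$. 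Writing $\mathrm{cov}_h(\sigma)$ for the first prefix length of a sequence $\sigma$ at which $f_h$ saturates, the SFR objective is $\sum_h \pi(h)\,\mathrm{cov}_h(\sigma)$; let $\mathrm{SFR}(\tau)$ denote its optimum.

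The algorithm is the greedy for this SFR instance (at each step appending the single play maximizing the $\pi$-weighted residual coverage), paired with the stopping rule ``halt once the posterior-maximizing hypothesis $\hat h$ has log-posterior-odds at least $\log(|H|/\delta)$ over every competitor,'' together with a standard device handling the probability-$\le\delta$ event that concentration fails (so that its contribution is only lower-order). Since every $d(\cdot,\cdot;\cdot)=O(1)$ (Assumption~\ref{assumption:KL} plus boundedness of $\Theta$) and, after rounding the $d$'s down to multiples of $s$, a single play moves any $f_h$ by $0$ or at least $\epsilon_{\min}=\Omega(s/(|H|\tau))$, the SFR analysis of \cite{azar2011ranking} produces a sequence $\sigma^{\mathrm g}$ with $\sum_h\pi(h)\,\mathrm{cov}_h(\sigma^{\mathrm g}) \le O(\log(1/\epsilon_{\min}))\cdot\mathrm{SFR}(\tau) = O(\log(s^{-1}|H|\log\delta^{-1}))\cdot\mathrm{SFR}(\tau)$. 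Both correctness and the cost bound then rest on a single concentration statement: under true $h$, the walk $t\mapsto\mathrm{LLR}_{h:g}(t)$ has drift $\sum_{i\le t}d(h,g;\sigma^{\mathrm g}_i)$ and increments whose sub-gaussian norm is $O(1)$ times that same accumulated KL --- exactly what Assumption~\ref{assumption:KL} provides. Hence by time $\mathrm{cov}_h(\sigma^{\mathrm g})$ the stopping condition holds with probability $\ge 1-\delta$, so $\mathbb{E}_h[T]\le(1+o(1))\,\mathrm{cov}_h(\sigma^{\mathrm g})$; and a union bound over the $\le|H|$ competitors (via Ville's inequality applied to the likelihood-ratio martingale) shows $\hat h=h$ at the stopping time with probability $\ge1-\delta$. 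This yields $\delta$-PAC-error with expected cost $O(\log(s^{-1}|H|\log\delta^{-1}))\cdot\mathrm{SFR}(\tau)$.

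It remains to prove $\mathrm{SFR}(\tau) \le O\!\left(s^{-1}(1+\log_{1/\delta}|H|)\right)\cdot\OPT^{\PA}_{\delta}$, which I expect to be the crux. I would first establish an information-theoretic lower bound: for any $\delta$-PAC partially adaptive $(\sigma,T)$ and any $g\neq h$, a change of measure from $\mathbb{P}_h$ to $\mathbb{P}_g$ applied to the event ``output $=g$'', combined with Wald's identity, gives $\mathbb{E}_h\big[\sum_{i\le T}d(h,g;\sigma_i)\big]\ge \mathrm{kl}(\delta,1-\delta)=\Omega(\log(1/\delta))$ (for $\delta$ bounded away from $1/2$). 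The delicate part is converting this into a bound on the \emph{fixed} sequence $\sigma$, since $T$ is adaptive: I would use an anti-concentration bound showing that if the prefix of $\sigma$ has not yet accumulated $\Theta(\log(1/\delta))$ KL for some pair $(g,h)$, then the error under $h$ is at least $\delta^{\Theta(1)}$, and choose the constant so that this forces $T\ge\Omega\big(\mathrm{cov}^{\,\Theta(\log(1/\delta))}_h(\sigma)\big)$ with constant probability; for $\delta$ near $1/2$ one instead compares with the noiseless weighted cover time, and this is where the extra $s^{-1}$ is unavoidable, since turning ``distinguish each pair at least once'' into ``accumulate one unit of KL'' costs a factor $s^{-1}$ on an $s$-separated instance. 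Averaging over $h$ with weights $\pi(h)$ gives $\mathrm{SFR}(\Theta(\log(1/\delta)))\le O(s^{-1})\cdot\OPT^{\PA}_{\delta}$. Finally, a rescaling lemma --- replace every action of the optimal sequence for threshold $\tau'$ by $\lceil\tau/\tau'\rceil$ copies --- gives $\mathrm{SFR}(\tau)\le\lceil\tau/\tau'\rceil\cdot\mathrm{SFR}(\tau')$, and with $\tau=\Theta(\log(|H|/\delta))$ and $\tau'=\Theta(\log(1/\delta))$ the blow-up is exactly $O(1+\log_{1/\delta}|H|)$. Chaining the three estimates with the guarantee of the previous paragraph yields the theorem. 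The main obstacles I anticipate are (i) the adaptive-stopping-time step just described, which is where the argument genuinely departs from the noiseless SFR/ODT literature, and (ii) verifying that Assumption~\ref{assumption:KL} really delivers the sub-gaussian-norm-versus-KL control used throughout, in both the Bernoulli and Gaussian families.
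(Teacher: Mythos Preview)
Your high-level plan is essentially the paper's: reduce to SFR with KL-threshold submodular surrogates $f_h$, run the \cite{azar2011ranking}/\cite{im2016minimum} greedy, control the error via sub-gaussian concentration of the log-likelihood ratios, and lower-bound $\OPT^{\PA}_\delta$ by the SFR cover times using Wald's identity. Your ``rescaling lemma'' $\mathrm{SFR}(\tau)\le\lceil\tau/\tau'\rceil\,\mathrm{SFR}(\tau')$ is exactly the paper's \emph{boosting} step (it repeats every action in the greedy sequence $\alpha=1+\log_{1/\delta}|H|$ times while keeping $B=\tfrac12\log\delta^{-1}$), so the two decompositions are interchangeable and yield the same $(1+\log_{1/\delta}|H|)$ factor.

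The genuine gap is in your ``crux'' step. You obtain $\mathbb{E}_h\big[\sum_{i\le T}d(h,g;\sigma_i)\big]\ge\Omega(\log(1/\delta))$ via change of measure plus Wald---this is correct and is the paper's Lemma~\ref{lem:expected_LLR}. But you then propose to upgrade this to ``$T\ge\Omega(\mathrm{cov}_h(\sigma))$ with constant probability'' via an anti-concentration bound on fixed prefixes. That inference does not go through: a Neyman--Pearson/Pinsker bound at a \emph{deterministic} prefix says nothing about an \emph{adaptive} stopping time $T$, which can place appreciable mass both well before and well after $\mathrm{cov}_h$ while still satisfying the Wald constraint. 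In particular, your claimed conclusion would yield $\mathbb{E}_h[T]\ge\Omega(1)\cdot\mathrm{cov}_h(\sigma)$ and hence remove the $s^{-1}$ from the bound for all $\delta$ bounded away from $1/2$, which is stronger than what the paper proves; your attribution of the $s^{-1}$ solely to ``$\delta$ near $1/2$'' is not right.

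The paper handles this step differently and concretely. Fixing $h$ and its last-separated competitor $g$, it sets $d_i=d(h,g;\sigma^*_i)$, $d^t=\sum_{i\le t}d_i$, and considers the linear program $\min_z\sum_i i\,z_i$ subject to $\sum_i d^i z_i\ge d^{\,\mathrm{cov}_h-1}$, $\sum_i z_i=1$, $z\ge0$. Wald plus Lemma~\ref{lem:expected_LLR} show $z_i=\mathbb{P}_h[T^*=i]$ is feasible with objective $\mathbb{E}_h[T^*]$; then a two-point-support analysis of the LP yields $\mathbb{E}_h[T^*]\ge\Omega(s)\cdot\mathrm{cov}_h(\sigma^*)$, the factor $s$ entering through $\min_i d_i\ge s$ on an $s$-separated instance. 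This LP argument is the missing ingredient in your sketch; once you replace the anti-concentration step with it, the rest of your chain (greedy SFR ratio, rescaling, concentration) assembles into the stated bound exactly as you outlined.
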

To help parse this result, if $\delta$ is on the order of $|H|^{-c}$ for some constant $c$, then the \apxn\ factor becomes $s^{-1}(\log s^{-1}+\log |H| + c\log\log |H|)$.

\begin{restatable}{theorem}{thmODTpac}
\label{thm:odt_pac}
Given an $s$-\sep ed instance  and any $\delta\in (0,1/2)$, there exists a polynomial-time
 fully \adap\ algorithm that achieves
$\delta$-PAC-error with expected cost $O\left(s^{-1} \log \left(|H|\delta^{-1}\right) \log |H|\right)  \OPT^\FA_\delta.$ 
\end{restatable}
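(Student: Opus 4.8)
The plan is to reduce fully-adaptive ASHT to the (noiseless) \emph{optimal decision tree} (ODT) problem via a ``boosting'' step, and then invoke the classical greedy analysis for ODT. To the given ASHT instance we associate an ODT instance on the same hypothesis set $H$ with the same prior $\pi$: action $a$ induces the test $t_a$ which, on hypothesis $h$, returns the symbol $\mu(h,a)\in\Theta$; finding a decision tree of such tests that isolates the realized hypothesis with minimum $\pi$-expected number of tests is exactly ODT. Let $\mathrm{OPT}_{\mathrm{ODT}}$ be its optimum. Consider the greedy decision-tree algorithm (generalized binary search): at each node pick the test maximizing the $\pi$-weighted number of newly separated survivor pairs. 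By Assumption~\ref{assu1} (Validity), whenever at least two survivors remain some test has positive weight, and any positive-weight test strictly decreases the survivor count on \emph{every} branch; hence every root-to-leaf path of the greedy tree has length at most $|H|-1$. By the standard potential-function analysis \cite{kosaraju1999optimal, chakaravarthy2009approximating} (mildly extended to handle non-uniform priors and tests with more than two outcomes), the greedy tree has expected cost $O(\log|H|)\cdot\mathrm{OPT}_{\mathrm{ODT}}$.

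\textbf{Boosting the greedy tree into an ASHT policy.} For each action $a$, define a \emph{meta-action} that plays $a$ exactly $m:=\Theta\!\left(s^{-1}\log(|H|\delta^{-1})\right)$ times and decodes the empirical outcome to the nearest parameter among $\{\mu(h,a):h\in H\}$, thereby simulating the noiseless test $t_a$. Using $s$-separation together with Assumption~\ref{assumption:KL} (which makes the squared parameter gap comparable to the KL gap, hence to a Hoeffding/sub-gaussian exponent), a Chernoff bound shows the meta-action returns the outcome symbol of the true hypothesis with probability at least $1-\delta/(2|H|)$. Replacing each test in the greedy ODT tree by the corresponding meta-action yields a fully-adaptive ASHT policy; since at most $|H|-1$ meta-actions are executed on the realized path, a union bound gives total error at most $\delta$, i.e.\ $\delta$-PAC-error. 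Its expected cost is $m$ times the expected number of tests of the greedy tree, i.e.\ $O\!\left(s^{-1}\log(|H|\delta^{-1})\right)\cdot O(\log|H|)\cdot\mathrm{OPT}_{\mathrm{ODT}}$.

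\textbf{Relating $\mathrm{OPT}_{\mathrm{ODT}}$ to $\OPT^{\FA}_\delta$.} It remains to show $\mathrm{OPT}_{\mathrm{ODT}}=O\!\left(\OPT^{\FA}_\delta\right)$, i.e.\ that any $\delta$-PAC fully-adaptive policy $\Pi^{\ast}$ induces a noiseless decision tree of at most a constant times its expected cost. Fix $h$ and consider the behavior of $\Pi^{\ast}$ conditioned on the played outcomes being ``consistent with $h$'': since $\delta<1/2$, this leads $\Pi^{\ast}$ to output $h$, and the set $S_h$ of distinct actions it plays along this canonical execution can confuse $h$ with only a small-$\pi$-mass set of hypotheses --- otherwise $\Pi^{\ast}$, whose canonical execution is identical on those confused hypotheses, would err with probability exceeding $\delta$. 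Stitching these canonical paths together and appending an $O(1)$-depth clean-up yields an ODT tree of expected cost $O\!\left(\mathbb{E}_{h\sim\pi}|S_h|\right)+O(1)=O\!\left(\OPT^{\FA}_\delta\right)$. Combining the three parts, the boosted greedy tree costs $O\!\left(s^{-1}\log(|H|\delta^{-1})\,\log|H|\right)\OPT^{\FA}_\delta$, which is \Cref{thm:odt_pac}.

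\textbf{Main obstacle.} The delicate step is the last one: because the actions $\Pi^{\ast}$ queries depend on random outcomes, ``derandomizing'' it into a clean decision tree requires care, and one must argue \emph{quantitatively} --- via an averaging/charging argument over $\pi$ and using only $\delta<1/2$ --- that the confusion classes left by the canonical paths are light; this is what prevents a naive conversion from blowing up the cost. A secondary point requiring attention is calibrating the constants in the concentration bound of the boosting step so that the per-meta-action error is small enough for the union bound over the (Validity-bounded) path length to yield exactly $\delta$.
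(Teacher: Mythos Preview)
Your high-level strategy --- reduce to noiseless ODT, run the classical greedy decision-tree algorithm, and inflate each test into a boosted meta-action --- is exactly the paper's approach, and your first two parts (the $O(\log|H|)$ greedy bound, the concentration step with $m=\Theta(s^{-1}\log(|H|\delta^{-1}))$ repetitions, and the union bound over at most $|H|-1$ meta-actions on any root-to-leaf path) match the paper's Section~\ref{sec:fa} essentially line for line. You also correctly single out the comparison $\mathrm{OPT}_{\mathrm{ODT}}=O(\OPT^{\FA}_\delta)$ as the delicate step; this is precisely the paper's Proposition~\ref{prop:convert}.

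The gap is in how you argue that comparison. The phrase ``canonical execution conditioned on outcomes consistent with $h$'' is not well-defined in the noisy model: every outcome sequence has positive density under every hypothesis, so there is no canonical realization to condition on. Even if you fix one (say, feed $\Pi^\ast$ the mean outcome at each step), the length of that particular run bears no a priori relation to $\mathbb{E}_h[T]$, so the step $\mathbb{E}_{h\sim\pi}|S_h|=O(\OPT^{\FA}_\delta)$ is unjustified. The ``stitching'' of canonical paths into a single tree is also not explained --- two hypotheses $g,h$ with $\mu(g,a_1)=\mu(h,a_1)$ must share a child after the root test, so their paths cannot be chosen independently --- and the ``$O(1)$-depth clean-up'' is asserted without proof; a priori the confusion class at a leaf could have size $\Theta(|H|)$ and require $\Theta(|H|)$ further tests.

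The paper replaces your entire stitching/clean-up program with one structural observation (Lemma~\ref{lem:odt_delta}): in the \emph{noiseless} ODT world, any $\delta$-PAC-valid (possibly incomplete) decision tree with $\delta<1/2$ is automatically $0$-valid, because each hypothesis deterministically reaches a single leaf, and a leaf with two or more survivors forces error at least $1/2$ on one of them. Thus no clean-up is ever needed --- the confusion class is not merely ``small $\pi$-mass'' but empty. The paper then converts $\Pi^\ast$ into such a tree by replacing each action $a$ with the corresponding noiseless test $T_a$; granted that conversion is stated tersely, but Lemma~\ref{lem:odt_delta} is the missing ingredient that lets it go through without the ad hoc averaging/charging you anticipate. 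Your sketch is heading toward the same inequality, but without this lemma you are left controlling residual confusion mass by hand, which is where the argument currently breaks.
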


A few observations might clarify the significance of these approximation guarantees: 
\begin{enumerate}
\item Dependence on action space: Both guarantees are independent of the number of actions $|A|$. This is extremely important since, as described in the Introduction, there exist many applications where the the action space is massive. Moreover, since an approximation factor of $O(|A|)$ is always trivially achievable (by cycling through the actions), instances where $|A|$ is large are arguably the most interesting problems.
\item Dependence on $|H|, \delta$ and $s$: For fixed $s$ and $\delta$, these are the first polylog-\apxn s for 
both partially and fully adaptive 
versions. 
Further, for the partially adaptive version,
the dependence of the
\apxn\ factor on $\delta$ is
$O(\log\log \delta^{-1})$ when $\delta^{-1}$ is polynomial in $|H|$, improving upon the naive dependence $O(\log \delta^{-1})$.
This is crucial since $
\delta$ is often needed to be tiny in practice.
\item Greedy runtime: While we have only stated in our formal results that our approximation algorithms can be computed in $\poly(|A|,|H|)$ time, the actual time is more attractive: $O(|A||H|)$ for selecting each action.
In contrast, the heuristic that we will compare against in the experiments requires solving multiple $\Omega(|A||H|^2)$-sized linear programs.
\end{enumerate}

Despite their similar appearances, Theorems \ref{thm:pa} and \ref{thm:odt_pac} rely on fundamentally different algorithmic techniques and thus require different analyses. In Section \ref{sec:partial}, we propose an \alg\ inspired by the \emph{submodular function ranking} problem, which greedily chooses a sequence of actions according to a carefully chosen ``greedy score.'' We then sketch the proof of 
Theorem~\ref{thm:pa}. 
In Section \ref{sec:fa},
we introduce our fully adaptive algorithm and sketch the proof of Theorem~\ref{thm:odt_pac}.
%

Finally, by proving a structural lemma (in Appendix \ref{append:total_error}), we extend the above results to a special case of the {\bf total-error} version (i.e., averaging the error over the prior $\pi$) where the prior is uniform.
With \emph{$\delta$-total-error} formally defined in Appendix~\ref{append:total_error}:

\begin{restatable}{theorem}{thmODT}
\label{thm:odt}
Given an $s$-\sep ed instance with uniform prior $\pi$ and any $\delta \in (0,1/4)$, for
both the partially and fully adaptive
versions, there exist  polynomial-time  
$\delta$-total-error algorithms with expected cost $O\left(s^{-1}\left(1+|H|\delta^2\right)\log \left(|H|\delta^{-1}\right)\log |H| \right)$ times the optimum.
\end{restatable}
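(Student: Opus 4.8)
The plan is to reduce the $\delta$-total-error problem with uniform prior to the $\delta$-PAC problem and then invoke Theorems~\ref{thm:pa} and~\ref{thm:odt_pac} as black boxes. Since any $\delta$-PAC policy is in particular a $\delta$-total-error policy, running the partially adaptive greedy of Section~\ref{sec:partial} (resp.\ the fully adaptive greedy of Section~\ref{sec:fa}) with error parameter $\delta$ already yields a valid $\delta$-total-error algorithm whose expected cost, by Theorem~\ref{thm:pa} (resp.\ Theorem~\ref{thm:odt_pac}), is at most $O\big(s^{-1}\log(|H|\delta^{-1})\log|H|\big)$ times $\OPT^{\PA}_\delta$ (resp.\ $\OPT^{\FA}_\delta$). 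Thus it suffices to prove the structural comparison
\[
\OPT^{\PA}_\delta \;\le\; O\big(1+|H|\delta^2\big)\cdot \OPT^{\PA,\mathrm{tot}}_\delta ,
\qquad
\OPT^{\FA}_\delta \;\le\; O\big(1+|H|\delta^2\big)\cdot \OPT^{\FA,\mathrm{tot}}_\delta ,
\]
where $\OPT^{\PA,\mathrm{tot}}_\delta$ and $\OPT^{\FA,\mathrm{tot}}_\delta$ denote the optimal expected costs of $\delta$-total-error policies of the respective types; this comparison is the structural lemma of Appendix~\ref{append:total_error}. Note that the greedy algorithm itself need not change: the extra $O(1+|H|\delta^2)$ factor is entirely an artifact of the weaker benchmark (the total-error optimum is allowed to ``give up'' on a small fraction of hypotheses, which the PAC optimum is not).

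To prove the comparison I would treat the fully adaptive case in detail (the partially adaptive case is analogous, replacing decision trees by action sequences and Theorem~\ref{thm:odt_pac} by Theorem~\ref{thm:pa}). Fix an optimal $\delta$-total-error decision tree $\mathcal{A}$, so $\mathrm{cost}(\mathcal{A})=\OPT^{\FA,\mathrm{tot}}_\delta$. Since the prior is uniform, $\sum_{h\in H}\Pr_{\mathcal{A}}[\,\mathrm{err}\mid h\,]\le |H|\delta$, so by Markov's inequality the set $E$ of ``hard'' hypotheses, on which $\mathcal{A}$ errs with probability exceeding a fixed constant, has $|E|=O(\delta|H|)$, and on $H\setminus E$ the policy $\mathcal{A}$ identifies the true hypothesis with constant confidence. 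The strategy is then to build a $\delta$-PAC policy $\mathcal{B}$ on all of $H$ with $\mathrm{cost}(\mathcal{B})=O(1+|H|\delta^2)\cdot\mathrm{cost}(\mathcal{A})$ by: (i) running a lightly amplified version of $\mathcal{A}$, obtaining a candidate $\hat h$; (ii) cheaply verifying $\hat h$; and (iii) when verification fails --- which, crucially, should occur only when the true hypothesis lies in $E$, an event of probability $|E|/|H|=O(\delta)$ --- falling back to a generic linear scan that, using $s$-separability (test every pair), identifies the true hypothesis among the $|E|=O(\delta|H|)$ members of $E$ at cost $O\big(s^{-1}|E|\log(|H|\delta^{-1})\big)=O\big(s^{-1}\delta|H|\log(|H|\delta^{-1})\big)$. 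The expected fallback cost is then $O(\delta)\cdot O\big(s^{-1}\delta|H|\log(|H|\delta^{-1})\big)=O\big(s^{-1}\delta^2|H|\log(|H|\delta^{-1})\big)$, which, since $\mathrm{cost}(\mathcal{A})=\Omega\big(s^{-1}\log(|H|\delta^{-1})\big)$ for any valid policy, is $O(\delta^2|H|)\cdot\mathrm{cost}(\mathcal{A})$. Combining this with the overhead of steps (i)--(ii) gives the claimed bound.

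The main obstacle is step (iii), specifically the amplification/verification scheme that makes the expensive fallback fire only on the $O(\delta)$-probability event $\{$true hypothesis $\in E\}$ rather than on every run: a naive ``repeat $\mathcal{A}$ and majority vote'' boost of the constant-confidence guarantee on $H\setminus E$ up to confidence $\delta$ costs a $\Theta(\log\delta^{-1})$ factor, which is too lossy when $\log\delta^{-1}\gg 1+|H|\delta^2$. I expect the fix to be a hierarchical version of the argument: stratify $H$ into $O(\log\delta^{-1})$ tiers $E^{(j)}=\{h:\Pr_{\mathcal{A}}[\mathrm{err}\mid h]>c^j\delta\}$ with $|E^{(j)}|=O(|H|/c^j)$, amplify tier $j$ with only $O(j)$ repetitions, and verify against tiers in increasing order of expense so that the per-run cost telescopes and each tier's (larger) amplification cost is charged only against the (smaller) probability of landing in it. The uniform-prior assumption is used essentially: it is what converts ``average error $\le\delta$'' into ``at most $O(\delta|H|)$ hypotheses are hard,'' and it is also what makes the probability of the fallback event equal $|E|/|H|$. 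Finally, once the comparison is in hand, Theorem~\ref{thm:odt} follows by applying Theorem~\ref{thm:pa} and Theorem~\ref{thm:odt_pac} with error parameter $\delta$, absorbing minor discrepancies between the logarithmic factors into the $O(\cdot)$.
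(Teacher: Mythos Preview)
Your approach is genuinely different from the paper's, and the gap you yourself flag (the amplification step) is real and not resolved by the hierarchical sketch.

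\textbf{What the paper actually does.} The paper does \emph{not} invoke Theorems~\ref{thm:pa} and~\ref{thm:odt_pac} as black boxes and then compare $\OPT^{\PA}_\delta$ (PAC) to $\OPT^{\PA,\mathrm{tot}}_\delta$. Instead it re-analyzes the same greedy algorithm directly against the total-error benchmark, routing through the \emph{noiseless} ODT intermediate. Concretely, the chain is
\[
\mathrm{GRE}\;\le\;O(\log|H|)\cdot ODT^*_0\;\le\;O\big((1+|H|\delta^2)\log|H|\big)\cdot ODT^*_\delta\;\le\;O\big((1+|H|\delta^2)s^{-1}\log(|H|/\delta)\log|H|\big)\cdot \OPT^{\FA,\mathrm{tot}}_\delta,
\]
where $ODT^*_0$ and $ODT^*_\delta$ are the optima of the associated (noiseless) ODT instance with $0$-error and $\delta$-total-error respectively. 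The $(1+|H|\delta^2)$ factor thus arises from the comparison $ODT^*_0$ vs.\ $ODT^*_\delta$, \emph{not} from $\OPT^{\mathrm{PAC}}_\delta$ vs.\ $\OPT^{\mathrm{tot}}_\delta$. The proof of that comparison is short and combinatorial: in any $\delta$-total-valid (noiseless) decision tree, an ``incomplete'' leaf $\ell$ with alive set $A_\ell$ contributes at least $\tfrac12\pi(A_\ell)$ to the total error (since $|A_\ell|\ge 2$ and the prior is uniform), so $\sum_{\ell\text{ inc.}}\pi(A_\ell)\le 2\delta$. Completing each such leaf costs at most $|A_\ell|$ tests, hence extra expected cost $\propto\sum_\ell \pi(A_\ell)\,|A_\ell|=|H|\sum_\ell \pi(A_\ell)^2\le |H|\big(\sum_\ell\pi(A_\ell)\big)^2\le 4|H|\delta^2$, which is where the $\delta^2$ comes from. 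No amplification or verification is needed, because at the noiseless level there is no randomness to boost.

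\textbf{Why your route is problematic.} Your structural lemma $\OPT^{\mathrm{PAC}}_\delta\le O(1+|H|\delta^2)\cdot\OPT^{\mathrm{tot}}_\delta$ is a strictly stronger statement than anything the paper proves (the paper's chain only yields $\OPT^{\mathrm{PAC}}_\delta\le O\big((1+|H|\delta^2)s^{-1}\log(|H|/\delta)\log|H|\big)\cdot\OPT^{\mathrm{tot}}_\delta$), and your sketch does not establish it. The specific failure is exactly where you put it: to make the fallback fire only on $\{h^*\in E\}$ you need the verification on $H\setminus E$ to succeed with probability $1-\delta$, but $\mathcal{A}$ only gives constant confidence there, and boosting that to $1-\delta$ costs $\Theta(\log\delta^{-1})$ repetitions of $\mathcal{A}$, destroying the $O(1+|H|\delta^2)$ bound. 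Your tiered scheme does not obviously telescope to $O(1)$ overhead either: tier $j$ needs $\Theta(j)$ repetitions and occurs with probability $\Theta(c^{-j})$, so the expected amplification cost is $\sum_j j\,c^{-j}\cdot\mathrm{cost}(\mathcal{A})=O(1)\cdot\mathrm{cost}(\mathcal{A})$ only if the tiers are disjoint events; but they are nested, and the per-tier verification itself is not free. Moreover, restricting the fallback scan to $E$ requires that on $H\setminus E$ the verification \emph{never} fails (else the fallback outputs a member of $E$ while $h^*\notin E$), which again forces $\delta$-level amplification.

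\textbf{Suggestion.} The clean fix is to abandon the noisy-level comparison and follow the paper's route: push the total-error slack into the noiseless ODT instance, where ``giving up'' is localized at incomplete leaves and the $\sum\pi(A_\ell)^2\le(\sum\pi(A_\ell))^2$ trick yields the $\delta^2$ immediately. This sidesteps amplification entirely.
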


\section{Partially Adaptive \Alg}\label{sec:partial}
This section describes our algorithm and guarantee for the partially adaptive problem.
We first 
review necessary background from a related problem, and then state our algorithm (Algorithm \ref{alg:rnb}). Finally,
we sketch the proof of the following more general version of Theorem~\ref{thm:pa} (complete proof in Appendix~\ref{sec:pf_pa}):
\begin{proposition}\label{prop:pa}
Let $\delta\in (0, \frac 14]$ and consider finding the optimal $\delta$-PAC error \alg.
Given any boosting intensity $\alpha\geq 1$ and  coverage saturation threshold $B\in (0, \frac 12 \log\delta^{-1}]$, $\mathrm{RnB}(B,\alpha)$ (as defined in Algorithm \ref{alg:rnb}) produces a partially \adap\ \alg\ with error $|H|\exp\left(-\Omg\left(\alpha B\right)\right)$ and expected cost $O\left(\alpha s^{-1} \log \left( |H|B s^{-1} \right)\right)\mathrm{OPT}^{\PA}_\delta$.
\end{proposition}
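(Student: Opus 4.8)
The plan is to reduce the partially adaptive ASHT problem to a noisy variant of submodular function ranking (SFR), and then analyze the greedy algorithm $\mathrm{RnB}(B,\alpha)$ through that lens. The key conceptual device is to think of the state of knowledge after a sequence of action-plays as a posterior over $H$, and to measure progress by how much "total log-likelihood mass'' has been separated away from the true hypothesis. Concretely, for each pair of hypotheses $g,h$ we will associate a monotone ``coverage'' quantity that grows as we repeatedly play actions $a$ with $d(g,h;a)>0$: by Assumption~\ref{assumption:KL} and $s$-separability, each such play contributes at least $\Omega(s)$ to a running sum, and once that sum reaches the saturation threshold $B$, the pair $(g,h)$ is ``covered'' in the sense that the posterior can distinguish them with confidence $1-\exp(-\Omega(B))$. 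The ``boosting intensity'' $\alpha$ simply scales how far past $B$ we drive each pair; driving to $\alpha B$ yields per-pair error $\exp(-\Omega(\alpha B))$, and a union bound over the at most $|H|$ competing hypotheses gives the claimed total error $|H|\exp(-\Omega(\alpha B))$. Choosing $B = \Theta(\log\delta^{-1})$ and $\alpha$ appropriately then recovers $\delta$-PAC-error, which is how Theorem~\ref{thm:pa} will follow as a corollary.

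For the cost bound, the main work is to set up the SFR instance correctly and invoke (a noise-robust adaptation of) the known $O(\log(1/\epsilon_{\min}))$-type guarantee for greedy SFR, where here the relevant ``granularity'' $\epsilon_{\min}$ is governed by $s$ and $B$, producing the $\log(|H| B s^{-1})$ factor. First I would define, for the target saturation level, a collection of submodular functions $f_{g,h}$ (one per hypothesis pair, or equivalently per hypothesis $h$ relative to the true $g$) that are $0$ until the accumulated separation reaches the threshold and then jump to $1$; I would verify these are monotone and submodular as functions of the multiset of actions played, and that a feasible solution must ``satisfy'' all of them. The greedy score used by $\mathrm{RnB}$ is then exactly the marginal-coverage-per-unit-cost score from SFR, adapted so that the $\Omega(s)$ lower bound on each productive play makes the discrete increments well-behaved. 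The crucial step is the charging argument: relate the cost incurred by greedy before all pairs are covered to $\OPT^{\PA}_\delta$. Here I would argue that any $\delta$-PAC partially adaptive policy must itself, in expectation, drive essentially every pair's separation to $\Omega(\log\delta^{-1}) = \Omega(B)$ before stopping (otherwise two hypotheses would be confusable with probability exceeding $\delta$), so $\OPT^{\PA}_\delta$ is lower-bounded by the cost of an SFR-feasible action sequence; then the SFR greedy guarantee yields the $O(\alpha s^{-1}\log(|H| B s^{-1}))$ overhead.

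The step I expect to be the main obstacle is making the reduction to SFR genuinely \emph{noise-robust}: the classical SFR analysis assumes deterministic coverage, whereas here the outcomes are random, so a fixed sequence of actions covers a pair only with high probability, not with certainty. I would handle this by working with the \emph{expected} (or concentration-guaranteed) coverage: play each chosen action a number of times that is $\Theta(\alpha)$ larger than the noiseless requirement, use a Chernoff/sub-gaussian tail bound (this is where Assumption~\ref{assumption:KL} relating the sub-gaussian norm to KL divergence enters) to show that after this many plays the empirical log-likelihood ratio has concentrated within a constant factor of its mean $\Omega(\alpha B)$, and then run the SFR charging argument against the \emph{deterministic} surrogate instance in which each productive play of $a$ deterministically contributes its expected separation. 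The overhead from this over-sampling is the source of the $\alpha$ factor in the cost, and the failure probability of the concentration step feeds into the $|H|\exp(-\Omega(\alpha B))$ error term via a union bound over pairs and over the (polynomially many, by the definition of polynomial time in the paper) rounds. A secondary technical point is bounding the number of distinct coverage increments — i.e., showing the ``$\log$'' in the SFR bound is $\log(|H| B s^{-1})$ rather than something depending on $|A|$ — which follows because coverage is capped at $\Theta(\alpha B)$ and each increment is $\Omega(s)$, so the dynamic range per pair is $O(\alpha B / s)$ and there are $O(|H|^2)$ pairs, and a standard refinement of the SFR potential argument converts this into the stated logarithmic factor.
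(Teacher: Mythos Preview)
Your high-level decomposition matches the paper's: error via concentration of log-likelihood ratios plus a union bound over $g\neq h$ (this part is fine), and cost via an SFR comparison. But you have misidentified where the real work in the cost bound lies, and there is a genuine gap.

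First, the SFR instance is purely deterministic: the submodular functions $f_h^B$ are defined in terms of the fixed KL divergences $d(g,h;a)$, not random outcomes. No ``noise-robust SFR'' extension is needed, and Theorem~\ref{thm:inz} applies directly to give Step~(A): the greedy sequence's weighted cover time is within $O(\log(|H|Bs^{-1}))$ of the optimal sequence's. Step~(B), bounding $\ho{E}_h[T]\leq \alpha\cdot \CT(f_h^B,\sigma)$, is immediate from the algorithm's definition. The actual obstacle, and the source of the $s^{-1}$ factor you misattribute to the SFR ratio, is Step~(C): showing $\ho{E}_h[T^*]\geq \Omg(s)\cdot \CT(f_h^B,\sigma^*)$ for the optimal $(\sigma^*,T^*)$. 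Your sentence ``any $\delta$-PAC policy must drive every pair's separation to $\Omg(B)$ before stopping, so $\OPT$ is lower-bounded by the cost of an SFR-feasible sequence'' is where the argument breaks. The first clause is correct and is exactly Lemma~\ref{lem:expected_LLR} combined with Wald's identity: the expected cumulative KL at the random stopping time $T^*$ satisfies $\sum_i d^i\,\ho{P}_h[T^*=i]\geq \Omg(\log\delta^{-1})\geq B$, where $d^i=\sum_{j\leq i} d(h,g;\sigma^*_j)$. But this constraint on the \emph{expected} cumulative divergence at a \emph{random} stopping time does not by itself lower-bound $\ho{E}_h[T^*]$ by the deterministic cover time $\CT_h$; the stopping distribution can place mass both before and after $\CT_h$. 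The paper closes this gap by writing the constraint as a linear program in $z_i=\ho{P}_h[T^*=i]$, showing the LP optimum is achieved on a two-point support, and proving directly that $LP^*\geq (\CT_h-1)\cdot\min_i d_i \geq \Omg(s\cdot \CT_h)$. The factor-$s$ loss here is genuine: if $d_1=\Theta(1)$ and all later $d_i=s$, a stopping distribution concentrated near $i=1$ and far out can meet the KL constraint with $\ho{E}_h[T^*]/\CT_h=\Theta(s)$. Without this LP step or an equivalent, your chain from ``expected separation $\geq B$'' to ``expected stopping time $\geq$ cover time'' is incomplete, and your accounting for the $s^{-1}$ factor is wrong.
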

By setting $\alpha=1+\log_{\delta^{-1}}|H|$ and $B=\frac 12 \log \delta^{-1}$, we immediately obtain Theorem~\ref{thm:pa}.

\paragraph{Background: Submodular Function Ranking }
%
In the SFR problem,
we are given a ground set $U$ of $N$ \emph{\elem s}, a family $\mathcal{F}$ of non-decreasing submodular functions $f: 2^U \rar [0,1]$ with $f(U)$ equaling $1$ for every $f \in \mathcal{F}$, and a weight \func\ $w:\mathcal{F}\rar \real^+$. 
For any permutation $\sigma = (u_1,...,u_N)$ of $U$,
the {\emph{cover time}} of $f$ is defined as $\mathrm{CT}(f, \sigma) = \min\{t: f(\{u_1,...,u_t\})=1 \}$. 
The goal is to find a permutation $\sigma$ of $U$ with minimal {\it cover time} $\sum_{f\in \mathcal{F}} w(f)\cdot \mathrm{CT}(f, \sigma).$
We will use the following greedy algorithm, called GRE, in \cite{azar2011ranking} as a subroutine. 
The sequence is initialized to be empty and is constructed iteratively.
At each iteration, let $S$ be the \elem s selected so far.
GRE selects the \elem\ $u$ with the maximal {\it coverage}, defined as 
$\mathrm{Cov}(u; S):= \sum_{f\in \mathcal{F}: f(S)<1} w(f)\cdot \left(f(S\cup \{u\})-f(S)\right)
/\left(1-f(S)\right).$

\begin{theorem}[\cite{im2016minimum}]\label{thm:inz}
For any SFR \ins, GRE returns a \xulie\ whose cost is $O(\log  \eps^{-1})$ times the optimum, where $\eps:=\min\left\{f(S\cup \{u\}) - f(S) > 0: S\in 2^U, u\in U, f\in \mathcal{F}\right\}$.
\end{theorem}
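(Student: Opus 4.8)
The plan is to compare the greedy \xulie\ $\sigma$ against an optimal \xulie\ $\sigma^*$ through a residual-weight accounting that isolates the role of $\eps$ in a single-function estimate. The first step is to rewrite the objective by layer-cake summation: writing $S_t=\{u_1,\dots,u_t\}$ for the first $t$ elements and $R_t=\sum_{f:f(S_t)<1}w(f)$ for the residual (uncovered) weight, one has $\sum_f w(f)\,\CT(f,\sigma)=\sum_{t\ge 0}R_t$, and likewise $\OPT=\sum_{t\ge0}R^*_t$ with $R^*_t$ the residual weight along $\sigma^*$. Thus it suffices to bound $\sum_t R_t$ by $O(\log\eps^{-1})\sum_t R^*_t$.

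The second step is a coverage inequality driven by submodularity. Fix a greedy prefix $S_t$ and an optimal prefix $S^*_\ell=\{\sigma^*_1,\dots,\sigma^*_\ell\}$. For any $f$ with $f(S_t)<1$ and $f(S^*_\ell)=1$, telescoping the marginals of $\sigma^*_1,\dots,\sigma^*_\ell$ added on top of $S_t$ and normalizing by $1-f(S_t)$ sums to exactly $1$, while submodularity bounds each telescoped marginal by the one-element marginal $f(S_t\cup\{\sigma^*_j\})-f(S_t)$. Summing over such $f$ with weights gives $W'_t\le\sum_{j\le\ell}\mathrm{Cov}(\sigma^*_j;S_t)\le \ell\cdot c_{t+1}$, where $W'_t$ is the weight of functions alive at $S_t$ but covered by $S^*_\ell$ and $c_{t+1}=\max_u\mathrm{Cov}(u;S_t)$ is the greedy coverage chosen at step $t+1$. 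Since the functions alive at $S_t$ split into those covered by $S^*_\ell$ and those not (total weight at most $R^*_\ell$), this yields $R_t\le W'_t+R^*_\ell\le \ell\,c_{t+1}+R^*_\ell$.

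The third step isolates $\eps$. For a single $f$, along greedy's run the normalized marginals it contributes telescope: writing the residual $r=1-f(S)$, a non-covering step contributes $1-r_{\mathrm{next}}/r\le\ln(r/r_{\mathrm{next}})$, while the step that finally covers $f$ contributes exactly $1$. Because an uncovered function always has residual at least $\eps$ (its next positive marginal is at least $\eps$ yet cannot exceed $1-f(S)$), the telescoped sum is at most $1+\ln\eps^{-1}$. Weighting by $w(f)$ and summing over the functions alive at the start of a phase shows that the total greedy coverage accumulated during that phase is at most $(1+\ln\eps^{-1})$ times the residual weight present at the phase's start.

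The final step combines these via a residual-halving phase decomposition. Partition greedy's timeline so that phase $j$ consists of the steps with $R_t\in(W_02^{-(j+1)},W_02^{-j}]$, where $W_0=\sum_f w(f)$. Taking $\ell=\ell_{j+2}$, the least number of optimal steps reaching residual at most $W_02^{-(j+2)}$, the second step gives $c_{t+1}\ge(R_t-R^*_\ell)/\ell_{j+2}=\Omega(W_02^{-j}/\ell_{j+2})$ throughout phase $j$, while the third step caps the phase's total coverage at $(1+\ln\eps^{-1})W_02^{-j}$; dividing bounds the number of steps in phase $j$ by $O((\log\eps^{-1})\,\ell_{j+2})$. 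Hence $\sum_tR_t\le\sum_j(\text{steps in phase }j)\cdot W_02^{-j}=O(\log\eps^{-1})\sum_k \ell_k W_02^{-k}$, and a layer-cake comparison (writing $\OPT=\int_0^{W_0}\ell^*(x)\,dx$ with $\ell^*(x)=\min\{\ell:R^*_\ell\le x\}$ nonincreasing and $\ell_k=\ell^*(W_02^{-k})$) yields $\sum_k\ell_kW_02^{-k}\le 2\,\OPT$. I expect this combination to be the main obstacle: one must align greedy's per-phase step count with the correct optimal prefix length $\ell_{j+2}$ and, crucially, confine the $(1+\ln\eps^{-1})$ coverage budget to a single phase so that only functions alive at that phase's start are charged—bounding instead by the total coverage over all time would introduce a spurious $2^j$ factor and destroy the bound.
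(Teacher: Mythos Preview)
The paper does not prove this theorem at all: it is quoted verbatim as a known result from \cite{im2016minimum} and used as a black box (in Step~A of the proof of Proposition~\ref{prop:pa}). There is therefore no ``paper's own proof'' to compare your proposal against.

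That said, your sketch is a faithful and essentially correct reconstruction of the standard argument for this result. The layer-cake identity $\sum_f w(f)\,\CT(f,\sigma)=\sum_{t\ge0}R_t$ is right; the submodularity step giving $R_t\le \ell\,c_{t+1}+R^*_\ell$ is the key structural inequality; and your single-function bound (residual at least $\eps$ while uncovered, hence per-function accumulated normalized marginals bounded by $1+\ln\eps^{-1}$) is exactly where the $\log\eps^{-1}$ factor enters. The geometric phase decomposition and the final layer-cake comparison $\sum_k\ell_kW_02^{-k}\le 2\,\OPT$ are also correct. The point you flag as the ``main obstacle''---restricting the $(1+\ln\eps^{-1})$ coverage budget to functions alive at the start of a given phase---is handled by exactly the observation you make: any function contributing coverage during phase $j$ is alive at the phase's first step $t_0$, so the total phase coverage is at most $(1+\ln\eps^{-1})R_{t_0}\le (1+\ln\eps^{-1})W_02^{-j}$. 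One small clarification worth making explicit in a full write-up is why $1-f(S)\ge\eps$ whenever $f(S)<1$: since $f(U)=1$, adding $U\setminus S$ one element at a time yields some positive marginal, which by submodularity is dominated by a marginal at $S$, and that marginal is both $\ge\eps$ and $\le 1-f(S)$.
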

\paragraph{Challenge } To motivate our algorithm, consider first the following simple idea: ``boost'' ({\rev or repeat}) each action {\rev enough}, and hence reduce the problem to a deterministic problem $P_{det}$.
We then show that the existing technique (submodular function ranking for partially \adap\ and greedy analysis for ODT for fully-\adap) returns a policy with cost $O(\log |H|)$ times the no-noise optimum, and finally show that this no-noise policy can be converted to a noisy version by losing anther factor of $O(s^{-1}\log (\delta^{-1}|H|))$. 
This analysis was in fact our first attempt.
However, there are at least two issues that one runs into:
\begin{enumerate}
    \item This analysis only compares the policy's cost with the no-noise optimum, but our focus is the $\delta$-noise optimum.
In particular, the simpler analysis implicitly assumes that the $\delta$-noise optimum is at least $\Omega(s^{-1}\log (\delta^{-1}|H|))$ times the no-noise optimum, which is not necessarily true.
Moreover, it is challenging to analyze the gap between the no-noise optimum and the $\delta$-noise optimum.
\item 
This simple analysis provides a \emph{weaker} guarantee than ours
in terms of $\delta$: it yields a factor of $\log (1 / \delta)$, as opposed to the $\log \log (1 /\delta)$ in our analysis.
This distinction is nontrivial, particularly in applications where the error is required to be exponentially small in $|H|$. 
\end{enumerate}

\paragraph{Rank and Boost (RnB) \Alg }
Our RnB \alg\ (Algorithm~\ref{alg:rnb}) circumvents the issues above by drawing a connection between ASHT and SFR. 
First, we observe that
although an action is allowed to be selected for multiple times, we may assume each action is selected for at most $M = M(\delta,s,|H|) = O(s^{-1}|H|^2 \log (|H|/\delta))$ times. In fact,
\begin{obsv}
Let $\widetilde A$ be the (multi)-set obtained by creating $M$ copies of each $a\in A$. 
Then there exists a \xulie\ $\sigma$ of $|\widetilde A|$ actions, s.t. for any true \hypo\ $h^*\in H$, $h^*$ has the highest posterior with \prb\ $1-\delta$ after performing all actions in $\sigma$.
\end{obsv}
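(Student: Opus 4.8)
The plan is to show that a finite number of repetitions $M$ per action suffices to drive the overall misidentification probability below $\delta$, by a union bound over the $\binom{|H|}{2}$ pairs of hypotheses combined with a Chernoff-type concentration bound for each pairwise test. First I would fix a reference sequence $\sigma$ that, for every ordered pair $g \ne h$ with $g$ the true hypothesis, includes at least $M$ copies of some distinguishing action $a = a(g,h)$ guaranteed by Assumption~\ref{assu1} (Validity); concatenating such blocks over all pairs gives a sequence of length at most $M\binom{|H|}{2} \le |\widetilde A|$, and padding with arbitrary remaining actions does no harm since more observations only help. It then suffices to bound, for the true hypothesis $h^*$, the probability that some other $h$ attains a higher posterior.

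Next I would reduce ``$h$ has posterior at least that of $h^*$'' to a statement about likelihood ratios: the posterior of $h$ exceeds that of $h^*$ only if the log-likelihood ratio of the observed outcomes under $h$ versus $h^*$ is at least $\log(\pi(h^*)/\pi(h))$, which is bounded by $\log \pi_{\min}^{-1}$ in magnitude. Restricting attention to the $M$ copies of the distinguishing action $a(h^*,h)$, the log-likelihood ratio is a sum of $M$ i.i.d.\ terms whose expectation under $h^*$ equals $-\KL(D_{\mu(h^*,a)}, D_{\mu(h,a)}) = -d(h^*,h;a)$, which is strictly negative by the choice of $a$ (and at least $s$ if we invoke $s$-separation, though for this observation mere positivity suffices). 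A standard sub-Gaussian / Hoeffding concentration bound for Bernoulli or Gaussian outcomes then shows this sum is at most $-\log\pi_{\min}^{-1}$ except with probability $\exp(-\Omega(M \cdot d^2 / \text{(range)}))$; choosing $M = \Theta(s^{-1}|H|^2\log(|H|/\delta))$ makes this smaller than $\delta/\binom{|H|}{2}$ uniformly.

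Finally I would assemble the pieces: by the union bound over the at most $\binom{|H|}{2}$ competing hypotheses $h$, with probability at least $1-\delta$ no $h \ne h^*$ has log-likelihood ratio exceeding $\log\pi_{\min}^{-1}$ on its dedicated block, hence none can out-score $h^*$ in posterior, so the maximum-a-posteriori rule returns $h^*$. Since this holds for every choice of true $h^* \in H$, the sequence $\sigma$ (of length at most $|\widetilde A|$) satisfies the claim.

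I expect the main obstacle to be handling the dependence of $M$ on the separation correctly and cleanly connecting the KL-based quantity $d(g,h;a)$ to a parameter-gap $(\mu(g,a)-\mu(h,a))^2$ that actually governs concentration of the log-likelihood-ratio sum --- this is exactly where Assumption~\ref{assumption:KL} enters, and one must be a little careful that the ``range'' of a single log-likelihood-ratio term (which can be large when Bernoulli parameters approach $0$ or $1$) is controlled by the same constants, so that the final $M$ is genuinely $\poly(|H|, s^{-1}, \log\delta^{-1})$ as claimed. A secondary subtlety is that the posterior also involves the prior ratio $\pi(h^*)/\pi(h)$; in the worst case $\pi_{\min}$ could be tiny, so one should either absorb a $\log\pi_{\min}^{-1}$ factor into $M$ or note that for the purposes of this observation we only need \emph{some} finite $M$, and the stated bound is the convenient round number used later in the RnB analysis.
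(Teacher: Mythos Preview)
The paper does not supply a proof of this observation; it is asserted as self-evident in Section~4 to justify that the SFR ground set can be taken finite. Your union-bound-plus-concentration plan is exactly the natural argument and is correct in outline. Two small points are worth tightening.

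First, your sequence construction is more complicated than necessary and the claimed inequality $M\binom{|H|}{2}\le |\widetilde A|=M|A|$ need not hold (nothing prevents $|A|<\binom{|H|}{2}$, and the same distinguishing action may serve many pairs, so your concatenation may use more than $M$ copies of some $a$). The intended $\sigma$ is simply any permutation of the multiset $\widetilde A$, i.e.\ play every action exactly $M$ times; this automatically contains $M$ copies of whatever $a(h^*,h)$ Assumption~\ref{assu1} provides for each pair.

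Second, ``restricting attention to the $M$ copies of the distinguishing action'' is not quite a valid move on its own, since the posterior depends on all observations. The clean statement is that for any action $a'$ with $\mu(h^*,a')=\mu(h,a')$ the log-likelihood contribution is identically zero, so the full log-likelihood ratio of $h$ versus $h^*$ equals the sum over only those actions that distinguish the pair. Each such action contributes $M$ i.i.d.\ terms with mean $-d(h^*,h;a')\le -s$ and sub-Gaussian norm controlled via Assumption~\ref{assumption:KL} (this is precisely the calculation in the paper's Appendix~\ref{sec:error_analysis}, Equation~(\ref{eqn:may27})). Hoeffding's inequality then gives failure probability $\exp(-\Omega(Ms))$ per competitor, and the union bound over $|H|-1$ competitors finishes it. Your identification of the $\pi_{\min}$ subtlety is apt; the paper's stated $M=O(s^{-1}|H|^2\log(|H|/\delta))$ is comfortably loose in any case, and only a polynomial bound is needed for the downstream argument.
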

Thus, given $\widetilde A$, we define $f_h^B: 2^{\widetilde A} \rar [0,1]$
for any 
coverage saturation level 
$B>0$ and $h\in H$
as
$f_h^{B}(S) = (|H|-1)^{-1} \sum_{g\in H\bs \{h\}} \min\{1,
B^{-1} \sum_{a\in S} d(g,h;a)
\}$.
One can verify that 
$f^B_h$ is monotone and \submod.
Our \alg\ computes a nearly optimal \xulie\ of actions using the greedy \alg\ for SFR, and creates a number of copies for each of them. 
Then we assign a \emph{timestamp} to each $h\in H$, and scan them one by one, terminating when the likelihood of one \hypo\ is dominantly high. 
\begin{algorithm}[t]
\caption{{\bf Partially Adaptive Algorithm: $\mathrm{RnB}(B,\alpha)$}}
\begin{algorithmic}[1]
\label{alg:rnb}
\STATE{\textbf{Parameters}: Coverage saturation level $B>0$ and boosting intensity $\alpha>0$.}
\STATE{\textbf{Input}: ASHT instance $(H,A,\pi,\mu, \cal{D})$ }
\STATE{\textbf{Initialize}: $\sigma\lar\emptyset, \tilde \sigma\lar\emptyset$} \quad\quad\quad\quad\quad\quad\quad\quad\quad\quad\quad\quad\quad\quad\quad \% Store the selected of actions.
\STATE{\textbf{For}}  {$t=1,2,..., |\widetilde A|$} \textbf{do} \quad\quad\quad\quad\quad\quad\quad\quad\quad\quad\quad \% {\bf Rank:} Compute a \xulie\ of actions.
\STATE{\quad $S\lar \{\sigma(1),...,\sigma(t-1)\}$.} \quad\quad\quad\quad\quad\quad\quad\quad\quad\quad\quad\quad\quad\quad\quad \% Actions selected so far.
\STATE{\quad {\bf For} $a\in \widetilde A$,  
\quad\quad\quad\quad\quad\quad\quad\quad\quad\quad\quad\quad\quad\quad\quad\quad\quad\quad \% Compute scores for each action.
\[\mathrm{Score}(a;S) \lar \sum_{h: f^B_h(S)<1} \pi(h) \frac{f^B_h(S\cup \{a\})- f^B_h(S)}{1-f^B_h(S)}.\]
}
\STATE{
\quad $\sigma(t) \lar \argmax\{ \mathrm{Score}(a;S): a\in \widetilde A\bs S\}.$
} \quad\quad\quad\quad\quad\quad\quad\ \% Select the greediest action.
\STATE{{\bf For} $t=1,2,...,|\tilde A|$:}
\quad\quad\quad\quad\quad\quad\quad\quad\quad\quad\quad \% {\bf Boost:} Repeat each action in $\sigma$ for $\alpha$ times.
\STATE{\quad {\bf For} $i=1,2,...,\alpha$:}
\STATE{\quad\quad $\tilde \sigma\big(\alpha (t-1)+ i \big) \lar \sigma(t)$.}
\STATE{{\bf For} $t=1,...,\alpha |\tilde A|$:}
\STATE{\quad Select action $\tilde \sigma(t)$ and observe outcome $y_t$.}
\STATE{\quad {\bf If} $t=\alpha\cdot \mathrm{CT}(f^B_h, \sigma)$ for some $h\in H$:}
\quad\quad\quad\quad\quad\quad\quad \% If $t$ is the {\it timestamp} for some $h$.
\STATE{\quad\quad {\bf For} $g\in H\bs \{h\}$:}
\STATE{\quad\quad\quad $\Lambda(h,g) \lar \prod_{i=1}^{t} \ho{P}_{h,\tilde \sigma(i)}(y_i)/\ho{P}_{g,\tilde \sigma(i)} (y_i)$.}
\quad\quad\quad\quad\quad \% Compute the likelihood ratio.
\STATE{\quad\quad {\bf If} $\log \Lambda(h,g)\geq \alpha B/2$ for all $g\in H\bs \{h\}$, {\bf then} Return $h$.} \quad\quad\quad \% Hypothesis identified.
\end{algorithmic}
\end{algorithm}

\noindent{\rev Although a naive implementation of Algorithm~\ref{alg:rnb} yields a running time that is linear in the number of actions, however since Score$(a; S)$ (Line 6 of Algorithm~\ref{alg:rnb}) can be calculated independently for each action $a$, one could paralyze this calculation for different actions and thus reducing the dependency on $|A|$. The same observation also holds for the rest algorithms to be introduced in the paper.}

\paragraph{Proof Sketch for Proposition~\ref{prop:pa} }
We sketch a proof and defer the details to Appendix~\ref{sec:pf_pa}.
The error analysis follows from standard concentration bounds, so we focus on the cost analysis.
\Sps\ $\alpha>0$, $\delta\in (0,1/4]$, and $B\in (0, (1/2)\log\delta^{-1}]$. 
Let $(\sigma^*,T^*)$ be any optimal partially adaptive algorithm, and let $(\sigma,T)$ be the policy returned by RnB. 
Our analysis consists of the following steps:
\benum
\item[(A)] The \xulie\ $\sigma$ does well in covering the \submod\ functions, in terms of the total cover time:
$\sum_{h\in H} \pi(h)\cdot \mathrm{CT}(f^B_h, \sigma) \leq 
O\left(\log \left(|H|B s^{-1}\right)\right) \sum_{h\in H} \pi(h)\cdot  \mathrm{CT}(f^B_h, \sigma^*).$
\item[(B)] The expected stopping
time of our \alg\ is not too much higher than the cover time of its submodular function:
$\ho{E}_{h} [T] \leq \alpha \cdot \mathrm{CT}(f^B_h, \sigma), \; \forall h\in H.
$
\item[(C)] 
The expected stopping time in $(\sigma^*,T^*)$ can be lower bounded in terms of the total cover time:
$\ho{E}_h [T^*] \geq \Omg(s)\cdot \mathrm{CT}(f^B_h, \sigma^*), \; \forall h\in H.
$
\eenum
Proposition~\ref{prop:pa} follows by combining the above three steps. In fact,
\beqn
\sum_{h\in H}  \pi(h) \cdot \ho{E}_h [T] &\leq \alpha\sum_{h\in H} \pi(h)\cdot \CT(f^B_h, \sigma) 
\leq O\left(\alpha\log\frac{|H|B}{s} \right) \sum_{h\in H} \pi(h) \cdot \CT(f^B_h, \sigma^*)\\
&\leq O\left(\frac{\alpha}{s}\log\frac{|H|B}{s} \right) \sum_{h\in H} \pi(h) \cdot \ho{E}_h [T^*],\notag
\eeqn
where $\pi(h) \cdot \ho{E}_h [T]$ is the expected cost of our algorithm, and $\pi(h) \cdot \ho{E}_h [T^*]$ is the expected cost of the optimal partially adaptive algorithm, $\OPT_{\delta}^{\PA}$.

At a high level, Step A can be showed by applying Theorem~\ref{thm:inz} and observing that the marginal positive increment of each $f_h^B$ is $\Omg({s}/({|H|B)})$.
Step B is implied by the correctness of the \alg.
In our key step, Step C, we fix an \arb\ $\delta$-PAC-error partially \adap\ \alg\ $(\sigma,T)$ and $h\in H$.
Denote $\CT_h=CT(f_h^{B}, \sigma)$, with $B$ chosen to be $\frac 12 \log \delta^{-1}$.
Our goal is to lower bound $\ho{E}_h [T]$ in terms of 
$\mathrm{CT}_h$.
To this aim, we consider an LP. 
Given any $d_1,...,d_n$, denote $d^i=\sum_{j=1}^i d_j$. 
Define
\beqn
LP(d, t):\quad \left\{\min_z
\sum_{i=1}^N i \cdot z_i \Bigg\rvert
\sum_{i=1}^N d^i z_i \geq \sum_{i=1}^{\CT_h-1} d_i,
\sum_{i=1}^N z_i = 1, 
z\geq 0.\right\}
\notag
\eeqn
A \feas\ \sln\ $z$ can be viewed as a \distr\ of the stopping time. 
When $d_i=d(g,h;a_i)$, the first \constr\ says that the total KL-divergence ``collected'' at the stopping time has to reach a certain threshold.
We show that $z_i = \ho{P}_h [T=i]$ is feasible, and the \obj\ value of $z$ is exactly $\ho{E}_h [T]$. Hence $\ho{E}_h [T]$ is upper bounded by the LP-optimum $LP^*(d,t)$.
Finally, we lower bound $LP^*(d, \mathrm{CT}_h-1)$ by $\Omg(s \cdot \mathrm{CT}_h)$, and the proof follows.
\section{Fully Adaptive \Alg}\label{sec:fa}
For ease of presentation we only consider the uniform prior version here (though our guarantees do hold for general priors). Our analysis is based on a reduction to the classical ODT problem.

\paragraph{Background: Optimal Decision Trees}
In the ODT problem, an \emph{unknown} true hypothesis $h^*$ is drawn from a set of hypotheses $H$ with some known probability distribution $\pi$. 
There is a set of known \emph{tests}, each being a  (deterministic) mapping from $H$ to a finite {\it outcome space} set $O$. 
Thus, when performing a test, we can \emph{rule out} the \hypos\ that are inconsistent with the observed outcome, hence reducing the number of \emph{alive} hypotheses.
Moreover, the cost $c(T)$ of each test $T$ is known, and the \emph{cost of a decision tree} is defined to be the expected total cost of the tests selected until one \hypo\ remains \emph{alive}, in which case we say the true \hypo\ is \emph{identified}. 
The goal is to find a valid decision tree with minimal expected cost.

Note that the ODT problem can be viewed as a special case of the fully \adap\ version of our problem where there is no noise and $\delta$ is 0.
Consider the following greedy algorithm GRE: let $A$ be the alive \hypos.
Define $\mathrm{Score(T)}$ for each test $T$ to be the minimal (over all possible outcomes) number of alive \hypos\ that it rules out in $A$.
Then, we select the test $T$ with the highest ``bang-per-buck'' $\mathrm{Score}(T)/c(T)$.
This \alg\ is known to be an $O(\log |H|)$-\apxn.
\begin{theorem}[\cite{chakaravarthy2009approximating}]\label{thm:modt}
For any ODT \ins\ with uniform prior, GRE returns a \dec\ tree whose cost is $O(\log |H|)$ times the optimum.
\end{theorem}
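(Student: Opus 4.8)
The plan is to prove this exactly along the lines of \cite{chakaravarthy2009approximating}, via an amortized charging argument against an optimal decision tree $\mathcal{T}^\ast$. Fix $\mathcal{T}^\ast$ and, for $h\in H$, let $\ell^\ast_h$ be the total cost of $h$'s root-to-leaf path in $\mathcal{T}^\ast$, so that (uniform prior) $\OPT=\frac{1}{|H|}\sum_{h}\ell^\ast_h$; likewise let $\ell^g_h$ be $h$'s path cost in the greedy tree. First I would record the node-by-node identity $\sum_h \ell^g_h=\sum_{v}c(T_v)\,|A_v|$, where the sum ranges over internal nodes $v$ of the greedy tree, $T_v$ is the test GRE selects at $v$, and $A_v$ is the set of alive hypotheses at $v$; thus it suffices to show $\sum_v c(T_v)|A_v|\le O(\log|H|)\sum_h\ell^\ast_h$.

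The heart of the argument is a ``local progress'' lemma: at every internal node $v$ there is an available test $T$ with $\mathrm{Score}_{A_v}(T)/c(T)\ \ge\ |A_v|^2/\bigl(4\,\mathrm{opt}(A_v)\bigr)$, where $\mathrm{Score}_A(T)=|A|-\max_o|A\cap T^{-1}(o)|$ is the worst-case number of alive hypotheses ruled out (as in GRE) and $\mathrm{opt}(A)$ is the minimum over decision trees identifying $A$ of $\sum_{h\in A}(\text{path cost of }h)$. To prove it I would restrict-and-contract $\mathcal{T}^\ast$ to $A:=A_v$ (removing tests only shortens paths), obtaining a tree for $A$ of total cost $L\le\sum_{h\in A}\ell^\ast_h$, then walk down from its root always into the heaviest-outcome child, visiting $w_0,\dots,w_{i^\ast}$ where $i^\ast$ is the first index with $|A_{w_{i^\ast}}|\le|A|/2$. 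The tests $S_0,\dots,S_{i^\ast-1}$ at $w_0,\dots,w_{i^\ast-1}$ then satisfy (i) $\sum_i c(S_i)<2L/|A|$, since the more than $|A|/2$ hypotheses alive at $w_{i^\ast-1}$ pass through all of them; and (ii) $\sum_i\mathrm{Score}_A(S_i)\ge\sum_i\bigl(|A_{w_i}|-|A_{w_{i+1}}|\bigr)=|A|-|A_{w_{i^\ast}}|\ge|A|/2$, using that $\mathrm{Score}_{A}$ is monotone under enlarging the alive set and that $A_{w_{i+1}}$ is the heaviest outcome of $S_i$. Averaging over $i$ yields one test with ratio at least $(|A|/2)/(2L/|A|)=|A|^2/(4L)\ge|A|^2/\bigl(4\,\mathrm{opt}(A)\bigr)$, and GRE's choice $T_v$ does at least as well. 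Combined with the routine bounds $\mathrm{opt}(A_v)\le\sum_{h\in A_v}\ell^\ast_h$ and $\mathrm{Score}_{A_v}(T_v)\le|A_v|-|A_u|$ for the child $u$ followed by the true hypothesis, this controls $c(T_v)$ at each node.

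To extract the $O(\log|H|)$ factor, I would partition the internal nodes of the greedy tree into $O(\log|H|)$ ``scales'' by the value of $|A_v|$ rounded down to a power of two, and show the total charge incurred within each scale is $O(\OPT)$; summing over scales gives the claim. I expect this final step to be the main obstacle: a naive substitution of the local-progress lemma produces a \emph{circular} inequality (the charge to a hypothesis involves its own greedy depth), and one must use that within a single scale the alive counts $|A_v|$ can drop by only a factor of two, so that the increments $|A_{w_i}|-|A_{w_{i+1}}|$ telescope against a single per-scale budget of size $\approx|A_v|/2$. If the direct accounting proves too delicate, a fallback would be an LP-duality / dual-fitting argument, lower-bounding $\OPT$ by a fractional ``pair-separation'' solution against which the greedy's per-step gains can be charged. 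Everything else — correctness of GRE, the restriction-contraction claim, monotonicity of $\mathrm{Score}$, and the telescoping identities — is routine, and since this result is already established we may simply cite it, the above indicating why it holds.
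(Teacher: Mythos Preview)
The paper does not prove Theorem~\ref{thm:modt}: it is stated as a known result from \cite{chakaravarthy2009approximating} and used as a black box in the analysis of Algorithm~\ref{alg:fa}. Your own closing remark --- ``since this result is already established we may simply cite it'' --- is therefore exactly what the paper does, and nothing more is required here.

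That said, your sketch is a faithful rendition of the argument in \cite{chakaravarthy2009approximating}: the node identity $\sum_h \ell^g_h=\sum_v c(T_v)|A_v|$, the local-progress lemma via the heavy-path walk in a restricted/contracted optimal tree, the monotonicity of $\mathrm{Score}_A$ in $A$, and the final grouping into $O(\log|H|)$ geometric scales are all the standard ingredients. Your self-identified ``main obstacle'' is real but not fatal: within a fixed scale the alive-set sizes differ by at most a factor of two, so the per-node charge $c(T_v)|A_v|\le 4\,\mathrm{opt}(A_v)\cdot|A_v|/\mathrm{Score}_{A_v}(T_v)$ combined with $\mathrm{Score}_{A_v}(T_v)\ge |A_v|-|A_u|$ lets you sum the fractions $(|A_v|-|A_u|)/|A_v|$ along each root-to-leaf path of the greedy tree; since these increments telescope to at most a constant per scale, the total per scale is $O(\sum_h \ell^\ast_h)$, and summing over $O(\log|H|)$ scales finishes. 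The LP/dual-fitting fallback you mention is unnecessary for this particular result.
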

\paragraph{Our Algorithm }
We will analyze our 
greedy \alg\ by relating to the above result.
Consider the following ODT \ins\ $\mathcal{I}_{\mathrm{ODT}}$ for any given ASHT \ins\ $\mathcal{I}$.
The \hypos\ set and prior in $\mathcal{I}_{\mathrm{ODT}}$ are the same as in $\mathcal{I}$. 
For each action $a\in A$, 
let $\Omg_a:=\{\mu(h,a)|h\in H\}$ be the mean outcomes. 
By Chernoff bound, we can show that when $h$ is the true \hypo, with high probability the mean outcome is ``close'' to $\mu(h,a)$ when $a$ is repeated for $c(a)$ times.
This motivates us to define a test $T_a: H \rar \Omg_a$ s.t. $T_a(h)= \mu(h,a)$, with cost 
$c(a) = \lceil s(a)^{-1}\log (|H|/\delta) \rceil$,
where $s(a) = \min \{d(g,h;a)>0: g,h\in H\}$ is the separation parameter under action $a$.
Such a test corresponds to selecting $a$ for $c(a)$ times in a row.

For each $\omg\in \Omg_a$, 
abusing the notation a bit,
let $T_a^\omg\subseteq H$ denote the set of \hypos\ whose outcome is $\omg$ when performing $T_a$, i.e., $T_a^\omg=\{h: \mu(h,a)=\omg\}$.
At each step, Algorithm \ref{alg:fa} selects an action $\hat a$ using the greedy rule (Step~\ref{step:greedy_fa}) and then repeat $\hat a$ for $c(\hat a)$ times.
Then we round the empirical mean of the \obsvn s to the closest \elem\ $\hat\omega$ in $\Omega_a$, ruling out inconsistent hypotheses,
i.e., the $h$'s with $\mu(h,a)\neq \hat\omg$.
We terminate when only one hypothesis remains alive.

\paragraph{Analysis }
We sketch a proof for Theorem~\ref{thm:odt_pac} and defer the details to Appendix~\ref{proof:prop2}.
Let $h^*$ be the true \hypo. 
By Hoeffding's \ineq, in each iteration, with probability $1-e^{-\log (|H|/\delta)} = 1- \delta/ |H|$ it holds $\hat \omg = \mu(h^*,\hat a)$.
Since in each iteration, $|H|$ decreases by at least 1, there are at most $|H|-1$ iterations.
Thus by union bound, the total error is at most $\delta$.

Next we analyze the cost.
Let GRE be the cost of Algorithm~\ref{alg:fa} and $\mathrm{ODT}^*$ be the optimum of the ODT instance $\mathcal{I}_{\mathrm{ODT}}$.
{\rev For the sake of analysis, we consider a ``fake'' cost $c' := \lceil s^{-1} \log (|H|/\delta) \rceil$, which does not depend on $a$.
The definition of the ODT instance $I_{ODT}$ remains the same except that each test has {\bf uniform} cost $c'$ (as opposed to $c(a)$).
Let $c(T)$ and $c'(T)$ be the costs of the greedy tree $T$ returned by Algorithm 2 under $c$ and $c'$ respectively.
Then by Theorem~\ref{thm:modt},
$c'(T) \leq O(\log |H|)\cdot \mathrm{ODT}^*.$
Note that $c'\leq c(a)$ for each $a$ since the separation parameter $s$ is no larger than $s(a)$ by definition.
Hence, 
\begin{align}\label{eqn:feb4}
\mathrm{GRE} = c(T)\leq c'(T)\leq O(\log |H|)\cdot \mathrm{ODT}^*.
\end{align}}
\noindent{We} relate $\mathrm{ODT}^*$ to $\OPT^{FA}_\delta$ using the following result (see proof in Appendix~\ref{proof:prop2}): 
\begin{proposition}\label{prop:convert}
$\mathrm{ODT}^*\leq O(s^{-1}\log ({|H|}/{\delta}))\cdot \OPT^{FA}_\delta$. 
\end{proposition}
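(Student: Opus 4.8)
The plan is to show that any $\delta$-PAC-error fully adaptive algorithm $\mathcal{A}^*$ for the ASHT instance $\mathcal{I}$ can be converted, with only a multiplicative $O(s^{-1}\log(|H|/\delta))$ blow-up in expected cost, into a valid decision tree for the ODT instance $\mathcal{I}_{\mathrm{ODT}}$. Since $\mathrm{ODT}^*$ is the optimum of that ODT instance, this will immediately give $\mathrm{ODT}^* \leq O(s^{-1}\log(|H|/\delta))\cdot\mathbb{E}[\text{cost of }\mathcal{A}^*]$, and taking $\mathcal{A}^*$ to be optimal yields the claim. The key conceptual point is that a decision tree for $\mathcal{I}_{\mathrm{ODT}}$ only needs to distinguish every pair of hypotheses that are actually distinguishable, i.e., to reach a state where one hypothesis is alive — it never has to worry about noise, because in $\mathcal{I}_{\mathrm{ODT}}$ each test $T_a$ is the \emph{deterministic} map $h\mapsto\mu(h,a)$.

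First I would take $\mathcal{A}^*$, an optimal (noisy, fully adaptive) policy, and consider what actions it plays along any root-to-leaf path. The natural idea is: simulate $\mathcal{A}^*$, but whenever $\mathcal{A}^*$ would play action $a$, instead play the ODT test $T_a$ once (which corresponds to $c(a)=\lceil s(a)^{-1}\log(|H|/\delta)\rceil$ repetitions of $a$), observe the deterministic outcome $\mu(h^*,a)$, and feed $\mathcal{A}^*$ a "clean" observation consistent with hypothesis $h^*$ (e.g., the mean, or a canonical value). Because a single play of $\mathcal{A}^*$'s action $a$ under true hypothesis $h^*$ produces an outcome distributed as $D_{\mu(h^*,a)}$, I need to argue that the \emph{number of distinct actions} $\mathcal{A}^*$ plays before it can be confident is not much smaller than what ODT needs. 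More precisely: along any path, let $N$ be the (random) number of action-plays $\mathcal{A}^*$ makes; I want to relate $N$ to the number of \emph{tests} an ODT tree would make on the same path. The bridge is that after $\mathcal{A}^*$ terminates $\delta$-correctly, the set of actions it has played must, with the observed outcomes, statistically separate $h^*$ from every other alive hypothesis; but since all we need for ODT is to separate hypotheses whose $\mu$ values differ, and each ODT test does this perfectly in one shot (cost $c(a)$), the ODT tree built from $\mathcal{A}^*$'s action-support needs at most (number of distinct actions played by $\mathcal{A}^*$) tests, each of cost at most $\max_a c(a) \le \lceil s^{-1}\log(|H|/\delta)\rceil$.

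Concretely, the cleanest route: fix a leaf $\ell$ of $\mathcal{A}^*$ reached under true hypothesis $h$, with the multiset of actions $S_\ell$ played along the path from root to $\ell$. Since $\mathcal{A}^*$ is $\delta$-PAC, standard arguments (a change-of-measure / Wald-type inequality, cf.\ the LP lower bound in Step~(C) of the partially adaptive proof) show that for every $g\ne h$ that is consistent with the observed outcomes, $\sum_{a\in S_\ell} d(h,g;a) = \Omega(\log\delta^{-1})$, and in particular there must exist at least one action $a\in S_\ell$ with $\mu(h,a)\ne\mu(g,a)$. Hence the \emph{distinct} actions appearing in $S_\ell$ already separate $h$ from all hypotheses it is confused with. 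Build an ODT path that plays exactly the tests $\{T_a : a \text{ distinct in } S_\ell\}$: this has cost at most $|S_\ell|\cdot\lceil s^{-1}\log(|H|/\delta)\rceil$ — actually I should be careful and charge each action-play in $\mathcal{A}^*$ (cost $1$) against at most one ODT test (cost $\le \lceil s^{-1}\log(|H|/\delta)\rceil$), giving a path-by-path bound $\mathrm{cost}_{\mathrm{ODT}}(\text{path}) \le \lceil s^{-1}\log(|H|/\delta)\rceil\cdot\mathrm{cost}_{\mathcal{A}^*}(\text{path})$. Taking expectations over $h\sim\pi$ and the randomness of $\mathcal{A}^*$, and noting $\mathrm{ODT}^* \le \mathbb{E}[\mathrm{cost}_{\mathrm{ODT}}]$, gives $\mathrm{ODT}^* = O(s^{-1}\log(|H|/\delta))\cdot\OPT^{\FA}_\delta$.

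The main obstacle is making the "feed $\mathcal{A}^*$ a clean observation" step rigorous: $\mathcal{A}^*$ is a tree over noisy outcome sequences, and a single ODT test doesn't literally produce a noisy sample, so one cannot directly "run" $\mathcal{A}^*$ inside the ODT world. The fix is to not simulate $\mathcal{A}^*$ at all, but rather to \emph{analyze} its leaves as above: the only thing I extract from $\mathcal{A}^*$ is, for each hypothesis $h$, the distribution over action-multisets $S_\ell$ it plays, and I need (i) a lower bound $\mathbb{E}_h[|S_\ell|] \ge \Omega(s)\cdot(\text{something})$ tying $\mathcal{A}^*$'s cost to separation — this is exactly the change-of-measure inequality, and the factor $s$ is where $s$-separability enters — and (ii) the observation that distinct actions in $S_\ell$ form a valid "decision-tree-identifying" set for $h$, so an ODT tree of total expected cost $\le \lceil s^{-1}\log(|H|/\delta)\rceil\cdot\mathbb{E}_{h\sim\pi}\mathbb{E}_h[|S_\ell|] = \lceil s^{-1}\log(|H|/\delta)\rceil\cdot\OPT^{\FA}_\delta$ exists. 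Assembling an actual ODT \emph{tree} (not just a collection of paths) from these per-hypothesis action sets requires a small amount of care — one wants a single tree whose path under each $h$ uses only actions from $S_\ell(h)$ — but this can be handled by branching on outcomes greedily within the chosen action set, which is exactly the kind of routine bookkeeping I would defer to the appendix.
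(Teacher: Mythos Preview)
Your first instinct---simulate $\mathcal{A}^*$, replacing each play of action $a$ by a single ODT test $T_a$---is exactly the paper's route, and your retreat from it is the main misstep. The worry that ``a single ODT test doesn't literally produce a noisy sample'' is unfounded: once the deterministic test $T_a$ reveals $\omega=\mu(h^*,a)$, you may generate an independent $y\sim D_\omega$ and feed that to $\mathcal{A}^*$. Under true $h^*$ this $y$ is distributed exactly as the noisy observation $\mathcal{A}^*$ would have seen, so the simulated tree $\mathcal{T}$ has the same output distribution and the same expected number of nodes as $\mathcal{A}^*$; each node now carries cost at most $\lceil s^{-1}\log(|H|/\delta)\rceil$. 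This gives the cost inequality directly.

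What remains---and what your proposal is missing---is the paper's Lemma~\ref{lem:odt_delta}: in the ODT world, any $\delta$-PAC-valid (possibly incomplete) decision tree with $\delta<\tfrac12$ is in fact $0$-valid. The point is that ODT tests are \emph{deterministic}, so each hypothesis traces a unique root--leaf path; if some leaf $\ell$ has $|A_\ell|\geq 2$, then some $h\in A_\ell$ is returned with probability at most $\tfrac12$, and under that $h$ the PAC error is at least $\tfrac12$, a contradiction. This one-line pigeonhole is what certifies that $\mathcal{T}$ is a bona fide ODT solution, and hence $\mathrm{ODT}^*\le c(\mathcal{T})$.

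Your alternative ``analyze the leaves of $\mathcal{A}^*$'' argument does not supply this. The change-of-measure / Wald bound you invoke is an \emph{expectation} statement, not a per-leaf statement; it is simply false that the action multiset $S_\ell$ separates $h$ from every $g\neq h$ at \emph{every} leaf $\ell$ reached under $h$ (there can be bad leaves of total probability up to $\delta$ where $\mu(h,a)=\mu(g,a)$ for all $a\in S_\ell$). And the final ``assemble a single ODT tree from the per-hypothesis action sets'' step is not routine bookkeeping: the sets $S_\ell$ are random even for fixed $h$, and stitching them into one deterministic tree is precisely the difficulty that the simulation-plus-Lemma~\ref{lem:odt_delta} route sidesteps.
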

The above is established by showing how to convert a $\delta$-PAC-error fully adaptive \alg\ to a valid \dec\ tree, using only tests in $\{T_a\}$, and inflating the cost by a factor of $O(s^{-1}\log (|H|/\delta))$. 
Combining Proposition~\ref{prop:convert} with Equation (\ref{eqn:feb4}), we obtain
$GRE\leq O(s^{-1} \log \frac{|H|}{\delta}\log |H|)\cdot  \OPT^{FA}_\delta.$

Finally we remark that this analysis can easily be extended to general priors by reduction to the \emph{adaptive submodular ranking} (ASR) problem
\cite{navidi2020adaptive}, which captures ODT as a special case.
{\rev One may easily verify that the main theorem in \cite{navidi2020adaptive} implies that a (slightly different) greedy algorithm achieves $O(\log(|H|))$-approximation for the ODT problem with general prior, test costs, and an arbitrary number of branches in each test. 
Thus for general prior, the same analysis goes through if we first reduce ASHT to ASR, and then replace the greedy step (Step~\ref{step:greedy_fa} in Algorithm~\ref{alg:fa}) with the greedy criterion for ASR. }


\begin{algorithm}[t]
\caption{\bf{Fully Adaptive \Alg}}
\begin{algorithmic}[1]
\label{alg:fa}
\STATE{{\bf Input:} ASHT \ins\ $(H,A,\pi,\mu, \cal{D})$} and error $\delta\in (0,1/2)$.
\STATE{$H_{\mathrm{alive}}\leftarrow H$.} 
\quad\quad\quad\quad\quad\quad\quad\quad\quad\quad\quad\quad\quad\quad\quad\quad\quad\quad\quad\quad\quad\quad\quad\quad \% {\it Alive} \hypos.
\WHILE{$|H_{\mathrm{alive}}| \geq 2$} 
\STATE{$\widehat a \lar \argmax_{a\in A} \Big\{\min_{\omg\in \Omg_a} |H_{\mathrm{alive}}\bs T_a^\omg|\Big\}$.}\label{step:greedy_fa} \quad\quad\quad\quad\quad\quad\quad\quad\quad\quad\quad\quad \% Greedy step.
\STATE{$c(\hat{a}) \lar \lceil s(\hat{a})^{-1}\log (|H|/\delta) \rceil$.} 
\quad\quad\quad\quad\quad\quad \% \# times to boost for sufficient confidence.
\STATE{Select $\hat a$ for $c(\hat{a})$ times consecutively and observe outcomes $X_1,...,X_{c(\hat a)}$.}
\STATE{$\hat \mu \lar \sum_{i=1}^{c(\hat a)} X_i$.} \quad\quad\quad\quad\quad\quad\quad\quad\quad\quad\quad\quad\quad\quad\quad\quad\quad\quad\quad\quad\quad\quad \% Mean outcome.
\STATE{$\hat \omg \lar \argmin \{|\hat \mu - \omg|: \omg\in \Omg_a\}$}. \quad\quad\quad\quad\quad\quad\quad\quad\quad\quad\quad \% Round $\hat\mu$ to the closest $\omg$.
\STATE{$H_{\mathrm{alive}}\lar H_{\mathrm{alive}}\cap T_{\hat a}^{\hat \omg}.$} 
\quad\quad\quad\quad\quad\quad\quad\quad\quad\quad\quad\quad\quad\quad \% Update the alive \hypos.
\ENDWHILE
\end{algorithmic}
\end{algorithm}

\vspace{-3pt}
\section{Experiments}
\label{sec:experiments}
\vspace{-3pt}
Although our theoretic guarantees depend on the separability parameters $s$ (which was introduced by the boosting steps), in this section, we numerically
demonstrate that with small modifications our algorithms perform well when $s$ is small on both synthetic and real-world data.
Our primary benchmarks are a polynomial-time policy proposed by \cite{naghshvar2013active} (\emph{Policy 1}\footnote{\emph{Policy 2} in \cite{naghshvar2013active} does not have asymptotic guarantees and so is not considered in our experiments.}) and a completely random policy.
{\rev To our knowledge, the policy proposed by \cite{naghshvar2013active} is the state-of-art algorithm (with theoretical guarantees) that can be applied to our problem setup.}
%
%
The rest of this section is organized as follows: first, we describe the benchmark policies and the implementation of our own policies. Then in Section~\ref{sec:synthetic}, we describe the setup and results of our synthetic experiments. Finally, in Section~\ref{sec:cosmic}, we test the performance of our fully adaptive algorithm on a publicly-available dataset of genetic mutations for cancer---COSMIC~\cite{tate2019cosmic, cosmic2019}.  

\paragraph{Algorithm Details}
In all algorithms, we start with a uniform prior, and update our prior distribution (over the hypotheses space) each time an observation is revealed. Unless otherwise mentioned, the algorithm terminates if the posterior probability of a hypothesis is above the threshold $1-\delta$.

\textbf{Random Baseline } 
At each step, an action was uniformly chosen from the set of all actions. 

\textbf{Partially Adaptive }
We implement the partially adaptive algorithm described in Section~\ref{sec:partial}, with the modifications that 1) the amount of boosting is now a built-in feature of the algorithm, and 2) breaking ties according to some heuristic.
%
We describe the modified algorithm in Appendix~\ref{appdx:partially adaptive algorithm}.

\textbf{Fully Adaptive }
We implement our algorithm described in Section \ref{sec:fa}, with the modifications that 1) the amount of boosting is considered as a tunable parameter, 2) a hypothesis is only considered to be ruled out when we are deciding which action to perform, 3) we do not boost if no action can further distinguish any hypotheses in the alive set, 4) we break ties according to some heuristic. In particular, Modification 1) addresses the issues that our fully adaptive algorithm in Section~\ref{sec:fa} over-boosts. Modification b) controls the error probability $\delta$ when we decrease the amount of boosting. Modification c) handles small $s$ without increasing the boosting factor. We formally describe this modified algorithm in Appendix~\ref{appdx:fully adaptive algorithm}.


\textbf{NJ Algorithms } \emph{NJ Adaptive} \cite{naghshvar2013active} is a two-phase algorithm that 
solves a relaxed version of our problem, where the objective is to minimize a weighted sum of the expected number of tests and the likelihood of identifying the wrong hypothesis, i.e., $\min \mathbb{E}(T)+L e$, where $T$ is the termination time, $L$ is the penalty for a wrong declaration, and $e$ is the probability of making that wrong declaration. 
The problem was formulated as a Markov decision process whose state space is the posterior distribution over the hypotheses.
In Phase 1, which lasts as long as the posterior probability of all hypotheses is below a carefully chosen threshold, the action is sampled according to a distribution that is selected to maximize the minimum expected KL divergence among all pairs of outcome variables. 
In Phase 2, when one of the hypotheses has posterior probability above the chosen threshold, $r$, the action is sampled according to a distribution selected to maximize the minimum expected KL divergence between the outcome of this hypothesis and 
the outcomes of all other hypotheses.
This threshold was optimized over in both synthetic and real-world experiments.
The algorithm stops if the posterior of a hypothesis is above the threshold $1-L^{-1}$. 
\emph{NJ Partially Adaptive} contains only the Phase 1 policy.

\begin{figure}[t]
    \centering
    \includegraphics[width = 0.31\textwidth]{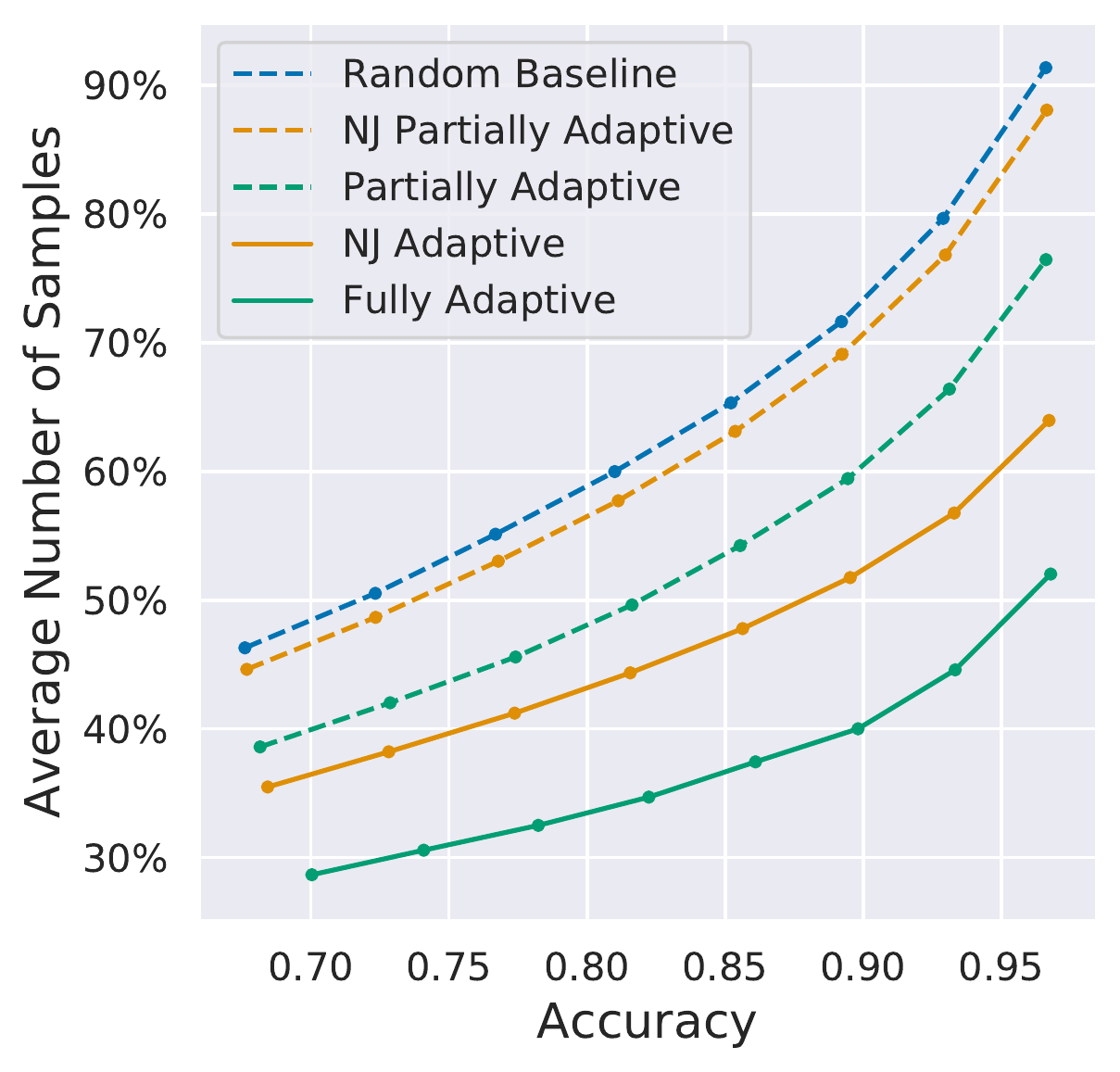}
    \includegraphics[width = 0.32\textwidth]{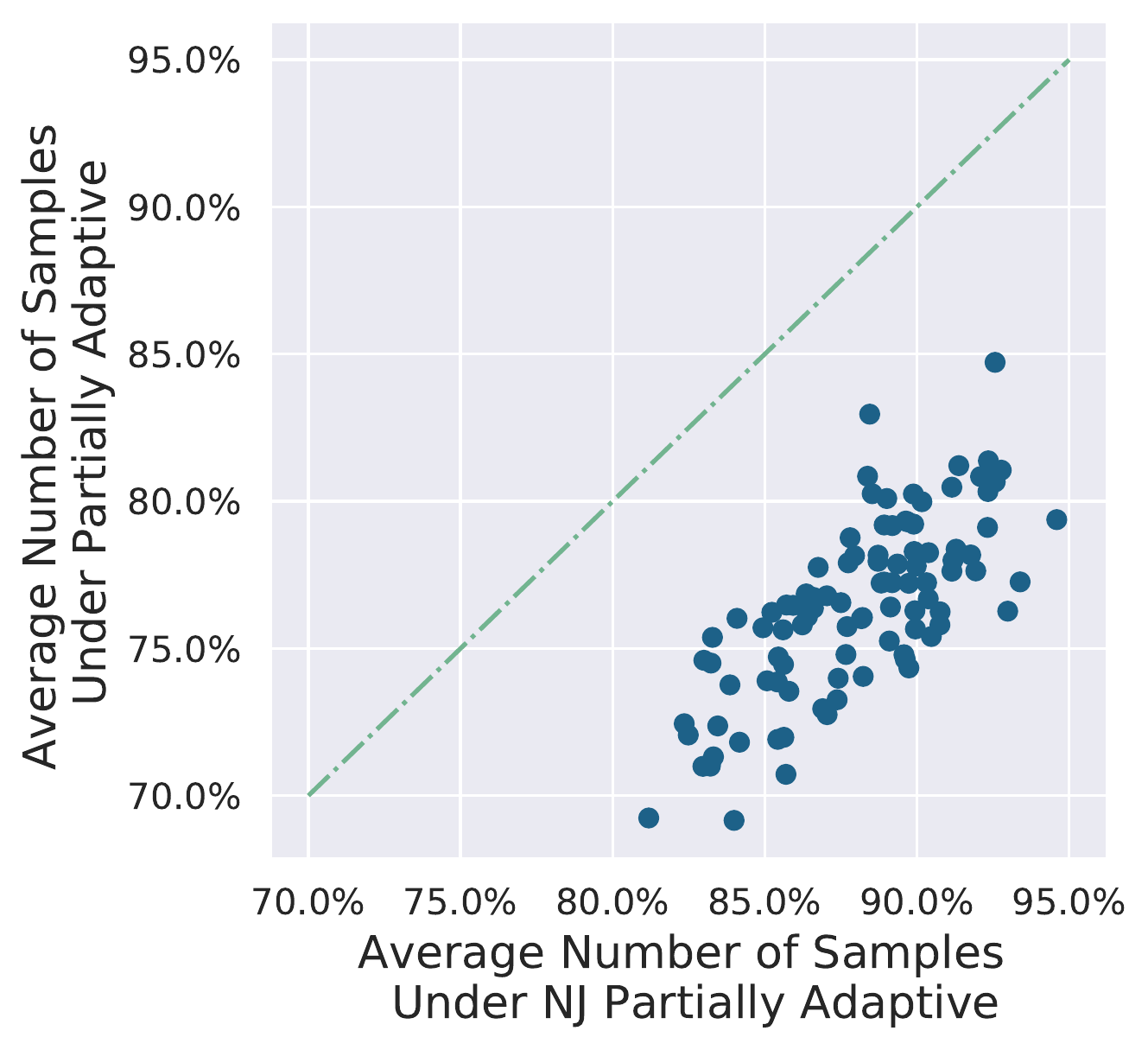}
    \includegraphics[width = 0.32\textwidth]{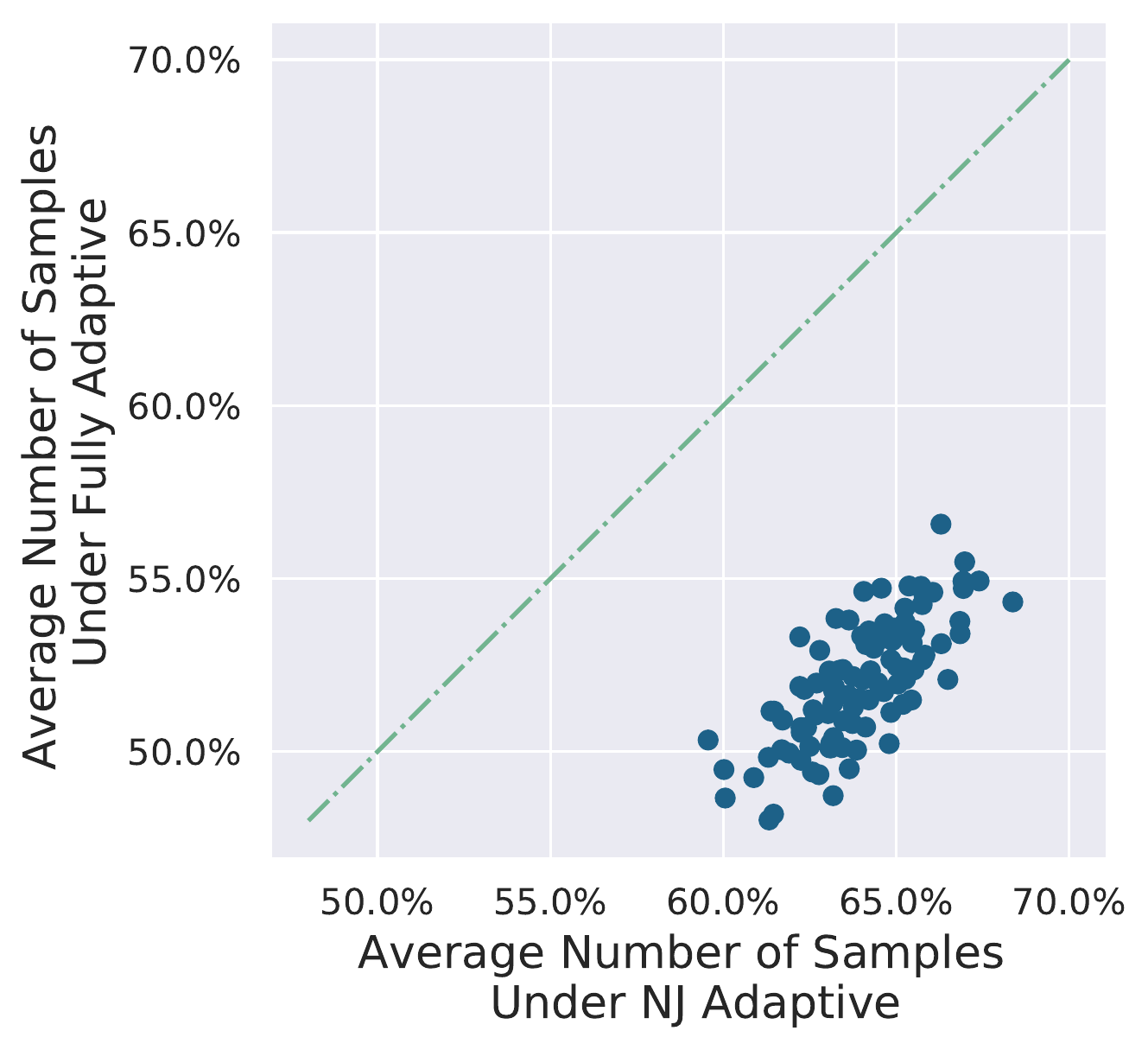}
    \vspace{-5pt}
    \caption{Comparison of our fully and partially adaptive algorithms with \emph{NJ Adaptive}, \emph{NJ Partially Adaptive} and \emph{Random Baseline} on synthetic data. The average number of samples is normalized with respect to the largest number of sample required in \emph{Random Baseline}.
    Left: each dot corresponds to the average performance of 100 randomly generated instances each averaged over 2,000 replications. Middle and Right: contains the same 100 instances in the left figure. Each dot  corresponds to one instance and each averaged over 2,000 replications.
    Middle and Right: 
    the average accuracies of those 100 instances in all algorithms equal to 0.97.
    %
    }
    \label{fig:partially_adaptive}
    \vspace{-10pt}
\end{figure}

\subsection{Synthetic Experiments}\label{sec:synthetic}
\paragraph{Parameter Generation and Setup  }
Figure~\ref{fig:partially_adaptive}
summarizes the results of our partially and fully adaptive experiments on synthetic data. 
Both figures
were generated with 100 instances: each with 25 hypotheses and 40 actions. The outcome of each action under each hypothesis is binary, i.e., the $D_{\mu(h,a)}$'s are the Bernoulli distributions, where 
$\mu(a, h)$ were \emph{uniformly} sampled from the [0,1] interval.
Each instance was then averaged over 2,000 replications, where a ``ground truth'' hypothesis was randomly drawn. The prior distribution, $\pi$, was initialized to be uniform for all runs. On the horizontal axis, the accuracies of both algorithms were averaged over these 100 instances, where the accuracy is calculated as the percentage of correctly identified hypotheses among the 2,000 replications. On the vertical axis, the number of samples used by the algorithm is first averaged over the 2,000 replications and then averaged over the 100 instances. 
\paragraph{Results  }
In Figure~\ref{fig:partially_adaptive} (left), we observe that 1) the performance of our fully adaptive algorithm dominates those of all other algorithms, 2) our partially adaptive algorithm outperforms all other partially adaptive algorithms, and 3) the performance of adaptive algorithms outperform those of partially adaptive algorithms. The threshold for entering Phase 2 policy in \emph{NJ Adaptive} was set to be 0.1. Indeed, we observe that \emph{NJ Adaptive} outperforms \emph{NJ Partially Adaptive}.
In Figure~\ref{fig:partially_adaptive} (middle), $\delta$ equals to 0.05 for both \emph{NJ Partially Adaptive} and \emph{Partially Adaptive}. We observe that our partially and fully adaptive algorithms outperform \emph{NJ Partially Adaptive} and \emph{NJ Adaptive}
instance-wise by large margins respectively in Figure~\ref{fig:partially_adaptive} middle and left.
\subsection{Real-World Experiments}\label{sec:cosmic}
\begin{figure}[t]
    \centering
    \includegraphics[width = 0.32\textwidth]{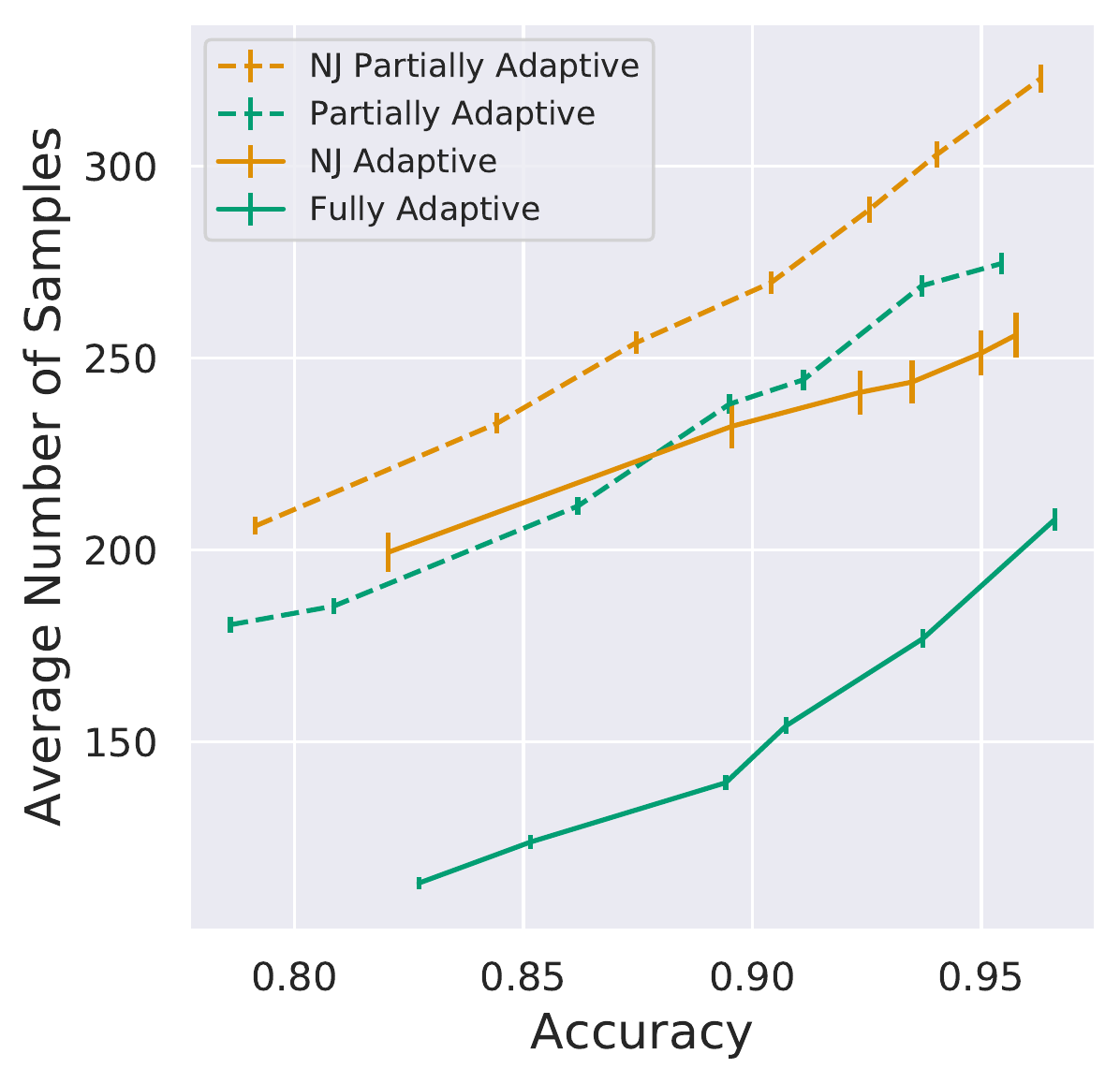}
    \includegraphics[width = 0.66\textwidth]{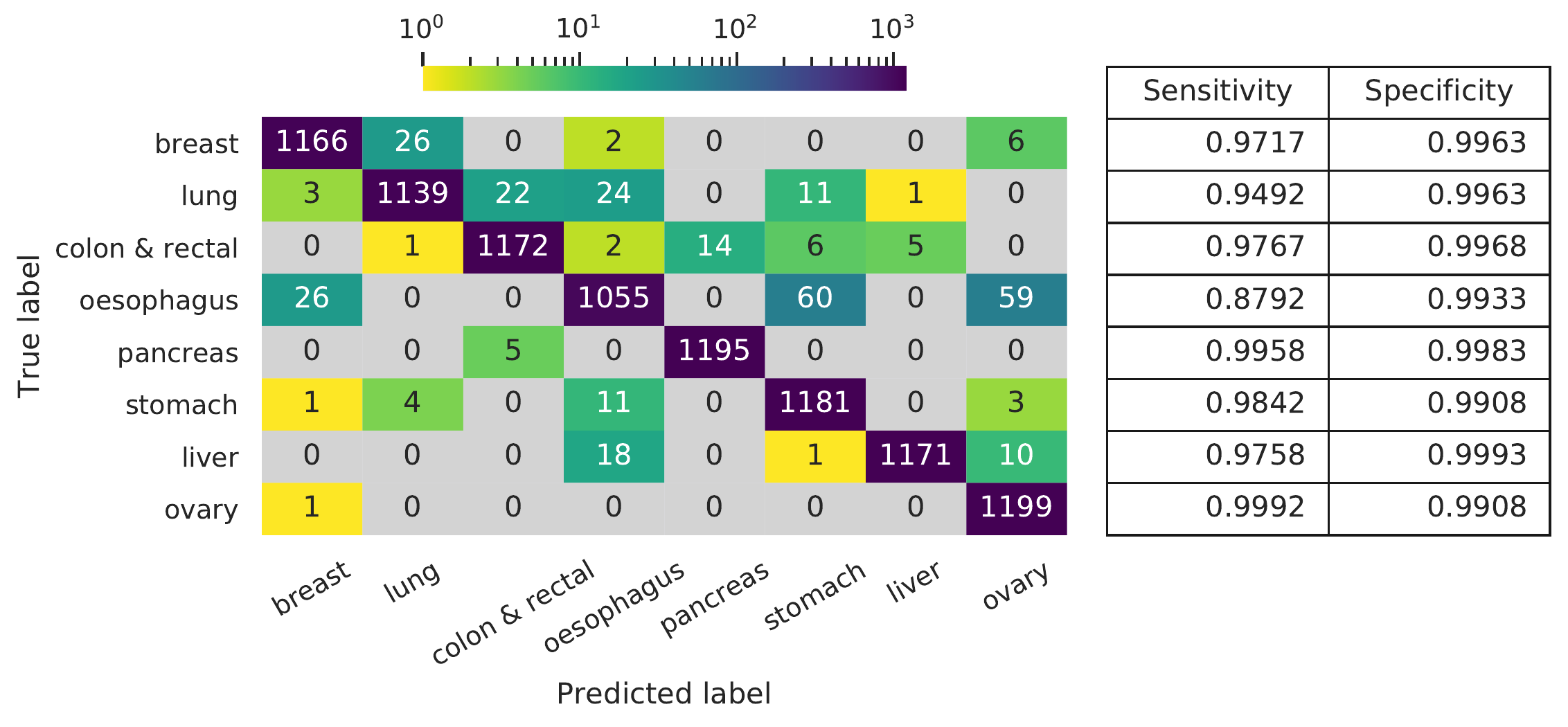}
    \vspace{-5pt}
    \caption{Comparison of our partially and fully adaptive algorithms with those of NJ's on real-world data---COSMIC. Left: each point is averaged over 9,600 replications. The error bars are the 95 percentage confidence intervals for the estimated means. Middle: the confusion matrix of \emph{Fully Adaptive} where the algorithm accuracy equals to 0.97, and each row sums up to 1,200. Left: the sensitivity and specificity our algorithm (middle) for each cancer type.}
\label{fig:cosmic}
    \vspace{-10pt}
\end{figure}

\paragraph{Problem Setup } Our real-world experiment is motivated by the design of DNA-based blood tests to detect cancer. In such a test, genetic mutations serve as potential signals for various cancer types, but DNA sequencing is, even today, expensive enough that the `amount' of DNA that can be sequenced in a single test is limited if the test is to remain cost-effective. For example, one of the most-recent versions of these tests \cite{cohen2018detection} involved sequencing just 4,500 {\em addresses} (from among 3 billion total addresses in the human genome), and other tests have had similar scale (e.g., \cite{razavi2017performance,chan2017analysis,phallen2017direct}).
Thus, one promising approach to the ultimate goal of a cost-effective test is adaptivity.

Our experiments are a close reproduction of the setup used by \cite{cohen2018detection} to identify their 4,500 addresses.
We use genetic mutation data from real cancer patients: the publicly-available \emph{catalogue of somatic mutations in cancer} (COSMIC)~\cite{tate2019cosmic, cosmic2019}, which includes the de-identified gene-screening panels for 1,350,015 patients.
We treated 8 different types of cancer (as indicated in \cite{cohen2018detection}) as the 8 hypotheses, and identified 1,875,408 potentially mutated genetic addresses. %
To extract the tests,
we grouped the the genetic addresses within an interval of 45 (see \cite{cohen2018detection} for the biochemical reasons behind this choice), resulting in 581,754 potential tests. We then removed duplicated tests (i.e., the tests that share the same outcome distribution for all 8 cancer types), 
resulting in 23,135 final tests that we consider in our experiments. {\rev Note that the duplicated tests can be removed here since they are exchangeable in our problem setting. However, the set of final tests might be different for a different set of ground-truth cancer types.}
From the data, we extracted a ``ground-truth'' table of mutation probabilities containing the likelihood of a  mutation 
in any of the 23,135 genetic address intervals being found in patients with any of the 8 cancer types. This served as the instance for our experiment.
The majority of the mutation probabilities in our instances was either zero or some small positive number. To calculate the KL divergence between these probabilities, we replace zero with the number $10^{-10}$ in our instance.

\paragraph{Results } Although in reality, all patients have different priors for having different cancers, in our experiments, we assume that
the truth hypothesis (cancer type) was drawn uniformly, and we initialize uniform priors for all algorithms.
Similar to Figure~\ref{fig:partially_adaptive}, in Figure~\ref{fig:cosmic} (left) we observe that 1) the performance of our algorithm dominates those to the rest algorithms, and 2) our partially adaptive algorithm outperforms \emph{NJ Partially Adaptive}. However, unlike Figure~\ref{fig:partially_adaptive}, we observe that \emph{NJ Adaptive} underperforms \emph{Partially Adaptive} when the accuracies are low on this instance. The threshold for entering Phase 2 policy, $r$, in \emph{NJ Adaptive} was set to be 0.3. Since Phase 1 policy is less efficient than Phase 2 policy, we observe that the performance of \emph{NJ Adaptive} is convex with respect to $r$---when $r$ is small, the algorithm is more likely to alternate between Phases 1 and 2 policies and when $r$ is large we spend more time in Phase 1 policy. As a result, we observe that variance of \emph{NJ Adaptive} is relatively high when compared with those of other algorithms.
Note that due to the nature of the sparsity of our instance, the performance of the random baseline was very poor when compared with these of \emph{NJ Adaptive} and \emph{Fully Adaptive} and thus was excluded.
Figure~\ref{fig:cosmic} (middle) is the confusion matrix corresponding to our fully adaptive algorithm where the algorithm accuracy equals to 0.97, and Figure~\ref{fig:cosmic} (right) corresponds to the sensitivity and specificity of our algorithm for each cancer type (in the same ordering as) in the middle figure. 
\vspace{-3pt}
\section{Conclusions}\label{sec:lim}
\vspace{-4pt}
In this work we provided the first approximation guarantees for the ASHT problem and demonstrated the efficiency of the proposed \alg s through numerical experiments on genetic mutation data.
Under the current framework, it is challenging to improve the $s^{-1}$ dependence on $s$ in the \apxn\  factors, since we have to boost each action for $s^{-1}$ times to apply the concentration bounds. 
However, in our numerical experiments, by reducing the number of times for boosting, we achieved better performance when compared with existing heuristics.
\newpage
\begin{ack}
We would like thank the anonymous reviewers for their careful reviews, and we declare no conflict of interests. 
\end{ack}
\bibliographystyle{abbrv}
\bibliography{references}
\newpage
\appendix
\section{Prerequisite: Subgaussian Random Variables}\label{apdx:prereq}
We will consider the commonly used \sg\ \distr s (\cite{vershynin2018high}). 
Loosely speaking, a \rv\ is \sg\ if its tail vanishes at a rate faster than some Gaussian distributions. 
\bdefn[\Sg\ norm]
Let $X$ be a \rv, its {\it \sg\ norm} is defined as
$\|X\|_{\psi_2}:= \inf \{t: \ho{E} [e^{X^2/t^2}] \leq 2\}$.
Moreover, $X$ is called {\it \sg} if $\|X\|_{\psi_2}<\infty$.
\edefn
Many commonly used \distr s \sat\ this \assu, e.g., Bernoulli, uniform, and  Gaussian \distr s. 
We introduce a standard concentration bound for \sg\ \rv s.
\begin{theorem}[Hoeffding Inequality~\cite{vershynin2018high}]\label{thm:hoeffding}
Let $X_1,...,X_n$ be \indep\ \sg\ \rv s.
Then for any $\eta >0$, it holds that 
\[\ho{P}\left[\left|\sum_{i=1}^n X_i - \sum_{i=1}^n \ho{E}(X_i)\right| \geq \eta \right]\leq 2\exp\left(-\frac{2 \eta^2}{\sum_{i=1}^n \|X_i\|_{\psi_2}^2}\right).\]
\end{theorem}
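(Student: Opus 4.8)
The plan is to run the classical Chernoff (exponential moment) argument. First I would reduce to centered variables: set $Y_i := X_i - \mathbb{E}[X_i]$, so that the event in question is $\{|\sum_i Y_i| \ge \eta\}$, and invoke the standard \emph{centering} fact that $\|Y_i\|_{\psi_2} \le c_0\,\|X_i\|_{\psi_2}$ for an absolute constant $c_0$. This follows from the triangle inequality for the $\psi_2$-norm together with $\|\mathbb{E}[X_i]\|_{\psi_2} \lesssim \mathbb{E}|X_i| \lesssim \|X_i\|_{\psi_2}$ (the latter being a direct consequence of the defining inequality $\mathbb{E}[e^{X_i^2/\|X_i\|_{\psi_2}^2}]\le 2$). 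Hence it suffices to control $\mathbb{P}[\sum_i Y_i \ge \eta]$ in terms of $\sum_i \|Y_i\|_{\psi_2}^2$, and then symmetrize.

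The key analytic ingredient is a moment generating function bound for a single centered subgaussian variable: there is an absolute constant $C$ with $\mathbb{E}[e^{\lambda Y_i}] \le \exp\!\big(C\lambda^2 \|Y_i\|_{\psi_2}^2\big)$ for all $\lambda \in \mathbb{R}$. I would establish this by expanding $e^{\lambda Y_i}$ as a power series and using the moment growth bound $\mathbb{E}[|Y_i|^p] \le (C' p)^{p/2}\|Y_i\|_{\psi_2}^p$, which one gets by applying Markov's inequality to $\mathbb{E}[e^{Y_i^2/\|Y_i\|_{\psi_2}^2}]\le 2$ and integrating the resulting tail estimate; crucially, the degree-one term of the series drops out because $\mathbb{E}[Y_i]=0$, and summing the remainder yields the Gaussian-type bound.

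Next I would tensorize: by independence, $\mathbb{E}[e^{\lambda\sum_i Y_i}] = \prod_i \mathbb{E}[e^{\lambda Y_i}] \le \exp\!\big(C\lambda^2 \sum_i \|Y_i\|_{\psi_2}^2\big)$, so that $\sum_i Y_i$ is again subgaussian with proxy variance proportional to $\sum_i \|Y_i\|_{\psi_2}^2$. The exponential Markov inequality then gives $\mathbb{P}[\sum_i Y_i \ge \eta] \le e^{-\lambda\eta}\,\mathbb{E}[e^{\lambda\sum_i Y_i}] \le \exp\!\big(-\lambda\eta + C\lambda^2\sum_i\|Y_i\|_{\psi_2}^2\big)$; choosing $\lambda = \eta/(2C\sum_i\|Y_i\|_{\psi_2}^2)$ produces a bound of the form $\exp\!\big(-c\,\eta^2/\sum_i\|X_i\|_{\psi_2}^2\big)$ once the centering constant $c_0$ is folded in. Applying the same estimate to $-Y_i = \sum_i(-Y_i)$ (whose $\psi_2$-norms are unchanged) and taking a union bound over $\{\sum_i Y_i \ge \eta\}$ and $\{-\sum_i Y_i \ge \eta\}$ yields the factor $2$ and the two-sided statement.

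I expect the only genuine work to be bookkeeping the absolute constants so the exponent lands exactly at $2\eta^2/\sum_i\|X_i\|_{\psi_2}^2$: each individual step (the centering lemma, the single-variable MGF estimate, tensorization, the Chernoff optimization) is routine, but the constant propagated through the moment series and the centering step must be tracked with care, or else one simply records the inequality with an unspecified absolute constant $c>0$ in place of $2$, as is conventional. Note that the independence hypothesis enters only in the tensorization step; everywhere else the argument is per-variable.
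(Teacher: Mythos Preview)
The paper does not prove this theorem at all: it is stated as a cited prerequisite from \cite{vershynin2018high} in Appendix~\ref{apdx:prereq} and used as a black box. Your proposal is the standard textbook argument (precisely the one Vershynin gives), so there is nothing to compare against; your sketch is correct, and your caveat about the constant is apt---Vershynin's version carries an unspecified absolute constant $c$ in the exponent rather than exactly $2$, and indeed the centering step alone makes the specific value $2$ unlikely to survive without a sharper single-variable MGF bound than the moment-series argument delivers.
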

To show the correctness of our \alg, we need to consider the \emph{log-likelihood ratio} (LLR), formally defined as follows:
\bdefn
For any $a\in A$ and $h,g\in H$, define $Z(h,g;a)=\log \frac{\ho{P}_{h,a}(\xi)}{\ho{P}_{g,a}(\xi)}$ where $\xi\sim D_{\mu(h,a)}$. 
\edefn
We will assume that the subgaussian norm of the LLR between two \hypos\ is not too large when compared to the difference of their \pmt s, as formalized below:
\bdefn
Let $\rho>0$ be the minimal number s.t. for any pair of distinct  hypotheses $h,g\in H$ and action $a\in A$, it holds that $\|Z(h,g;a)\|_{\psi_2}\leq \rho \cdot |\mu(g,a) - \mu(h,a)|$.
\edefn
We will present an error analysis for general $\rho$. 
Prior to that, we first point out that 
many common \distr s \sat\ $\rho=O(1)$.

\noindent{\bf Examples.}
It is \strfwd\ to verify that $\rho=O(1)$ for the following common \distr s:
\bitem 
\item Bernoulli \distr s: $D_\theta = Ber(\theta)$ where $\theta \in [\theta_{min},\theta_{max}]$
for constants $\theta_{min},\theta_{\max}\in (0,1)$, and 
\item Gaussian \distr s: $D_\theta = N(\theta,1)$ where $\theta \in [\theta_{min},\theta_{max}]$
for constants $\theta_{min}< \theta_{\max}$.
\eitem 
Take Bernoulli \distr\ as an example. 
Fix any \hypos\ $h,g\in H$ and action $a\in A$, write $\Delta = \mu(h,a) - \mu(g,a)$. 
Then, $Z=Z(h,g;a)$ can be rewritten as
\[Z =
\begin{cases}
\log (1+\frac{\Delta}{\mu(g,a)}), \quad &\text{ w.p. } \mu(h,a),\\
\log (1-\frac{\Delta}{1-\mu(g,a)}), \quad &\text{ w.p. } 1-\mu(h,a).
\end{cases}\]
Since $0<\theta_{min}\leq \mu(g,a) \leq \theta_{\max}<1$, we have $|Z| \leq C|\Delta|$  almost surely 
where $C=2\max\{(1-\theta_{max})^{-1},\theta_{min}^{-1}\}$.
Moreover, it is known that (see \cite{vershynin2018high}) any \sg\ \rv\ $Z$ \sats\ $\|Z\|_{\psi_2} \leq \frac 1{\ln 2} \|Z\|_{\infty}$,
so \ift\ 
\[\|Z\|_{\psi_2} \leq \frac 1{\ln 2} \|Z\|_{\infty} \leq \frac {C\Delta}{\ln 2}=O(\Delta).\]
Thus $\rho = O(1)$.



\section{Proof of Proposition~\ref{prop:pa}}\label{sec:pf_pa}
\subsection{Error Analysis}\label{sec:error_analysis}
We first prove that at each timestamp $\tau(h)$, with high \prb\ our algorithm terminates and returns $h$.
\begin{lemma}\label{lem:correctness}
Let $B>0$.
If $h\in H$ is the true \hypo, then w.p. $1-e^{-\Omg(\rho^{-2} \alpha B)}$, it holds $\log \Lambda(h,g;\tau(h))\geq \frac{1}{2}\alpha B$ for all $g\neq h$.
\end{lemma}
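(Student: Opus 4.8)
The plan is to bound the log-likelihood ratio $\log\Lambda(h,g;\tau(h))$ for a fixed competitor $g\neq h$ by a sub-Gaussian tail bound and then union bound over $g$. Note first that the timestamp $\tau(h)=\alpha\cdot\CT(f^B_h,\sigma)$ is deterministic given $h$ (both $\sigma$ and $f^B_h$ are fixed), so the relevant outcomes $y_1,\dots,y_{\tau(h)}$ form a fixed-length sequence of independent draws with $y_i\sim D_{\mu(h,\tilde\sigma(i))}$. Writing $t_h:=\CT(f^B_h,\sigma)$, the boosted run $\tilde\sigma$ restricted to $[1,\tau(h)]$ plays each of $\sigma(1),\dots,\sigma(t_h)$ exactly $\alpha$ times, so $\log\Lambda(h,g;\tau(h))=\sum_{i=1}^{\tau(h)}Z_i$ with the $Z_i$ independent, $Z_i$ distributed as $Z(h,g;\tilde\sigma(i))$; by the definition of $\rho$ each $Z_i$ is sub-Gaussian with $\|Z_i\|_{\psi_2}\le\rho\,|\mu(g,\tilde\sigma(i))-\mu(h,\tilde\sigma(i))|$, and $\mathbb{E}[Z_i]=\KL\!\big(D_{\mu(h,\tilde\sigma(i))},D_{\mu(g,\tilde\sigma(i))}\big)$.

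Next I pin down the mean and the variance proxy. Since $f^B_h(\{\sigma(1),\dots,\sigma(t_h)\})=1$ and $f^B_h$ is an average of terms each at most $1$, every term equals $1$; in particular $\sum_{t\le t_h}d(g,h;\sigma(t))\ge B$. Hence, using Assumption~\ref{assumption:KL} to pass between the two orderings of the KL divergence (folding the absolute constants $C_1,C_2$ into $\Omega(\cdot)$),
$$\mathbb{E}\big[\log\Lambda(h,g;\tau(h))\big]=\alpha\sum_{t\le t_h}\KL\!\big(D_{\mu(h,\sigma(t))},D_{\mu(g,\sigma(t))}\big)\ge\alpha B.$$
For the variance proxy, Assumption~\ref{assumption:KL} gives $\sum_i\|Z_i\|_{\psi_2}^2\le\rho^2\sum_i(\mu(g,\tilde\sigma(i))-\mu(h,\tilde\sigma(i)))^2\le\rho^2 C_2\,\alpha\sum_{t\le t_h}d(g,h;\sigma(t))$, and the point is that this is $O(\rho^2)$ times the mean above — the variance proxy is dominated by the mean up to the factor $\rho^2$, so the accumulated divergence overshooting $B$ costs nothing. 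Now apply Hoeffding's inequality (Theorem~\ref{thm:hoeffding}) with deviation equal to half the mean: with $m:=\mathbb{E}[\log\Lambda(h,g;\tau(h))]\ge\alpha B$ and $V:=\sum_i\|Z_i\|_{\psi_2}^2\le c\rho^2 m$ for an absolute constant $c$,
$$\mathbb{P}\big[\log\Lambda(h,g;\tau(h))<\tfrac12 m\big]\le 2\exp\!\Big(-\frac{2(m/2)^2}{V}\Big)\le 2\exp\!\Big(-\frac{m}{2c\rho^2}\Big)\le\exp\!\big(-\Omega(\rho^{-2}\alpha B)\big),$$
and since $\tfrac12 m\ge\tfrac12\alpha B$, on the complementary event $\log\Lambda(h,g;\tau(h))\ge\tfrac12\alpha B$. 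A union bound over the $|H|-1$ choices of $g$ finishes the argument; the resulting $|H|$ prefactor is either absorbed into the exponent (when $\alpha B\gtrsim\rho^2\log|H|$) or carried through as the $|H|\exp(-\Omega(\alpha B))$ error term of Proposition~\ref{prop:pa}.

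The main obstacle is the middle step: recognizing that the sub-Gaussian variance proxy should be compared against the \emph{mean} log-likelihood ratio rather than against $B$, so that there is no loss when the accumulated KL divergence exceeds $B$ — if one instead tried to truncate at each $g$'s individual cover time, the non-monotonicity of the likelihood ratio over the remaining actions would break the argument. The only other fussy point is the bookkeeping between the two directions of the KL divergence (the cover function uses $d(g,h;\cdot)$ while the LLR's drift is $\KL(D_{\mu(h,\cdot)},D_{\mu(g,\cdot)})$), which Assumption~\ref{assumption:KL} reconciles up to constants.
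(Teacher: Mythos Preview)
Your argument is correct and follows the same skeleton as the paper: write $\log\Lambda(h,g;\tau(h))=\sum_iZ_i$, apply Hoeffding with deviation half the mean, control $\sum_i\|Z_i\|_{\psi_2}^2$ via $\|Z_i\|_{\psi_2}\le\rho|\Delta_i|$ and Assumption~\ref{assumption:KL}, then union bound over $g$. The one meaningful difference is how the variance proxy is tied back to $\alpha B$. The paper argues that the accumulated divergence itself satisfies $\sum_{t\le t_h}d(h,g;\sigma(t))\le 2B$, so that $V\le 2C_2\rho^2\alpha B$ directly; you instead observe $V\le c\rho^2 m$ and let the ratio $m^2/V\ge m/(c\rho^2)\ge \alpha B/(c\rho^2)$ do the work. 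Your version is the more robust of the two: for a competitor $g$ whose KL accumulates well before the global cover time $t_h=\CT(f_h^B,\sigma)$, the raw sum $\sum_{t\le t_h}d(h,g;\sigma(t))$ can far exceed $2B$, so the paper's intermediate bound need not hold as stated, whereas in your bookkeeping any overshoot inflates $m$ and $V$ proportionally and the Hoeffding exponent only improves. You also make explicit the switch between $d(g,h;\cdot)$ (which defines $f_h^B$) and $d(h,g;\cdot)$ (the drift of the LLR under $h$), which the paper leaves implicit.
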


\begin{proof}
Let $\tilde\sigma=(a_1,a_2,...)$ be the \xulie\ {\it after} the boosting step, so $a_1=...=a_\alpha, a_{\alpha +1} = ...=a_{2\alpha}$, so on so forth.
Write $Z_i=Z(h,g;a_i)$, then for any $t\geq 1$, it holds $\log \Lambda(h,g;t) = \sum_{i=1}^t Z_i$.
By the definition of cover time, $\sum_{i=1}^{\tau(h)} d(h,g;a_i)
= \sum_{i=1}^{\tau(h)} \ho{E}(Z_i) \geq \alpha B.$
Thus, 
\begin{align}
\ho{P}_h \left[\log\Lambda\big(h,g;\tau(h)\big)< \frac{1}{2}\alpha B\right]
&=\ho{P}_h\left[\sum_{i=1}^{\tau(h)} Z_i < \frac{1}{2}\alpha B\right] \notag\\
&\leq \ho{P}_h\left[\left|\sum_{i=1}^{\tau(h)} Z_i - \sum_{i=1}^{\tau(h)} \ho{E}(Z_i)\right| > \frac{1}{2}\sum_{i=1}^{\tau(h)} \ho{E}(Z_i)\right].\label{eqn:oct13}
\end{align}
By Theorem~\ref{thm:hoeffding},
\beqn\label{eqn:oct10}
\text{Equation } (\ref{eqn:oct13})\leq \exp \left(-\Omg\left(\frac{(\alpha B)^2}{\sum_{i=1}^{\tau(h)} \|Z_i\|_{\psi_2}^2}\right)\right).
\eeqn
We next show that $\sum_{i=1}^{\tau(h)} \|Z_i\|_{\psi_2}^2 \leq O(\rho^{2} \alpha B)$.
Write $\Delta_i = \mu(h,a_i) - \mu(g,a_i)$, then by Assumption~\ref{assu1}, $\Delta_i^2 \leq C_2 \cdot d(h,g;a_i)$. 
Note that $\|Z_i\|_{\psi_2}\leq \rho \Delta_i$, so \ift\
\begin{align}\label{eqn:may27}
\sum_{i=1}^{\tau(h)} \|Z_i\|_{\psi_2}^2 \leq \rho^2 \sum_{i=1}^{\tau(h)} \Delta_i^2 \leq C_2 \rho^2 \sum_{i=1}^{\tau(h)} d(h,g;a_i). 
\end{align}
Recall that $\sigma$ is the \xulie\ {\it before} boosting. 
Write $t=CT(f_h^B,\sigma)$ for simplicity.
By definition of cover time, 
\begin{align*}
\sum_{i=1}^{\alpha t} d(h,g;a_i)\geq \alpha B \geq \sum_{i=1}^{\alpha(t-1)} d(h,g;a_i).
\end{align*}
Note that $\tau(h)=\alpha t$, so \[\sum_{i=1}^{\alpha t} d(h,g;a_i) \leq 2\sum_{i=1}^{\alpha (t-1)} d(h,g;a_i) \leq 2\alpha B.\]
Combining the above with Equation (\ref{eqn:may27}), we have
\[\sum_i \|Z_i\|_{\psi_2}^2 \leq 2C_2 \rho^2 \alpha B.\]
Substituting into Equation (\ref{eqn:oct10}), we obtain 
\[\ho{P}_h \left[\log\Lambda\big(h,g;\tau(h)\big)< \frac{1}{2}\alpha B\right] \leq e^{-\Omg(\rho^{-2} \alpha B)}.\] 
The proof completes by applying the union bound over all $g\in H\bs \{h\}$.
\end{proof}

By a similar approach we may also show that it is unlikely that the \alg\ terminates at a wrong time stamp before scanning the correct one.
\begin{lemma}\label{lem:correctness2}
Let $B>0$. 
If $h\in H$ is the true \hypo, then for any $g\neq h$, it holds that $\log \Lambda(g,h;\tau(g))< \frac{1}{2}\alpha B$ with probability $1-e^{-\Omg(\rho^{-2}\alpha B)}$.
\end{lemma}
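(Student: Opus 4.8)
The plan is to run the concentration argument from the proof of Lemma~\ref{lem:correctness}, but applied to the log-likelihood ratio \emph{in the wrong direction}, namely $\log\Lambda(g,h;\cdot)$. Fix $g\neq h$, and let $\tilde\sigma=(a_1,a_2,\dots)$ be the boosted sequence; here $\tau(g)=\alpha\cdot\mathrm{CT}(f^B_g,\sigma)$ is a fixed index, not a random stopping time, since $\sigma$ is produced by a deterministic greedy rule. Because $h$ is the true hypothesis, the outcome $y_i$ of action $a_i$ is drawn from $D_{\mu(h,a_i)}$, so putting $W_i:=\log\bigl(\ho{P}_{g,a_i}(y_i)/\ho{P}_{h,a_i}(y_i)\bigr)$ we get $\log\Lambda(g,h;\tau(g))=\sum_{i=1}^{\tau(g)}W_i$, where each $W_i$ has the same law as $-Z(h,g;a_i)$; in particular $\ho{E}_h[W_i]=-d(h,g;a_i)\le 0$ and $\|W_i\|_{\psi_2}=\|Z(h,g;a_i)\|_{\psi_2}\le\rho\,|\Delta_i|$ with $\Delta_i:=\mu(h,a_i)-\mu(g,a_i)$. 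Write $D:=\sum_{i=1}^{\tau(g)}d(h,g;a_i)=-\ho{E}_h[\sum_{i=1}^{\tau(g)}W_i]$.

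First I would dispatch the degenerate case $D=0$: then $D_{\mu(h,a_i)}=D_{\mu(g,a_i)}$ for every $i\le\tau(g)$, so each $W_i$ is $0$ almost surely and $\log\Lambda(g,h;\tau(g))=0<\tfrac12\alpha B$ deterministically. Assume henceforth $D>0$. By Assumption~\ref{assumption:KL} we have $\Delta_i^2\le C_2\,d(h,g;a_i)$, hence $\sum_{i=1}^{\tau(g)}\|W_i\|_{\psi_2}^2\le\rho^2\sum_i\Delta_i^2\le C_2\rho^2 D$. Applying Theorem~\ref{thm:hoeffding} to the independent subgaussian variables $W_1,\dots,W_{\tau(g)}$ yields
\[
\ho{P}_h\left[\log\Lambda(g,h;\tau(g))\ge\tfrac12\alpha B\right]=\ho{P}_h\left[\sum_i W_i-\ho{E}_h\sum_i W_i\ge\tfrac12\alpha B+D\right]\le 2\exp\left(-\frac{2\bigl(\tfrac12\alpha B+D\bigr)^2}{C_2\rho^2 D}\right).
\]

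The one point that needs care — and the reason a verbatim copy of the Lemma~\ref{lem:correctness} argument is not enough — is that there is no a priori upper bound on $D$: the cover time $\mathrm{CT}(f^B_g,\sigma)$ may be driven by a pair other than $(h,g)$, so $(h,g)$ can accumulate an arbitrarily large amount of divergence by time $\tau(g)$, which would inflate the subgaussian-variance bound $C_2\rho^2 D$. The resolution is that a large $D$ simultaneously makes the mean $\ho{E}_h[\sum_i W_i]=-D$ very negative, and AM--GM balances the two effects: $\bigl(\tfrac12\alpha B+D\bigr)^2\ge 4\cdot\tfrac12\alpha B\cdot D=2\alpha B D$, so $\bigl(\tfrac12\alpha B+D\bigr)^2/D\ge 2\alpha B$ uniformly in $D$, and the displayed bound becomes $2\exp(-4\alpha B/(C_2\rho^2))=e^{-\Omega(\rho^{-2}\alpha B)}$, which is the claim. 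For the overall error bound in Proposition~\ref{prop:pa} one then union-bounds this estimate over all $g\neq h$ and combines with Lemma~\ref{lem:correctness}, concluding that when $h$ is true, with probability $1-|H|e^{-\Omega(\rho^{-2}\alpha B)}$ the algorithm never stops early at a wrong timestamp $\tau(g)$ and does stop with output $h$ at timestamp $\tau(h)$.
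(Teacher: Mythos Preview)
Your argument is correct. The paper does not give a detailed proof of this lemma, merely noting that it follows ``by a similar approach'' to Lemma~\ref{lem:correctness}; your write-up carries this out and, importantly, identifies and handles the one genuine asymmetry between the two lemmas: at the foreign timestamp $\tau(g)$ there is no a~priori upper bound on $D=\sum_{i\le\tau(g)} d(h,g;a_i)$, so the variance bound $\sum\|W_i\|_{\psi_2}^2\le C_2\rho^2 D$ cannot be capped by $O(\rho^2\alpha B)$ as in Lemma~\ref{lem:correctness}. Your AM--GM step $\bigl(\tfrac12\alpha B+D\bigr)^2\ge 2\alpha B\,D$ is exactly the right device to absorb this, yielding the claimed $e^{-\Omega(\rho^{-2}\alpha B)}$ uniformly in $D>0$, and the degenerate case $D=0$ is handled cleanly.
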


We are able to bound the error of the RnB algorithm by combining Lemma~\ref{lem:correctness} and Lemma~\ref{lem:correctness2}.
\begin{proposition}
For any true \hypo\ $h\in H$, algorithm $RnB(B,\alpha)$ returns $h$ with \prb\ at least $1-|H|e^{-\Omg(\rho^{-2}\alpha B)}$.
In \parti, if the outcome \distr\ $D_\mu$ is $Ber(\mu)$, then $\rho=O(1)$ and the above \prb\ becomes $1-|H|e^{-\Omg(\alpha B)}$.
\end{proposition}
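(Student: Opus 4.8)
The plan is to bound the failure probability by a union bound over the three ways the algorithm can fail to output $h$: (i) it terminates at the timestamp $\tau(h)$ but returns some $g \ne h$ because $\log\Lambda(h,g;\tau(h)) < \frac12\alpha B$ for that $g$; (ii) it terminates at some earlier timestamp $\tau(g)$ with $g \ne h$, returning $g$; or (iii) it never terminates at all. Cases (i) and (ii) are exactly what Lemmas~\ref{lem:correctness} and \ref{lem:correctness2} control, so the bulk of the work is already done — the remaining task is to organize the union bound carefully and argue that case (iii) is subsumed.

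\textbf{Step 1: Reduce to the two lemmas.} Condition on $h$ being the true hypothesis. By Lemma~\ref{lem:correctness}, the event $E_0 := \{\log\Lambda(h,g;\tau(h)) \geq \frac12\alpha B \text{ for all } g \ne h\}$ fails with probability at most $e^{-\Omega(\rho^{-2}\alpha B)}$. By Lemma~\ref{lem:correctness2}, for each fixed $g \ne h$ the event $E_g := \{\log\Lambda(g,h;\tau(g)) < \frac12\alpha B\}$ fails with probability at most $e^{-\Omega(\rho^{-2}\alpha B)}$. Apply the union bound over the at most $|H|$ events $E_0, \{E_g\}_{g \ne h}$: with probability at least $1 - |H|\,e^{-\Omega(\rho^{-2}\alpha B)}$, all of them hold simultaneously.

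\textbf{Step 2: Show that on $E_0 \cap \bigcap_{g\ne h} E_g$ the algorithm returns $h$.} Here I would trace through the termination logic of Algorithm~\ref{alg:rnb}. The timestamps $\alpha\cdot\CT(f^B_{h'},\sigma)$ are scanned in increasing order of $\CT(f^B_{h'},\sigma)$. For any $g \ne h$ whose timestamp precedes (or equals) that of $h$: on $E_g$ we have $\log\Lambda(g,h;\tau(g)) < \frac12\alpha B$, so the termination condition ``$\log\Lambda(g,g';\tau(g)) \geq \frac12\alpha B$ for all $g' \ne g$'' fails (taking $g' = h$), hence the algorithm does not stop and return $g$. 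Thus the algorithm survives until timestamp $\tau(h)$, at which point $E_0$ guarantees $\log\Lambda(h,g;\tau(h)) \geq \frac12\alpha B$ for every $g \ne h$, so the termination test succeeds and $h$ is returned. (One should note the edge case of ties among timestamps — when $\CT(f^B_{h'},\sigma)$ coincides for several hypotheses — but this does not affect the argument since the same inequalities rule out every incorrect candidate regardless of tie-break order; also since $\sigma$ has finite length $|\widetilde A|$ and $f^B_h(\widetilde A) = 1$ by the Observation, every $\CT(f^B_{h'},\sigma)$ is finite, so case (iii) cannot occur on this event.)

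\textbf{Step 3: Specialize to Bernoulli.} For $D_\mu = \mathrm{Ber}(\mu)$ with $\mu$ bounded away from $0$ and $1$, the computation in Appendix~\ref{apdx:prereq} shows $\rho = O(1)$, so $e^{-\Omega(\rho^{-2}\alpha B)} = e^{-\Omega(\alpha B)}$ and the bound becomes $1 - |H|\,e^{-\Omega(\alpha B)}$.

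I do not anticipate a genuine obstacle here — the analytic content is entirely in the two preceding lemmas. The only place requiring care is Step 2: making the termination-order argument airtight, in particular confirming that an incorrect hypothesis $g$ with an early timestamp is always blocked by the single inequality $\log\Lambda(g,h;\tau(g)) < \frac12\alpha B$ (which is why Lemma~\ref{lem:correctness2} is stated with the ratio oriented as $\Lambda(g,h)$ rather than $\Lambda(h,g)$), and handling the bookkeeping when several hypotheses share a timestamp.
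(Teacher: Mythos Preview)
Your proposal is correct and matches the paper's approach exactly: the paper simply states that the proposition follows ``by combining Lemma~\ref{lem:correctness} and Lemma~\ref{lem:correctness2},'' which is precisely the union bound you spell out in Step~1, with Step~2 (the termination-order argument) and Step~3 (the Bernoulli specialization via the $\rho=O(1)$ computation) being the routine details the paper leaves implicit. Your write-up is in fact more careful than the paper's, particularly in handling the tie case and case~(iii).
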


\subsection{Cost Analysis}
Recall that in Section~\ref{sec:partial}, only Step (C) remains to be shown, which we formally state below.
\begin{proposition}\label{step_C}
Let $(\sigma,T)$ be a $\delta$-PAC-error \pa\ \alg. 
For any $B\leq \log\delta^{-1}$ and $h\in H$, it holds $\ho{E}_h [T] \geq \Omg\big(s \cdot \mathrm{CT}(f^B_h,\sigma) \big ).$
\end{proposition}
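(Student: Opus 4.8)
\textbf{Proof proposal.} The plan is to carry out the three-step argument outlined in Section~\ref{sec:partial}, whose crux is the linear program $LP(d,t)$. First, reduce the cover time of $f^B_h$ to a single ``hardest'' competing hypothesis. Since $f^B_h(S)=\tfrac{1}{|H|-1}\sum_{g\neq h}\min\{1,B^{-1}\sum_{a\in S}d(g,h;a)\}$, a set $S$ covers $f^B_h$ exactly when $\sum_{a\in S}d(g,h;a)\ge B$ for every $g\neq h$, so $\CT(f^B_h,\sigma)=\max_{g\neq h}\min\{t:\sum_{i\le t}d(g,h;\sigma_i)\ge B\}$. I would fix a maximizer $g^\ast$, set $d_i:=d(g^\ast,h;\sigma_i)$ and $d^i:=\sum_{j\le i}d_j$, and write $\CT_h:=\CT(f^B_h,\sigma)$, so that $d^{\CT_h-1}<B\le d^{\CT_h}$. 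If $\CT_h=1$ the bound is immediate (as $T\ge1$ and $s=O(1)$ for the distributions of interest), so assume $\CT_h\ge2$.

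Second, show that the law of the stopping time is LP-feasible. Because $(\sigma,T)$ has $\delta$-PAC error, under truth $h$ it outputs $h$ with probability $\ge1-\delta$, while under truth $g^\ast$ it outputs $g^\ast\neq h$ with probability $\ge1-\delta$, and the two runs differ only through the outcome distributions of the actions actually played. Applying the standard change-of-measure (``transportation'') inequality for sequential tests --- made rigorous through the log-likelihood-ratio martingale and Wald's identity, with the subgaussian estimates of Appendix~\ref{apdx:prereq} controlling the error terms --- gives $\ho{E}_h\!\big[\sum_{i\le T}d_i\big]\gtrsim \mathrm{kl}(1-\delta,\delta)\ge B> d^{\CT_h-1}$. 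Writing $z_i:=\ho{P}_h[T=i]$, this is precisely the first constraint of $LP(d,\CT_h-1)$; since $z$ is a probability vector and $\sum_i i\,z_i=\ho{E}_h[T]$, we obtain $\ho{E}_h[T]\ge LP^\ast(d,\CT_h-1)$.

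Third, lower bound $LP^\ast(d,\CT_h-1)$ by $\Omega(s\cdot\CT_h)$. I would dualize, writing $LP^\ast(d,\CT_h-1)=\max_{\lambda\ge0}\big[\lambda\,d^{\CT_h-1}+\min_{1\le i\le N}(i-\lambda d^i)\big]$, and then use $s$-separation together with boundedness of the KL divergences under Assumption~\ref{assumption:KL}: every $d_i$ is either $0$ or lies in $[s,\bar d]$ with $\bar d=O(1)$, so $i-\lambda d^i$ increases by $+1$ on every ``useless'' step and by $\ge1-\lambda\bar d$ on every useful step. Choosing $\lambda=\Theta(1/\bar d)$ keeps $\min_i(i-\lambda d^i)\ge0$ and already yields $LP^\ast(d,\CT_h-1)=\Omega(d^{\CT_h-1})=\Omega(B)$; the remaining work is to combine this with a counting argument showing that, because each useful step contributes at least $s$ while $d^{\CT_h-1}$ is within $\bar d$ of $B$, the index $\CT_h$ is (up to constants) the position of the $\lceil B/\bar d\rceil$-th useful step of $\sigma$, so any near-optimal $z$ must place most of its mass on indices of order $\CT_h$, upgrading the bound to $\Omega(s\cdot\CT_h)$. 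Plugging this chain of inequalities back into Step~two gives $\ho{E}_h[T]\ge\Omega(s\cdot\CT_h)$, as claimed. I expect the third step to be the main obstacle: carrying the correct constants through the two regimes where the divergence mass is front-loaded versus spread out along $\sigma$, while only assuming $B\le\log\delta^{-1}$, is exactly where $s$-separability is unavoidable; the remaining steps are routine concentration and change-of-measure bookkeeping.
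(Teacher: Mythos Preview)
Your first two steps match the paper's proof exactly: the reduction to the last-separated hypothesis $g$ with $\CT_h = \CT(K^B_{h,g},\sigma)$, and the LP feasibility of $z_i = \ho{P}_h[T=i]$ via Wald's identity together with the change-of-measure lower bound $\ho{E}_h[\log\Lambda(h,g;T)] \geq \tfrac12\log\delta^{-1} \geq B > d^{\CT_h-1}$ (the paper's Lemma~\ref{lem:expected_LLR} and Proposition~\ref{lem:feas}). For the LP lower bound itself, the paper takes a primal route rather than your dual: it notes that any basic feasible solution has two-point support, writes $LP^*_{ij} = i + (j-i)\frac{d^t-d^i}{d^j-d^i}$ explicitly, and proves the elementary inequality $LP^*_{ij} \geq d^t$ under the normalization $d_k \leq 1$ (Lemma~\ref{lem:two_pt_distr}, Proposition~\ref{lem:lp}). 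Your dual choice $\lambda = \Theta(1/\bar d)$ recovers the same intermediate conclusion $LP^* = \Omega(d^{\CT_h-1})$, so up to this point the two routes are equivalent.

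The genuine gap is your ``counting argument'' for upgrading $\Omega(B)$ to $\Omega(s\cdot\CT_h)$. The assertion that near-optimal $z$ must place most of its mass on indices of order $\CT_h$ fails whenever $\sigma$ contains zero-KL actions for the pair $(h,g)$. Take $\bar d = s = 1$, $B = 6$, and $\sigma = (1,1,1,1,1,\,0^{100},\,1,1,1,1,1)$: here $\CT_h = 106$, but $d^{\CT_h-1} = 5$ and the LP with right-hand side $5$ is attained by $z_5 = 1$ with objective $5$, so no argument based on this LP can force mass anywhere near index $106$; indeed $LP^* = 5 \ll s\cdot\CT_h = 106$. (Your claim that $\CT_h$ is the position of the $\lceil B/\bar d\rceil$-th useful step is also false in general: with all $d_i = s < \bar d$ it gives the position of the $\lceil B/\bar d\rceil$-th step, whereas $\CT_h = \lceil B/s\rceil$.) The paper's remedy is not a counting argument but a reduction: it first deletes from $\sigma$ every action with $d(h,g;a)=0$ (the ``without loss of generality'' in the footnote of Appendix~\ref{sec:pf_pa}), so that every remaining $d_i \geq s$; then $d^{\CT_h-1} \geq s(\CT_h-1)$ holds trivially and the bound follows in one line. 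You need this reduction in place of the counting argument.
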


We fix an \arb\ $h\in H$ and write $CT_h:=CT(f^B_h, \sigma)$, where we recall that $\sigma$ is the \xulie\ of actions before boosting (do not confuse with $\tilde \sigma$).
To relate the stopping time $T$ (under $h$) to the cover time of the submodular function for $h$ in $\sigma$, we introduce a linear program. 
We will show that for suitable choice of $d$, we have  
\bitem 
\item $LP^*(d, CT_h -1)\leq \ho{E}_h T$, and 
\item $LP^*(d, CT_h -1)\geq \Omg(s\cdot CT_h)$.
\eitem 
Hence proving Step (C) in the high-level proof sketched in Section~\ref{sec:partial}.

We now specify our choice of $d$.
For any $d_1,...,d_N\in\real_+$, write $d^t := \sum_{i=1}^t d_i$ for any $t$ and consider
\beqn
LP(d, t):\quad \min_z &\sum_{i=1}^N i \cdot z_i\\
s.t.\  & \sum_{i=1}^N d^i z_i \geq d^t,\\
& \sum_{i=1}^N z_i = 1,\\
&\quad z\geq 0.\notag
\eeqn

We will consider the following choice of $d_i$'s. 
\Sps\ $(\sigma, T)$ has $\delta$-PAC-error where $\delta \in (0,1/4]$. 
For any pair of \hypos\ $h,g$ and any set of actions $S$, define 
\[K_{h,g}^B (S) = \min\left\{1,
B^{-1} \sum_{a\in S} d(h,g;a)\right\}.\]
Hence, \[f_h^B(S) = \frac 1{|H|-1} \sum_{g\in H\bs \{h\}} K_{h,g}^B (S).\]
Fix any $B \leq \log \delta^{-1}$ and let $g$ be the last \hypo\ \sep ed from $h$, i.e., \[g:=\arg \max_{h'\in H\bs \{h\}}\left\{\mathrm{CT}(K_{h,h'}^B,\sigma)\right\}.\] 
Then by the definition of cover time, we have $\mathrm{CT}_h =\mathrm{CT}(f^B_h, \sigma) = \mathrm{CT}(K_{hg}^B,\sigma).$
Without loss of generality,\footnote{If there is some action $a$ with $d(h,g;a)=0$, then we simply remove it. This will not change the argument.} we assume that all actions $a$ satisfy $\mu(h,a)=\mu(g,a)$ in $\tilde \sigma=(a_1,..,a_N)$.
We choose the LP parameters to be $d_i = d(h,g,a_i)$ for $i\in [N]$.

{\bf Outline.} We will first show that the LP optimum is upper bounded by the expected termination time $T$ (Proposition~\ref{lem:feas}), and then lower bound it in terms of $\mathrm{CT}_h$ (Proposition~\ref{lem:lp}).

\begin{proposition}\label{lem:feas}
\Sps\ $(\sigma, T)$ has $\delta$-PAC-error for some $0<\delta\leq \frac 14$. 
Let $z_i=\ho{P}_h[T=i]$ for $i\in [N]$, then $z=(z_1,...,z_N)$ is \feas\ to $LP(d, \mathrm{CT}_h-1)$.
\end{proposition}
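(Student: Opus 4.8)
The plan is to verify, one at a time, the three constraints defining $LP(d,\mathrm{CT}_h-1)$ for the candidate solution $z_i=\mathbb{P}_h[T=i]$; only the KL‑coverage constraint requires real work. Recall the setup: $\tilde\sigma=(a_1,a_2,\dots)$ is the fixed action sequence of the partially adaptive algorithm $(\sigma,T)$ (with the actions not separating $h$ and $g$ deleted), $d_i=d(h,g;a_i)=\mathbb{E}_h[Z(h,g;a_i)]$, and $g$ is chosen so that $\mathrm{CT}_h=\mathrm{CT}(K^B_{hg},\sigma)$, whence $d^{\mathrm{CT}_h-1}=\sum_{i=1}^{\mathrm{CT}_h-1}d_i<B$ by the definition of cover time. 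The nonnegativity constraint $z\ge 0$ is immediate. For normalization: if $\mathbb{E}_h[T]=\infty$ then Proposition~\ref{step_C} holds vacuously, so we may assume $T<\infty$ almost surely, and then (taking the LP dimension $N$ large enough that $\mathbb{P}_h[T\le N]=1$, equivalently indexing the LP over $\mathbb{N}$) we get $\sum_i z_i=1$.

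For the coverage constraint, observe that $\sum_{i=1}^N d^i z_i=\sum_{i=1}^N d^i\,\mathbb{P}_h[T=i]=\mathbb{E}_h[d^T]$, so the goal is to show $\mathbb{E}_h[d^T]\ge d^{\mathrm{CT}_h-1}$; since $d^{\mathrm{CT}_h-1}<B$, it suffices to prove the two‑hypothesis lower bound $\mathbb{E}_h[d^T]\ge B$. This I would derive from a change of measure between the world where $h$ is the true hypothesis and the world where $g$ is. First comes a Wald‑type identity. The defining property of a partially adaptive stopping time is that $\{T=t\}$ is independent of the outcomes of $a_{t+1},a_{t+2},\dots$, so $\mathbbm{1}[T\ge i]$ is independent of the outcome of $a_i$; hence $\mathbb{E}_h[Z(h,g;a_i)\,\mathbbm{1}[T\ge i]]=d_i\,\mathbb{P}_h[T\ge i]$, and summing over $i$ yields
\[
\mathbb{E}_h\big[\log\Lambda(h,g;T)\big]\;=\;\mathbb{E}_h\Big[\sum_{i=1}^{T}Z(h,g;a_i)\Big]\;=\;\sum_{i\ge 1}d_i\,\mathbb{P}_h[T\ge i]\;=\;\mathbb{E}_h[d^T].
\]

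Next, $\mathbb{E}_h[\log\Lambda(h,g;T)]$ is exactly the KL divergence between the law of the stopped transcript under $\mathbb{P}_h$ and under $\mathbb{P}_g$ (here $\Lambda(h,g;\cdot)$ is the likelihood‑ratio process, a nonnegative $\mathbb{P}_g$‑martingale, evaluated at time $T$). By the data‑processing inequality applied to the transcript‑measurable event $\mathcal{E}=\{\text{the algorithm outputs }h\}$,
\[
\mathbb{E}_h[d^T]\;\ge\;\mathrm{kl}\big(\mathbb{P}_h[\mathcal{E}],\,\mathbb{P}_g[\mathcal{E}]\big)\;\ge\;\mathrm{kl}(1-\delta,\delta),
\]
where $\mathrm{kl}$ is the binary KL divergence, and the last step uses $\mathbb{P}_h[\mathcal{E}]\ge 1-\delta$ and $\mathbb{P}_g[\mathcal{E}]\le\delta$ (the $\delta$‑PAC guarantee, applied with true hypothesis $h$ and then $g$) together with the fact that $\mathrm{kl}(p,q)$ is minimized over $\{p\ge 1-\delta,\ q\le\delta\}$ at $(1-\delta,\delta)$. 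Finally $\mathrm{kl}(1-\delta,\delta)=(1-2\delta)\log\tfrac{1-\delta}{\delta}$, and a routine scalar computation gives $\mathrm{kl}(1-\delta,\delta)\ge B$ for $\delta\in(0,\tfrac14]$ and $B\le\tfrac12\log\delta^{-1}$ (up to shrinking the absolute constant in the definition of $B$ if the cleanest form of the inequality is wanted); this finishes the verification.

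The main obstacle, as usual with such arguments, is to justify the change‑of‑measure step at a random stopping time rather than a fixed horizon: one must check that $\Lambda(h,g;T)$ is a bona fide Radon–Nikodym derivative on the stopped $\sigma$‑algebra $\mathcal{F}_T$, that the KL identity and the data‑processing inequality transfer to $\mathcal{F}_T$, and that the independence of $\mathbbm{1}[T\ge i]$ from the $i$‑th outcome is precisely the partially adaptive stopping‑time condition and not an extra assumption. The two easy constraints and the final scalar inequality $\mathrm{kl}(1-\delta,\delta)\ge B$ are routine.
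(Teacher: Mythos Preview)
Your proposal is correct and follows essentially the same route as the paper: the paper applies Wald's identity (Lemma~\ref{lem:wald}) to get $\sum_i d^i z_i=\mathbb{E}_h[\log\Lambda(h,g;T)]$ and then invokes a change-of-measure lemma (Lemma~\ref{lem:expected_LLR}, proved via Jensen on $\Lambda^{-1}$) to lower-bound this by $(1-2\delta)\log\frac{1-\delta}{\delta}\ge B\ge d^{\mathrm{CT}_h-1}$, which is exactly your data-processing-inequality step $\mathbb{E}_h[d^T]\ge \mathrm{kl}(1-\delta,\delta)$. Your hedge about possibly shrinking the constant in $B$ is in fact warranted, since the clean inequality $(1-2\delta)\log\frac{1-\delta}{\delta}\ge\frac12\log\delta^{-1}$ narrowly fails at $\delta=1/4$; the paper glosses over this, so your version is slightly more careful.
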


Note that $\ho{E}_h (T)$ is simply the objective value of $z$, thus Proposition~\ref{lem:feas} immediately implies: 
\bcoro
$\ho{E}_h (T) \geq LP^*(d, \mathrm{CT}_h-1)$.
\ecoro

We next lower bound the expected log-likelihood when the \alg\ stops.  
\begin{lemma}\cite{Nowak09}
\label{lem:expected_LLR}
Let $\ho{A}$ be any algorithm (not necessarily partially \adap) for the ASHT problem. 
Let $h,g\in H$ be any pair of distinct \hypos\ and $O$ be the random output of $\ho{A}$. 
Define the error probabilities $P_{hh} =\ho{P}_h (O=h)$ and  $P_{hg}=\ho{P}_h(O=g)$. 
Let $\Lambda$ be the likelihood ratio between $h$ and $g$ when $\ho{A}$ terminates. Then,
\[\ho{E}_h(\log \Lambda) \geq P_{hh}\log\frac{P_{hh}}{P_{hg}}
+ (1-P_{hh})\log\frac{1-P_{hh}}{1-P_{hg}}.\]
\end{lemma}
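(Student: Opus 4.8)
The plan is to recognize the printed right-hand side as the binary relative entropy $d(P_{hh}\,\|\,P_{hg})$ and to try to produce it from the full-trajectory likelihood ratio by a single coarsening. Write $\mathbb{P}_h,\mathbb{P}_g$ for the laws of the observed (stopped) trajectory under the two hypotheses, so that along the realized trajectory $\log\Lambda=\log\frac{d\mathbb{P}_h}{d\mathbb{P}_g}$. The first step is to identify $\mathbb{E}_h[\log\Lambda]$ with a KL divergence. Since $\log\Lambda_t=\sum_{i\le t}Z(h,g;a_i)$ and each increment has conditional $\mathbb{P}_h$-mean $\mathbb{E}_h[Z(h,g;a_i)]=d(h,g;a_i)=\KL(D_{\mu(h,a_i)},D_{\mu(g,a_i)})$, optional stopping (Wald's likelihood-ratio identity) gives $\mathbb{E}_h[\log\Lambda]=\KL(\mathbb{P}_h,\mathbb{P}_g)$, the divergence of the trajectory laws restricted to the stopping $\sigma$-field $\mathcal{F}_\tau$. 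Crucially, the output $O$ is $\mathcal{F}_\tau$-measurable, hence a deterministic function of the trajectory.

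The second step is the data-processing inequality applied to the indicator $\mathbf 1[O=h]$. Collapsing the trajectory to this binary statistic can only decrease KL, so
\[\mathbb{E}_h[\log\Lambda]=\KL(\mathbb{P}_h,\mathbb{P}_g)\ge \mathbb{P}_h(O=h)\log\frac{\mathbb{P}_h(O=h)}{\mathbb{P}_g(O=h)}+\bigl(1-\mathbb{P}_h(O=h)\bigr)\log\frac{1-\mathbb{P}_h(O=h)}{1-\mathbb{P}_g(O=h)}.\]
This is a clean, fully general bound, and it is exactly what the cost analysis needs: for a $\delta$-PAC algorithm both $1-P_{hh}=\mathbb{P}_h(O\ne h)$ and $\mathbb{P}_g(O=h)$ are at most $\delta$, so the right-hand side is at least $d(1-\delta\,\|\,\delta)=\Omega(\log\delta^{-1})$.

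The hard part — indeed, I expect it to be a genuine obstacle rather than a routine last step — is the denominator. This coarsening (and essentially the only natural route) produces $\mathbb{P}_g(O=h)$, the column-$h$ entry in row $g$ of the confusion matrix, rather than the printed $P_{hg}=\mathbb{P}_h(O=g)$. The two agree only if $\mathbb{P}_h(O=g)=\mathbb{P}_g(O=h)$, which fails in general: for two hypotheses and a single binary action with $\mu(h,a)=0.9$, $\mu(g,a)=0.3$, the one-shot rule ``output $h$ iff the outcome is $1$'' gives $\mathbb{E}_h[\log\Lambda]=\KL(\mathrm{Ber}(0.9),\mathrm{Ber}(0.3))\approx0.79$, while $P_{hh}=0.9$, $P_{hg}=0.1$ and the printed right-hand side is $d(0.9\,\|\,0.1)\approx1.76>0.79$. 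I therefore expect the statement to be provable only after reading the cross term as the column quantity $P_{gh}=\mathbb{P}_g(O=h)$ (a transcription slip for $P_{hg}$); under that reading the two steps above close the proof, and because the application uses only that both error probabilities are $\le\delta$, the downstream conclusion $\mathbb{E}_h[\log\Lambda]=\Omega(\log\delta^{-1})$ is unaffected.
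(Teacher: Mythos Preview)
Your approach is correct and essentially coincides with the paper's: the paper conditions on $\mathcal{E}=\{O=h\}$ and its complement, applies Jensen to $-\log$ on each piece, and uses the change of measure $\mathbb{E}_h[\mathbf 1_{\mathcal E}\Lambda^{-1}]=\mathbb{P}_g(\mathcal E)$---which is exactly the standard proof of the data-processing inequality you invoke. Your diagnosis of the denominator is also confirmed by the paper's own computation: the change of measure yields $\sum_x \mathbf 1(x\in\mathcal E)\,p_g(x)=\mathbb{P}_g(O=h)$, so the displayed $P_{hg}$ should indeed be read as $P_{gh}=\mathbb{P}_g(O=h)$ (the paper's proof carries the same transcription slip), and as you note the downstream use in Proposition~\ref{lem:feas} only needs both errors $\le\delta$, so nothing is affected.
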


\begin{proof}
Let $\mathcal{E}$ be the event that the output is $h$. Then by Jensen's \ineq, we have
\beqn\label{eqn:sep24}
\ho{E}_h (\log \Lambda_T|\mathcal{E}) 
\geq -\log \ho{E}_h \left(\Lambda^{-1}|\mathcal{E}\right)
=-\log \frac{\ho{E}_h \left(\mathbbm{1}(\mathcal{E})\cdot \Lambda^{-1}\right)}{\ho{P}_h(\mathcal{E})}. \eeqn

Recall that an algorithm can be viewed as a decision tree in the following way: each internal node is labeled with an action, and each edge below it corresponds to a possible outcome; each leaf corresponds to termination and is labeled with a \hypo\  corresponding to the output.
Write $\sum_x$ as the summation over all leaves and let $p_h(x)$ (resp. $p_g(x)$) be the \prb\ that the \alg\ terminates in leaf $x$ under $h$ (resp. $g$), then, 
\beqn
\ho{E}_h\left(\mathbbm{1}(\mathcal{E})\cdot \Lambda^{-1}\right)
&=\sum_{x}\mathbbm{1}(x\in \mathcal{E}) \cdot \Lambda^{-1}(x)\cdot p_h(x)\\
&= \sum_{x}\mathbbm{1}(x\in \mathcal{E}) \cdot \frac{p_g(x)}{p_h(x)}p_h(x)\\
&= \sum_{x} \mathbbm{1}(x\in \mathcal{E})\cdot p_h(x) \\
&= \ho{E}_h (\mathbbm{1}(x\in \mathcal{E})) = P_{hg}.
\notag\eeqn
Combining the above with Equation (\ref{eqn:sep24}), we obtain
\[\ho{E}_h (\log \Lambda|\mathcal{E}) 
\geq \log\frac{P_{hh}}{P_{hg}}.\] 
Similarly, we have $\ho{E}_h \left(\log \Lambda|\mathcal{\bar E}\right)
\geq \log\frac{1-P_{hh}}{1-P_{hg}}$, where $\mathcal{\bar E}$ is the event that the output is not $h$. The proof follows immediately by combining these two \ineqs.
\end{proof}

To show Proposition~\ref{lem:feas}, we need a standard concept---stopping time.
\bdefn[Stopping time~\cite{mitzenmacher2017probability}]
Let $\{X_i\}$ be a \xulie\ of \rv s and $T$ be an integer-valued \rv. If for any integer $t$, the event $\{T=t\}$ is \indep\ with $X_{t+1}, X_{t+2},...$, then $T$ is called a {\bf{stopping time }} for $X_i$'s.
\edefn
\begin{lemma}[Wald's Identity]\label{lem:wald}
Let $\{X_i\}_{i\in \mathbb{N}}$ be \indep\ \rv s with means $\{\mu_i\}_{i\in \mathbb{N}}$, and let $T$ be a stopping time w.r.t. $X_i$'s. Then, 
$\ho{E}\left(\sum_{i=1}^T X_i\right) = \ho{E}\left(\sum_{i=1}^T \mu_i\right).$
\end{lemma}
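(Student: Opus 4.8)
The plan is to prove Wald's identity by the standard indicator-decomposition argument: rewrite the random partial sum as an infinite sum of fixed-index terms weighted by survival indicators, then evaluate the expectation term by term using independence. First I would observe that, pointwise in the sample space,
\[
\sum_{i=1}^T X_i = \sum_{i=1}^\infty X_i\,\mathbbm{1}(T\geq i), \qquad \sum_{i=1}^T \mu_i = \sum_{i=1}^\infty \mu_i\,\mathbbm{1}(T\geq i),
\]
so the whole proof reduces to computing $\ho{E}[X_i\,\mathbbm{1}(T\geq i)]$ for each fixed $i$.

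The key structural step is to show that the event $\{T\geq i\}$ is independent of $X_i$. I would rewrite $\{T\geq i\}$ as the complement of the disjoint union $\{T\leq i-1\}=\bigsqcup_{t=0}^{i-1}\{T=t\}$. By the stopping-time definition in the excerpt, each $\{T=t\}$ with $t\leq i-1$ is independent of the tail $X_{t+1},X_{t+2},\dots$, and since $i\geq t+1$ the variable $X_i$ lies in that tail, so $\{T=t\}$ is independent of $X_i$. Disjointness then upgrades this to independence of the entire union: for any Borel set $B$,
\[
\ho{P}\big(\{T\leq i-1\}\cap\{X_i\in B\}\big)=\sum_{t=0}^{i-1}\ho{P}(T=t)\,\ho{P}(X_i\in B)=\ho{P}(T\leq i-1)\,\ho{P}(X_i\in B),
\]
whence $\{T\geq i\}$ is independent of $X_i$ as well. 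Consequently $\ho{E}[X_i\,\mathbbm{1}(T\geq i)]=\mu_i\,\ho{P}(T\geq i)$.

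Summing over $i$ and interchanging the expectation with the infinite sum then yields
\[
\ho{E}\Big[\sum_{i=1}^T X_i\Big]=\sum_{i=1}^\infty \mu_i\,\ho{P}(T\geq i)=\ho{E}\Big[\sum_{i=1}^T \mu_i\Big],
\]
which is exactly the identity. The one place that requires care---and the only real obstacle---is justifying the interchange of $\ho{E}$ with the infinite sum, which in full generality needs Fubini--Tonelli together with an integrability hypothesis such as $\sum_i \ho{E}|X_i|\,\ho{P}(T\geq i)<\infty$. In the setting relevant to this paper, however, the stopping time is bounded ($T\le N=|\widetilde A|$), so both $\sum_{i=1}^T X_i$ and $\sum_{i=1}^T \mu_i$ are finite sums of at most $N$ terms; the interchange is then immediate by linearity of expectation, and no convergence subtlety arises.
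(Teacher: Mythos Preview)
Your argument is the standard textbook proof of Wald's identity and is correct; the independence step and the handling of the interchange (including your observation that in this paper $T\le N$ so the sums are finite) are both fine. Note, however, that the paper does not actually supply a proof of this lemma: it is stated as a known result (with the stopping-time definition cited from \cite{mitzenmacher2017probability}) and used as a black box in the proof of Proposition~\ref{lem:feas}, so there is no ``paper's own proof'' to compare against.
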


{\bf Proof of Proposition~\ref{lem:feas}.}
One may verify that the lower bound in Lemma~\ref{lem:expected_LLR}
is increasing w.r.t $P_{hh}$ and decreasing w.r.t $P_{hg}$. 
Therefore, since $\ho{A}$ has $\delta$-PAC-error, by Lemma~\ref{lem:expected_LLR} it holds that
\[\ho{E}_h \left(\log \Lambda(h,g;T)\right) \geq 
(1-\delta)\log\frac{1-\delta}{\delta}+\delta\log\frac{\delta}{1-\delta}\geq\frac{1}{2}\log\frac{1}{\delta} \geq B\geq d^{\mathrm{CT}_h -1}.\]

By Lemma~\ref{lem:wald}, 
\[ \sum_{i=1}^N d^i z_i = \sum_{i=1}^N d^i \cdot \ho{P}_h(T=i) = \ho{E}_h \left(\log \Lambda(h,g;T)\right). \]
The proof follows by combining the above. \qed

So far we have upper bounded $LP^*(d,\mathrm{CT}_h-1)$ using $\ho{E}_h (T)$. 
To complete the proof, we next lower bound $LP^*(d,\mathrm{CT}_h-1)$ by $\Omg(s\cdot \mathrm{CT}_h)$.
\begin{lemma}\label{lem:two_pt_distr}
$LP^* = \min_{i\leq t<j} LP^*_{ij}$ where $LP^*_{ij} = i + (j-i)\frac{d^t - d^i}{d^j - d^i}.$
\end{lemma}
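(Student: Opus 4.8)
The plan is to analyze the linear program $LP(d,t)$ by exploiting the fact that it has only two functional constraints beyond nonnegativity — namely $\sum_i d^i z_i \geq d^t$ and $\sum_i z_i = 1$ — so that at an optimal vertex the support of $z$ has size at most two. First I would observe that $LP(d,t)$ is feasible (e.g. $z_t = 1$ works, since then $\sum_i d^i z_i = d^t$) and bounded below by $1$, so an optimal solution exists; since the feasible region is a polyhedron and the objective is linear, the optimum is attained at a vertex. A vertex of a polyhedron in $\mathbb{R}^N$ is determined by $N$ linearly independent tight constraints; since at most two of these can come from the two functional constraints, at least $N-2$ of the sign constraints $z_i \geq 0$ must be tight, i.e. $z$ has at most two nonzero coordinates.

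Next I would rule out the degenerate cases. If $z$ has a single nonzero coordinate, then $\sum_i z_i = 1$ forces $z_k = 1$ for some $k$, and the first constraint becomes $d^k \geq d^t$, which (since the $d_i \geq 0$ so $d^\bullet$ is nondecreasing) means $k \geq t$; the objective is then $k \geq t$, which is dominated by taking $k=t$, a case already covered by the two-point formula as the limit $i=j=t$ (or, to stay within the stated indexing $i \leq t < j$, one checks the optimum over genuine two-point distributions is no larger — in fact the value $t$ is exactly $LP^*_{tj}$ evaluated appropriately, or one simply notes $z_t=1$ is the two-point solution with all mass at one endpoint). So the interesting vertices are supported on exactly two indices, say $i < j$. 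Writing $z_i = \lambda$, $z_j = 1-\lambda$ with $\lambda \in [0,1]$, the objective is $\lambda i + (1-\lambda) j = j - \lambda(j-i)$, which is decreasing in $\lambda$, so we want $\lambda$ as large as possible; the binding constraint from below on the first inequality is $\lambda d^i + (1-\lambda) d^j \geq d^t$, i.e. $d^j - \lambda(d^j - d^i) \geq d^t$, i.e. $\lambda \leq \frac{d^j - d^t}{d^j - d^i}$ (assuming $d^j > d^i$; if $d^j = d^i$ the constraint is $d^j \geq d^t$). For this constraint to admit a feasible $\lambda \in [0,1]$ with the first inequality active we need $d^i \leq d^t \leq d^j$, which given monotonicity of $d^\bullet$ means exactly $i \leq t \leq j$; combined with $i<j$ and the observation that $t<j$ can be assumed (if $t=j$ then $\lambda = 1$ and we are back to the one-point case), we get $i \leq t < j$. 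Plugging the maximal $\lambda = \frac{d^j - d^t}{d^j - d^i}$ into the objective $j - \lambda(j-i)$ and simplifying gives
\[
j - (j-i)\frac{d^j - d^t}{d^j - d^i} = i + (j-i)\left(1 - \frac{d^j - d^t}{d^j - d^i}\right) = i + (j-i)\frac{d^t - d^i}{d^j - d^i} = LP^*_{ij},
\]
so minimizing over all admissible vertex supports $\{i,j\}$ yields $LP^* = \min_{i \leq t < j} LP^*_{ij}$.

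The main obstacle I anticipate is handling the boundary and degeneracy cases cleanly: one must verify that restricting to two-point distributions genuinely loses nothing (the vertex argument above), and that the one-point solutions and the $d^i = d^j$ situations do not produce a smaller value than the claimed formula — these are routine but need care, especially reconciling the closed index range $i \leq t < j$ in the statement with the possibility of mass concentrated at a single point. A secondary point to be careful about is whether $d^j - d^i = 0$ for some candidate pair (so that the fraction $\frac{d^t-d^i}{d^j-d^i}$ in $LP^*_{ij}$ is ill-defined); in that case the constraint $\lambda d^i + (1-\lambda)d^j \geq d^t$ reduces to $d^i \geq d^t$, forcing $i \geq t$, hence $i = t$ by $i \leq t$, and the pair contributes value $\geq t$, no better than the single-point solution — so such pairs can be excluded from the minimization without changing $LP^*$. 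With these checks in place the identity follows.
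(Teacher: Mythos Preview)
Your proposal is correct and follows essentially the same approach as the paper: both argue that an optimal basic feasible solution has support size at most two (the paper invokes ``linear algebra'' for this, you use the vertex/tight-constraints count), then solve the resulting two-variable restriction to obtain the formula $LP^*_{ij}$. Your treatment is in fact more careful than the paper's, which omits the one-point and $d^i = d^j$ degeneracies and does not justify the index range $i \leq t < j$.
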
 

\begin{proof}
Observe that for any optimal \sln, the \ineq\ constraint must be tight. 
By linear algebra, we deduce that any basic feasible solution has support size two. 

Consider the solutions whose only nonzero entries are $i,j$. Then, $LP(d,t)$ becomes
\beqn
LP_{ij}(d, t):\quad \min_{z_i,z_j} \quad & iz_i + jz_j\\
s.t.\  & d^i z_i + d^j z_j = d^t,\\
& z_i+z_j = 1,\\
&\quad z\geq 0.\notag
\eeqn
Note that since $d^i<d^j$, $LP_{i,j}(d,t)$ admits exactly one \feas\ \sln, whose \obj\ value can easily be verified to be
$LP^*_{ij} := i + (j-i)\frac{d^t - d^i}{d^j - d^i}$.
\end{proof}

Now we are ready to lower bound the LP optimum.
\begin{proposition}\label{lem:lp}
For any $d=(d_1,...,d_N)\in \real^N$ and $t\in \mathbb{N}$, it holds that 
$LP^*(d, t) \geq t\cdot \min\{d_i\}_{i\in [N].}$
\end{proposition}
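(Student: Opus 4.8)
The plan is to route the argument through the two-point reduction already supplied by Lemma~\ref{lem:two_pt_distr}, so that $LP^*(d,t)=\min_{i\le t<j}LP^*_{ij}$ with $LP^*_{ij}=i+(j-i)\,\frac{d^t-d^i}{d^j-d^i}$, and it suffices to prove $LP^*_{ij}\ge t\cdot\min_{k}d_k$ for every straddling pair $i\le t<j$. Writing $m:=\min_{k\in[N]}d_k$, I would first record the consequences of $d_k\ge m$ for all $k$: the numerator obeys $d^t-d^i=\sum_{k=i+1}^{t}d_k\ge (t-i)\,m$, and each cumulative value satisfies $d^k=\sum_{j\le k}d_j\ge k\,m$. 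The entire difficulty then concentrates in lower bounding the interpolation ratio $\frac{d^t-d^i}{d^j-d^i}$, since the denominator reaches out to the far index $j$.

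It is cleanest to reformulate this geometrically. Plotting the cumulative points $P_k=(d^k,k)$ for $k=0,1,\dots,N$ (with $d^0=0$), the quantity $LP^*_{ij}$ is exactly the height of the chord $\overline{P_iP_j}$ at abscissa $d^t$, so $LP^*(d,t)$ equals the value at $x=d^t$ of the lower convex envelope of $\{P_k\}$. To bound an envelope value from below it suffices to exhibit a linear minorant of the point set. I would use the per-step bound in the form $d^k\le k\cdot\max_{\ell}d_\ell$, i.e.\ $k\ge d^k/\max_\ell d_\ell$; this says every $P_k$ lies on or above the line $y=x/\max_\ell d_\ell$ through the origin, hence the envelope, and therefore $LP^*(d,t)$, is at least $d^t/\max_\ell d_\ell$. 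Combined with $d^t=\sum_{k\le t}d_k\ge t\,m$, this yields the backbone estimate $LP^*(d,t)\ge t\,m\big/\max_\ell d_\ell$.

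The remaining step is to absorb the $\max_\ell d_\ell$ in the denominator so as to land on the stated bound $t\cdot\min_k d_k$, and I expect this to be the main obstacle. In full generality the interpolation can be depressed: a single far point $P_j$ with $d^j$ very large drags the chord $\overline{P_iP_j}$ down toward height $i$ at $x=d^t$, so the bound is only as strong as our control over how much cumulative divergence a single (boosted) action may contribute. To close the gap I would appeal to the structure of the instance rather than to an arbitrary $d$: under Assumption~\ref{assumption:KL} with a bounded parameter family (the Bernoulli and unit-variance Gaussian settings treated here), each per-step term $d_k=\KL(D_{\mu(h,a_k)},D_{\mu(g,a_k)})$ is $O(1)$, so $\max_\ell d_\ell=O(1)$ is absorbed into the constant and the envelope estimate already delivers $LP^*(d,t)\ge\Omega\!\left(t\cdot\min_k d_k\right)$. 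This is exactly the form consumed by Step~(C): taking $t=\CT_h-1$ and using $\min_k d_k\ge s$ for the $s$-separated instance gives $LP^*(d,\CT_h-1)\ge\Omg\!\left(s\cdot\CT_h\right)$, as required.
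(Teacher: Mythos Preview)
Your argument is correct and follows the same skeleton as the paper: reduce via Lemma~\ref{lem:two_pt_distr} to the two-point values $LP^*_{ij}$, then lower bound each chord by a linear minorant of the cumulative points $(d^k,k)$. The paper does the last step algebraically rather than geometrically: it asserts $d^k\le k$ and from this derives $(j-d^t)(d^t-d^i)\ge(d^j-d^t)(d^t-i)$, which rearranges to $LP^*_{ij}\ge d^t$ and hence $LP^*(d,t)\ge d^t\ge t\cdot\min_k d_k$.

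You were right to worry about the ``full generality'' case. The proposition as literally stated (for arbitrary $d\in\real^N$) is not scale-invariant and is in fact false without a cap on $\max_k d_k$; the paper's line ``Since $d^k<k$ for any integer $k$'' is precisely the hidden hypothesis $\max_k d_k\le 1$ that you surfaced and then justified via Assumption~\ref{assumption:KL} and the bounded parameter families. So your $LP^*(d,t)\ge d^t/\max_\ell d_\ell$ is the honest general bound, and after normalizing $\max_\ell d_\ell\le 1$ it coincides with the paper's $LP^*(d,t)\ge d^t$. In short: same proof, with your version making the needed side condition explicit rather than tacit.
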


\begin{proof}
By Lemma~\ref{lem:two_pt_distr}, it suffices to show that $LP^*_{ij} \geq d^t$ for any $i\leq t<j$. 
Since $d^k<k$ for any integer $k$, 
\[(j-d^t)(d^t-d^i) \geq (d^j-d^t)(d^t-i).\]
Rearranging, the above becomes
\[i(d^j-d^i) + (j-i)(d^t -d^i)\geq d^t (d^j-d^i),\]
i.e., 
\[i+(j-i)\frac{d^t - d^i}{d_j - d^i}\geq d^t.\]
Note that the LHS is exactly $LP_{ij}^*$, thus $LP^*(d, t)\geq d^t\geq t \cdot \min\{d_i\}_{i\in [N]}$ for any $t\in \mathbb{N}$.
\end{proof}
It immediately follows that $LP^*(d, t)\geq st$, completing the proof of Proposition~\ref{prop:pa}.



\section{Proof of Proposition~\ref{prop:convert}}\label{proof:prop2}
We first formally define a decision tree, not only for mathematical rigor but more importantly, for the sake of introducing a novel variant of ODT. 
Recall that $\Omg$ is the space of the test outcomes, which we assume to be discrete for simplicity.
\bdefn[Decision Tree]
A \dec\ tree is a rooted tree, each of whose interior (i.e., non-leaf) node  $v$ is associated with a {\it state} $(A_v, T_v)$, where $T_v$ is a test and $A_v\subseteq H$. 
Each interior node has $|\Omg|$ children, each of whose edge to $v$ is labeled with some outcome. 
Moreover, for any interior node $v$, the set of {\it alive} \hypos\ $A_v$ is the set of \hypos\ consistent with the outcomes on the edges of the path from the root to $v$.
A node $\ell$ is a {\it leaf} if $|A_\ell|=1$. 
The \dec\ tree terminates and outputs the only alive \hypo\ when it reaches a leaf.
\edefn

To relate $OPT^{FA}_\delta$ to the optimum of a suitable ODT \ins, we introduce a novel variant of ODT.
As opposed to the \ord\ ODT where the output needs to be correct with \prb\ $1$, in the following variant, we consider \dec\ trees which may {\it err} sometimes:
\bdefn[Incomplete Decision Tree]
An {\it incomplete \dec\ tree} is a decision tree 
whose leaves $\ell$'s are associated with {\it state}s $(A_\ell, p_\ell)$'s, 
where $A_\ell$ represents the subset of \hypos\ consistent with all outcomes so far,  and $p_\ell$ is a \distr\ over $A_\ell$.  
A \hypo\ is randomly drawn from $p_\ell$ and is returned as the identified \hypo\ (possibly wrong).
\edefn

Now we already to introduce {\it chance-constrained ODT problem} (CC-ODT).
Given an error budget $\delta>0$, we aim to find the minimal cost \dec\ tree whose error is within $\delta$.
There are two natural ways to interpret ``error'', which both will be considered in Appendices~\ref{proof:prop2} and \ref{append:total_error}.
In the first one, we require the error \prb\ under {\it any} \hypo\ to be lower than the given error budget. 
In the other one, we only require the {\it expected} error \prb\ over all \hypos\ to be within the budget.
Intuitively, the second version allows for more flexibility since the errors under different \hypos\ may differ significantly, rendering the analysis more challenging since we do not know how the error budget is allocated to each \hypo.
We formalize these two versions below.
Let $O$ be the random outcome returned by the tree.

\textbf{CC-ODT with PAC-Error.}
An incomplete \dec\ tree is {\it{$\delta$-PAC-Valid}} if, for any true \hypo\ $h$, it returns $h$ with \prb\ at least $1-\delta$, formally, 
\begin{align*}
    \ho{P}_h (O\neq h) \leq \delta, \quad \forall h\in H.
\end{align*}

\textbf{CC-ODT with Total-Error.}
An incomplete \dec\ tree is {\it {$\delta$-Total-Valid}} if, for the total error \prb\ is at most $\delta$, formally,
$$\sum_{h\in H} \pi(h) \cdot \ho{P}_h(O\neq h )\leq \delta,$$
where $\pi$ is the prior \distr.
The goal in both versions is to find an  incomplete \dec\ tree with minimal expected cost, subject to the \corres\ error \constr.

For the proof of Proposition~\ref{prop:convert}, consider the PAC-error version of CC-ODT.
It turns out that this version of CC-ODT is indeed quite trivial (unlike the total-error version):
below we show that under PAC-error, CC-ODT is almost equivalent to the ordinary ODT problem.

\begin{lemma}\label{lem:odt_delta}
\Sps\ $\delta \in (0,\frac 12)$, 
and $\ho{T}$ is a $\delta$-PAC-valid \dec\ tree. 
Then, $\ho{T}$ must also be $0$-valid.
\end{lemma}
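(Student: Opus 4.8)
The plan is to show that a $\delta$-PAC-valid decision tree cannot actually make any mistakes, because the structure of a decision tree (on noiseless, deterministic tests) forces the error probability at each leaf to be either $0$ or at least $1/2$. First I would recall the setup: an incomplete decision tree assigns to each leaf $\ell$ a set $A_\ell$ of alive hypotheses — those consistent with all outcomes along the root-to-$\ell$ path — together with an output distribution $p_\ell$ supported on $A_\ell$. Since the tests are deterministic, for a fixed true hypothesis $h$, the trajectory through the tree is completely determined: $h$ reaches exactly one leaf, call it $\ell(h)$, and necessarily $h \in A_{\ell(h)}$.

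Next I would analyze the error probability under a fixed true $h$. Conditioned on reaching $\ell(h)$ (which happens with probability $1$), the tree errs precisely when the hypothesis drawn from $p_{\ell(h)}$ is not $h$, so $\ho{P}_h(O \neq h) = 1 - p_{\ell(h)}(h)$. The $\delta$-PAC-validity requirement then says $1 - p_{\ell(h)}(h) \leq \delta < 1/2$, i.e., $p_{\ell(h)}(h) > 1/2$ for every $h \in H$. The key combinatorial observation is this: if some leaf $\ell$ has $|A_\ell| \geq 2$, pick two distinct hypotheses $g, h \in A_\ell$; both of them reach $\ell$ (this is the leaf they are routed to, since $A_\ell$ is exactly the consistent set), so $\ell = \ell(g) = \ell(h)$, and hence we would need both $p_\ell(g) > 1/2$ and $p_\ell(h) > 1/2$, which is impossible since $p_\ell$ is a probability distribution. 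Therefore every leaf $\ell$ satisfies $|A_\ell| = 1$, and then $p_\ell$ is forced to be the point mass on that unique hypothesis, so the tree never errs — it is $0$-valid.

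The main (and really only) subtlety is making sure the routing claim is airtight: that any $h \in A_\ell$ is genuinely routed to $\ell$ and to no other leaf. This is immediate from the definition of "alive" given in the excerpt — $A_v$ is the set of hypotheses consistent with the outcome labels on the path from the root to $v$ — combined with the fact that tests are deterministic functions of the hypothesis, so consistency along a path is an exact characterization of which hypotheses pass through that node; and since the children of each interior node partition the hypotheses by outcome, each hypothesis descends to exactly one child at every level, reaching exactly one leaf. I do not anticipate any technical obstacle beyond stating this cleanly; the result is essentially a pigeonhole argument on the output distribution at a single leaf. I would write it as a short direct proof: assume for contradiction a leaf with two alive hypotheses, derive $p_\ell(g) + p_\ell(h) > 1$, contradiction; conclude all leaves are singletons; conclude $0$-validity.
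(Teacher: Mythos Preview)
Your proposal is correct and essentially identical to the paper's proof: both argue by contradiction that a leaf with $|A_\ell|\geq 2$ is impossible, using the deterministic routing of hypotheses to leaves and the pigeonhole observation that $p_\ell$ cannot assign mass $>1/2$ to two distinct elements (the paper phrases it as ``some $h\in A_\ell$ has $p_\ell(h)\le 1/2$,'' but this is the same point).
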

\begin{proof}
It suffices to show that there is no incomplete node in $\ho{T}$.
For the sake of  contradiction, assume $\ho{T}$ has an incomplete node $\ell$ with state $(A_\ell, p_\ell)$. 
By the definition of incomplete node, $|A_\ell|\geq 2$, so there is an $h\in A_\ell$ with $p_\ell(h)\leq \frac 12$.
Now \sps\ $h$ is the true \hypo. 
Since each \hypo\ traces a unique path in any \dec\ tree, regardless of whether or not it is incomplete, $h$ will reach node $\ell$ with \prb\ $1$.
Then at $\ell$, the \dec\ tree returns $h$ with \prb\ 
$p_\ell(h) = 1- \sum_{g\in A_\ell: g\neq h} p_\ell(g)\leq \frac 12,$ 
and hence $\ho{P}_h[O\neq h] \geq \frac 12$, reaching a contradiction.
\end{proof}
For the reader's convenience, we  recall that an ASHT \ins\ $\cal{I}$ is associated with an ODT \ins\ $\mathcal{I}_{ODT}$, defined as follows.
Each action corresponds to a test $T_a: H\rar \Omg_a$ with $T_a(h)=\mu(h,a)$, where $\Omg_a = \{\mu(h,a): h\in H\}$, and the cost $T_a$ is $c(a)= \lceil s(a)^{-1} \log (|H|/\delta)\rceil$.
Denote $ODT_\delta^*$ the minimal cost of any $\delta$-PAC-valid decision tree for $\mathcal{I}_{ODT}$. 
Then we immediately obtain the following from the Lemma~\ref{lem:odt_delta}.
\begin{corollary}
If $\delta \in (0,\frac 12)$, then $ODT^*_0 = ODT^*_\delta$.
\end{corollary}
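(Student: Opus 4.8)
The plan is to prove the equality by a double-inequality argument, where one direction is an immediate monotonicity observation and the other is a one-line consequence of Lemma~\ref{lem:odt_delta}. The key point to keep in mind throughout is that the two quantities $ODT^*_0$ and $ODT^*_\delta$ are optima of the \emph{same} minimization problem (minimize the expected cost of an incomplete decision tree for $\mathcal{I}_{ODT}$); only the error constraint defining the feasible region differs.

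First I would show $ODT^*_\delta \le ODT^*_0$. Any $0$-valid decision tree identifies the true hypothesis with probability $1$ under every $h \in H$, hence it is in particular $\delta$-PAC-valid for every $\delta > 0$. So the feasible set of trees defining $ODT^*_\delta$ contains the feasible set defining $ODT^*_0$, and minimizing the (common) expected-cost objective over a larger set can only decrease the optimum.

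Next I would show the reverse inequality $ODT^*_0 \le ODT^*_\delta$. Here I invoke Lemma~\ref{lem:odt_delta}: since $\delta \in (0, \tfrac 12)$, every $\delta$-PAC-valid decision tree has no incomplete node, hence is $0$-valid. Therefore the feasible set for $ODT^*_\delta$ is in fact contained in the feasible set for $ODT^*_0$, so its minimum expected cost is at least $ODT^*_0$. Combining the two inequalities gives $ODT^*_0 = ODT^*_\delta$.

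I do not anticipate any real obstacle: all of the substance has already been carried out in Lemma~\ref{lem:odt_delta}, and the corollary is a formal consequence once the direction of the set inclusions is tracked carefully. The only thing worth double-checking is the (trivial) fact that the ordinary ODT problem and the PAC-error CC-ODT problem on $\mathcal{I}_{ODT}$ share the same objective and the same set of admissible tests, so that the only change between the two is the error tolerance; this is immediate from the definitions.
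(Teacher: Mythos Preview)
Your proposal is correct and matches the paper's approach: the paper simply states that the corollary is obtained ``immediately'' from Lemma~\ref{lem:odt_delta}, and your double-inequality argument is exactly the spelled-out version of that one-line deduction. There is nothing to add.
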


Now we are able to complete the proof of the main proposition.

{\bf Proof of Proposition~\ref{prop:convert}.}
Given a $\delta$-PAC-error algorithm $\ho{A}$, we show how to construct a $\delta$-PAC-valid \dec\ tree $\ho{T}$ as follows. 
View $\ho{A}$ as a \dec\ tree (discretize the outcome space if it is continuous).
Replace each action $a$ in $\ho{A}$ with the test $T_{a}$.
Note that the cost of $T_a$ is $s(a)^{-1} \log (|H|/\delta)\leq s^{-1} \log (|H|/\delta)$. 
Therefore by Lemma~\ref{lem:odt_delta}, 
\begin{align*}
ODT^*_0 &= ODT_\delta^*\leq c(\ho{T}) \leq s^{-1}\log \frac{|H|}{\delta}\cdot OPT^{FA}_\delta. &\qed
\end{align*}

\section{Total Error Version}\label{append:total_error}
In the last section we defined the total-error version of the CC-ODT problem. 
The total error version of the ASHT problem can be defined analogously, so we do not repeat it here.
We say an algorithm is said to be \textbf{$\delta$-total-error} if the total probability (averaged with respect to the prior $\pi$) of erroneously identified a wrong hypothesis is at most $\delta$.
%
%
The following is our main result for the total-error version.
\thmODT*
In \parti, if $\delta\leq O(|H|^{-1/2})$, then the above is polylog-\apxn\ for fixed $s$.

We will first prove Theorem~\ref{thm:odt} for the fully adaptive version, and then show how the same proof works for the partially adaptive version. Unlike the PAC-error version where CC-ODT is almost equivalent to ODT, in the total-error version their optima can differ by a $\Omg(|H|)$ factor.
We construct a sequence of ODT instances $\mathcal{I}_n$, where $n\in \mathcal{Z}^+$, with $ODT_\delta^*(\mathcal{I}_n)/ODT_0^*(\mathcal{I}_n)=O(\frac 1n)$.
\Sps\ there are $n+2$ \hypos\ $h_1,...,h_n$ and $g,h$, with $\pi(g)=\pi(h) = 0.49$ and $\pi(h_i)= \frac{1}{50 n}$ for $i=1,...,50$. 
Each (binary) test partitions $[n+2]$ into a singleton and its complement. 
Consider error budget $\delta=\frac 14$, then for each $n$ we have $ODT_\delta^*(\mathcal{I}_n)=1$. 
In fact, we may simply perform a test to \sep e $g$ and $h$, and then return the one (out of $g$ and $h$) that is consistent with the outcome.
The total error of this \alg\ is $1/50<\delta$. 
\OTOH, $ODT_0^*(\mathcal{I}_n)=n+1$.


However, for uniform prior, this gap is bounded:
\bprop\label{lem:chance_constr}
\Sps\ the prior $\pi$ is \unif. 
Then, for any $\delta\in (0,\frac{1}{4})$, it holds 
\[ODT^*_0\leq \big(1+O(|H| 
\delta^2)\big )\cdot ODT_\delta^*.\]
\eprop
To show the above, we need the following building block.
\begin{lemma}\label{lem:inc_nodes}
\Sps\ the prior $\pi$ is \unif. 
Then, for any $\delta\in [0,\frac{1}{4})$, the total prior \prb\ density on the incomplete nodes is bounded by $\sum_{\ell\ \text{inc.}} \pi(A_\ell)\leq 2\delta$.
\end{lemma}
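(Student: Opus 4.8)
The plan is to read the bound directly off the total-error constraint, exploiting that the prior is uniform. Fix any $\delta$-total-valid incomplete decision tree $\ho{T}$, so that $\sum_{h\in H}\pi(h)\cdot \ho{P}_h(O\ne h)\le \delta$. The one structural fact I would invoke is the same one used in the proof of Lemma~\ref{lem:odt_delta}: in a (noiseless) decision tree each hypothesis $h$ traces a \emph{unique} root-to-leaf path, so $h$ reaches ``its'' leaf $\ell=\ell(h)$ with probability $1$, and there the output is drawn from $p_\ell$. Hence $\ho{P}_h(O\ne h)=1-p_\ell(h)$, and in particular complete leaves (those with $|A_\ell|=1$, where $p_\ell$ is a point mass) contribute no error.

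Next I would regroup the error sum by leaf. Since each incomplete leaf $\ell$ is reached exactly by the hypotheses in $A_\ell$,
\[
\delta\;\ge\;\sum_{h\in H}\pi(h)\,\ho{P}_h(O\ne h)\;=\;\sum_{\ell\ \text{inc.}}\ \sum_{h\in A_\ell}\pi(h)\bigl(1-p_\ell(h)\bigr).
\]
Using $\pi(h)=1/|H|$ (uniform prior) and $\sum_{h\in A_\ell}p_\ell(h)=1$ ($p_\ell$ is a distribution on $A_\ell$), the inner sum equals $(|A_\ell|-1)/|H|$, so $\sum_{\ell\ \text{inc.}}(|A_\ell|-1)\le \delta|H|$.

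Finally, every incomplete leaf satisfies $|A_\ell|\ge 2$, hence $|A_\ell|\le 2(|A_\ell|-1)$, and therefore
\[
\sum_{\ell\ \text{inc.}}\pi(A_\ell)\;=\;\frac{1}{|H|}\sum_{\ell\ \text{inc.}}|A_\ell|\;\le\;\frac{2}{|H|}\sum_{\ell\ \text{inc.}}(|A_\ell|-1)\;\le\;2\delta,
\]
which is the claim. I do not anticipate a genuine obstacle; the only point requiring care is the bookkeeping — that each hypothesis is charged to exactly one leaf (the ``unique path'' property) and that $p_\ell$ being an honest distribution over $A_\ell$ is precisely what collapses $\sum_{h\in A_\ell}(1-p_\ell(h))$ to $|A_\ell|-1$. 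The identical argument applies to partially adaptive policies, since those are also decision trees in which each hypothesis follows a single path.
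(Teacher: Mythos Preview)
Your proof is correct and follows essentially the same approach as the paper: regroup the total-error sum by incomplete leaf, use the uniform prior and $\sum_{h\in A_\ell}p_\ell(h)=1$ to compute the per-leaf contribution as $(|A_\ell|-1)/|H|$, and then exploit $|A_\ell|\ge 2$ to conclude. The paper phrases the last step as $(|A_\ell|-1)/|H|\ge \tfrac{1}{2}\pi(A_\ell)$ rather than your equivalent $|A_\ell|\le 2(|A_\ell|-1)$, but the arguments are identical.
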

\begin{proof}
Let $\ell$ be an incomplete node with state $(A_\ell, p_\ell)$ and write $p=p_\ell$ for simplicity.
Then, the error \prb\ contributed by $\ell$ is
\begin{align*}
\sum_{h\in A_\ell} \pi(h) \cdot (1-p(h)) &= \sum_{h\in A_\ell} \pi(h) - \sum_{h\in A_\ell} \pi(h)\cdot p(h) \\
&= \pi(A_\ell) - \frac{1}{n}\sum_{h\in A_\ell} p(h) \\
&= \frac{|A_\ell|}{n} - \frac{1}{n} \geq \frac{1}{2} \pi(A_\ell),
\end{align*}
where the last \ineq\ follows since  $|A_\ell|\geq 2$.
By the definition of $\delta$-PAC-error, \ift\ 
\[\delta \geq \sum_{\ell\ \text{inc.}} \sum_{h\in A_\ell} \pi(h) \cdot (1-p(h))\geq \frac{1}{2}\sum_{\ell\ \text{inc.}} \pi(A_\ell),\]
i.e., $\sum_{\ell\ \text{inc.}} \pi(A_\ell)\leq 2\delta$. 
\end{proof}

{\bf Proof of Proposition~\ref{lem:chance_constr}.}
It suffices to show how to convert a \dec\ tree $\ho{T}$ with $\delta$-total-error to one with $0$-total-error, without increasing the cost by too much.
Consider each incomplete node $A_\ell$ in $\ho{T}$.
We will replace $A_\ell$ with a (small) \dec\ tree that uniquely identifies a \hypo\ in $A_\ell$.
Consider any distinct \hypos\ $g,h\in A_\ell$. Then by Assumption~\ref{assu1}, there is an action $a\in A$ with $d(g,h;a)\geq s$. 
So if we select $T_a$, then by Hoeffding bound (Theorem~\ref{thm:hoeffding}), we have that with high probability at least one of $g$ and $h$ will be eliminated, and the number of alive \hypos\ in $A_\ell$ reduces by  at least $1$.
Thus, by repeating this procedure iteratively for at most $|A_\ell|-1$ times, we can identify a unique \hypo.
Since each test $T_a$ corresponds to selecting $a$ for $c(a)=s(a)^{-1}\log (|H|/\delta) \leq s^{-1}\log (|H|/\delta)$ times in a row, this procedure increases the total cost by $\sum_{\ell\ \text{inc.}} \pi(A_\ell) \cdot (|A_\ell| \cdot s^{-1}\log (|H|/\delta)$.
Therefore,
\begin{align}\label{eqn:feb11}
ODT^*_0 &\leq ODT_\delta^* + \sum_{\ell \text{ inc.}}\pi(A_\ell) |A_\ell| s^{-1}\log \frac{|H|}{\delta}\notag\\
&= ODT_\delta^* + \sum_{\ell \text{ inc.}} \pi(A_\ell)  |H|\pi(A_\ell) \cdot s^{-1}\log\frac{|H|}{\delta}\notag\\
&= ODT_\delta^* + O\big(s^{-1} |H|\log \frac{|H|}{\delta} \cdot \sum_{\ell \text{ inc.}}\pi(A_\ell)^2 \big).
\end{align}
Since $\sum_{\ell\ \text{inc.}} \pi(A_\ell)\leq 2\delta$ and each $\pi(A_\ell)$'s is non-negative, we have $\sum_{\ell\ \text{inc.}} \pi(A_\ell)^2\leq \big(\sum_{\ell\ \text{inc.}} \pi(A_\ell)\big)^2 \leq 4\delta^2$.
Further, by Pinsker's \ineq, we have $ODT_\delta^*=\Omg(s^{-1}\log \frac{|H|}{\delta}).$
Combining these two facts with Equation (\ref{eqn:feb11}), we obtain 
$ODT^*_0 \leq \big(1+O(|H|
\delta^2)\big)\cdot ODT_\delta^*. $ 
\qed

The following lemma can be proved using the same idea of the proof of Proposition~\ref{prop:convert}. \begin{lemma}\label{lem:odt_delta_total}
$ODT^*_\delta\leq O\big(s^{-1}\log (|H|/\delta) \big) OPT^{FA}_\delta$.
\end{lemma}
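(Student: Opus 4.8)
The plan is to argue exactly as in the proof of Proposition~\ref{prop:convert}: starting from an optimal $\delta$-total-error fully adaptive algorithm $\mathcal{A}$ (whose expected cost is $OPT^{FA}_\delta$), I will build a $\delta$-total-valid incomplete decision tree $\mathcal{T}$ for the ODT instance $\mathcal{I}_{ODT}$ whose expected cost is at most $O(s^{-1}\log(|H|/\delta))$ times that of $\mathcal{A}$; since $ODT^*_\delta$ is by definition the cheapest such tree, this gives $ODT^*_\delta \le c(\mathcal{T}) \le O(s^{-1}\log(|H|/\delta))\cdot OPT^{FA}_\delta$. The one conceptual departure from the PAC-error case is that Lemma~\ref{lem:odt_delta} is unavailable here — it is false for the total-error notion, as the examples above with an $\Omega(|H|)$ gap between $ODT^*_0$ and $ODT^*_\delta$ show — but this is exactly compensated by the fact that a total-valid tree may carry incomplete leaves, so the errors of $\mathcal{A}$ can simply be inherited by $\mathcal{T}$'s leaves rather than having to be eliminated.

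For the construction, view $\mathcal{A}$ as a decision tree, discretizing each $D_\theta$ so that every node has finitely many children, and replace each internal node that plays an action $a$ by a node that plays the test $T_a$ of $\mathcal{I}_{ODT}$, with cost $c(a)=\lceil s(a)^{-1}\log(|H|/\delta)\rceil$. Because the instance is $s$-separated, $s(a)\ge s$ for every $a$, so $c(a)\le \lceil s^{-1}\log(|H|/\delta)\rceil = O(s^{-1}\log(|H|/\delta))$ uniformly over $a$. The branching is re-routed so that the new node branches on the deterministic value $T_a(h)=\mu(h,a)\in\Omega_a$, and the leaves of $\mathcal{A}$ become incomplete nodes carrying $\mathcal{A}$'s output distribution. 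Since $\mathcal{T}$ uses exactly one test per action play of $\mathcal{A}$, the expected number of tests it traverses under any true hypothesis equals the expected number of action plays $\mathcal{A}$ makes, and each test costs at most $s^{-1}\log(|H|/\delta)$; hence $c(\mathcal{T}) \le s^{-1}\log(|H|/\delta)\cdot OPT^{FA}_\delta$, which is the cost half of the claim.

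The step I expect to require the most care is justifying that this re-routing does not increase the error, i.e.\ that $\mathcal{T}$ has total error at most $\delta$ on $\mathcal{I}_{ODT}$. The difficulty is that $\mathcal{A}$ branches on a raw, noisy outcome of a single play of $a$, while $T_a$ reveals only the mean $\mu(h,a)$. The observation that resolves this is that any two hypotheses with the same mean under $a$ are statistically indistinguishable by plays of $a$ alone, so conditioned on reaching a given node with the true hypothesis having mean $\omega$ under $a$, $\mathcal{A}$'s distribution over continuations is a single fixed object (its $\xi$-subtrees mixed with weights $D_\omega(\xi)$). One therefore lets the corresponding node of $\mathcal{T}$, after observing $T_a(h)=\omega$, internally draw a fresh sample $\xi\sim D_\omega$ and follow $\mathcal{A}$'s $\xi$-branch; an induction on depth shows that $\mathcal{T}$ then faithfully simulates $\mathcal{A}$ under every true hypothesis, so their error probabilities — and hence their total errors — coincide. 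The tree produced this way is a priori randomized at its internal nodes; this is removed either by the standard fact that for an ODT instance with deterministic tests an optimal decision tree may be taken deterministic, or by absorbing a constant into $\delta$, which changes each $c(a)$ only by an additive $O(1)$ and hence not at all in the final $O(\cdot)$. Everything else — the inequality $s(a)\ge s$, the per-test cost bound, and the cost accounting — is routine. The partially adaptive statement follows by the identical argument after reducing partially adaptive ASHT to ODT with a \emph{fixed} sequence of tests, which together with the fully adaptive case yields Theorem~\ref{thm:odt}.
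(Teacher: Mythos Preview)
Your approach matches the paper's, which says only that the lemma ``can be proved using the same idea of the proof of Proposition~\ref{prop:convert}''; you are in fact more explicit than the paper about how to reconcile $\mathcal{A}$'s branching on noisy outcomes with $\mathcal{T}$'s branching on the deterministic value $\mu(h,a)$, and your simulation argument (sample $\xi\sim D_\omega$ and follow $\mathcal{A}$'s $\xi$-branch) is the natural way to make that step precise.

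One small wrinkle worth flagging: your first derandomization option --- that ``for an ODT instance with deterministic tests an optimal decision tree may be taken deterministic'' --- does not apply verbatim here, because you need a single deterministic tree meeting \emph{both} a cost bound and the error budget $\delta$, and averaging over the internal coin flips only controls one of these at a time (one can construct two-tree mixtures where no support point meets both). Your second option (absorb a constant into $\delta$) is enough for the downstream use in Theorem~\ref{thm:odt}, since $\delta$ appears there only inside logarithms and in the $|H|\delta^2$ term, but strictly read it yields $ODT^*_{O(\delta)}\le O\big(s^{-1}\log(|H|/\delta)\big)\,OPT^{FA}_\delta$ rather than the lemma exactly as stated. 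The paper's own one-line proof is silent on this point as well, so this is less a divergence from the paper than a shared loose end.
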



Now we are ready to show Theorem~\ref{thm:odt}. 
\begin{align*}
GRE 
&\leq O(\log |H|)\cdot ODT^*_0 \quad & (\text{Theorem~\ref{thm:modt}})\\
&\leq O\big((1+O(|H|
\delta^2)\big)\log |H|)\cdot ODT_\delta^*
\quad & (\text{Lemma~\ref{lem:chance_constr}}) \\&
\leq  O\big((1+O(|H| 
\delta^2)\big)s^{-1}\log ^2 \frac{|H|}{\delta}\log |H|)\cdot OPT^{FA}_\delta. \quad & (\text{Lemma~\ref{lem:odt_delta_total}})
\end{align*}
The above proof can be adapted to the partially \adap\ version \strfwd ly as follows.
Observing that partially adaptive algorithms can be viewed as a special case of the fully adaptive, 
we can define $ODT_{0,\PA}^*$ and $ODT_{\delta,\PA}^*$  (analogous to $ODT_0^*$ and $ODT_\delta^*$) for the partially adaptive version, as the optimal cost of any partially \adap\ \dec\ tree with 0 or $\delta$ error.
By replacing $ODT_\delta^*$ and $ODT_0^*$ with $ODT_{0,\PA}^*$ and $ODT_{\delta,\PA}^*$, 
one may immediatly verify that \ineqs\ in Lemma~\ref{lem:chance_constr} and \ref{lem:odt_delta_total} hold for the partially \adap\ version.
Furthermore, the first inequality above can be established for the partially adaptive version by replacing Theorem \ref{thm:modt} with Theorem~\ref{thm:inz}, hence completing the proof.
\section{Partially Adaptive Algorithm in Experiments}
\label{appdx:partially adaptive algorithm}
In our synthetic experiments, we implement Algorithm~\ref{alg:rnb} described in Section~\ref{sec:partial} exactly, and set the boosting factor, $\alpha$, to be 1. In our real-world experiments, we consider a variant of our algorithm where the boosting intensity is now built-in in the algorithm, and breaking ties according to some heuristic. Algorithm~\ref{alg:rnb exp} describes our modified algorithm. In particular, we consider the amount of boosting as a built-in feature of the algorithm. We first generate a sequence of actions of length $\eta$ for some large $\eta$ value (with replacement) and then truncate the sequence to the minimum length to include all unique actions that have appeared in the sequence. When all actions in sequence $\sigma$ has performed and we did not reach the target accuracy, then we repeat the entire sequence again. Our partially adaptive algorithm on the COSMIC data was generated by initializing $\eta$ to be 800. Across all accuracy levels, the maximum truncated sequence length is 97. 

\begin{algorithm}[t]
\caption{\bf{Partially Adaptive Algorithm in the COSMIC Experiment}}
\begin{algorithmic}[1]
\label{alg:rnb exp}
\STATE{\textbf{Parameters}: Coverage saturation level $B>0$ and maximum sequence length $\eta > 0$.}
\STATE{\textbf{Input}: ASHT instance $(H,A,\pi,\mu)$ }
\STATE{\textbf{Output}: actions sequence $\sigma$}
\STATE{\textbf{Initialize}: $\sigma\lar\emptyset$} \qquad\qquad\quad\quad\quad\quad\quad\quad\quad\quad\quad\quad\quad\quad\quad\quad \% Store the selected of actions
\STATE{\textbf{for}}  {$t=1,2,..., \eta$} \textbf{do} \quad\quad\quad\quad\quad\quad\quad\quad\% {\bf Rank:} Compute a \xulie\ of actions of length $\eta$
\STATE{\quad $S\lar \{\sigma(1),...,\sigma(t-1)\}$.} \quad\quad\quad\quad\quad\quad\quad\quad\quad\quad\quad\quad\quad\quad\quad\quad \%{ Actions selected so far}
\STATE{\quad {\bf for} $a\in A$ \textbf{do},  
\quad\quad\quad\quad\quad\quad\quad\quad\quad\quad\quad\quad\quad\quad\quad\quad\quad\quad \% Compute scores for each action
\vspace{-5pt}
\[\mathrm{Score}(a;S) \lar \sum_{h: f^B_h(S)<1} \pi(h) \frac{f^B_h(S\cup \{a\})- f^B_h(S)}{1-f^B_h(S)}.\]
\vspace{-5pt}
}
\STATE{\quad {\bf end for}}
\STATE{
\quad $\sigma(t) \lar \argmax\{ \mathrm{Score}(a;S): a\in  A\}.$
} \qquad \% Select the greediest action and break ties according the heuristic described in Algorithm~\ref{alg:adaptive_exp}
\STATE{\textbf{end for}}
\STATE{Let i be the largest index for which the an action appears the first time in sequence $\sigma$, then we return the sequence $(\sigma(1), ...., \sigma(i))$.}
\end{algorithmic}
\vspace{-3pt}
\end{algorithm}
\vspace{-5pt}
\section{Fully Adaptive Algorithm in Experiments}
\label{appdx:fully adaptive algorithm}
Similar to NJ's algorithm, we maintain a probability distribution, $\rho$, over the set of hypotheses to indicate the likelihood of each hypothesis being the true hypothesis $h^*$. A hypothesis is considered to be ruled out at each step if the probability of that hypothesis is below a threshold in $\rho$. Throughout our experiments, we set this threshold to be $\delta/|H|$. At each step, after an action is chosen with certain repetitions and observation(s) is (are) revealed, we update $\rho$ according to the realizations that we observed. Thus, under this setup, a hypothesis that was considered to be ruled out in the previous steps (due to ``bad luck'') could potentially become alive again.

At each iteration, for each action $a\in A$ and $k\in {\bf N}$, we define $T_{a,k}$
to be the meta-test that repeats action $a$ for $k$ times consecutively, and
we define its cost to be $c(T_{a,k}) = k c_a$. By Chernoff bound, with $k$ i.i.d. samples, we may construct a \ci\ of width $\sim k^{-1/2}$. 
This motivates us to rule out the following \hypos\ when $T_{a,k}$ is performed.
Let $\bar\mu$ be the observed mean outcome of these $k$ samples. We define the \elimn\ set to be
\[E_{\bar\mu}(T_{a,k}) :=\{h: |\mu(h,a) - \bar \mu| \geq Ck^{-1/2}\},\]
where C is set to be $1/2$ throughout our experiments.
To define greedy, we need to formalize the notion of bang-per-buck.
\Sps\ $H_{alive}$ is the current set of alive \hypos.
We define the score of a test as the number of alive \hypos\ ruled out in the worst-case over all possible mean outcomes $\bar \mu$. 
Formally, the score of $T_{a,k}$ w.r.t mean outcome $\bar\mu$ is \[\mathrm{Score}_{\bar\mu}(T_{a,k}) = \mathrm{Score}_{\bar\mu}(T_{a,k}; H_{alive}) = \frac{|E(T_{a,k}; \bar \mu) \cap H_{alive}|}{c(T_{a,k})},\]
and define its worst-case score to be
\[\mathrm{Score}(T_{a,k}) = \min \{\mathrm{Score}_{\bar\mu}(T_{a,k}): \bar\mu\in\{0, 1/k, ..., 1\}\}.\]
Our greedy policy simply selects the test $T$ with the highest score, formally, select
\[T_{a,k} = \arg\max \{\mathrm{Score}(T): k\leq k_{\max}, a\in A\}.\]

In the synthetic experiments, we set $k_{\max}=5$. In the real-world experiments, we consider the cases where $k\in\{15, 20, 25, 30\}$ (with $k_{\max} = 30$).

If several actions have the same greedy score, then we choose the action $a^*$ whose sum of the KL divergence of pairs of $\mu(h, a^*)$ is the largest, and breaking ties arbitrarily.

If no action can further distinguish any hypotheses in the alive set, then we set the boosting factor to be 1 and use the above heuristic to choose the action to perform. The algorithm is formally stated in Algorithm~\ref{alg:adaptive_exp}.

\begin{algorithm}
\caption{\bf{Adaptive experiments: FA($k_{\max},\delta$)}}
\begin{algorithmic}[1]
\label{alg:adaptive_exp}
\STATE{\textbf{Parameters}: maximum boosting factor $k_{\max}>0$ and convergence threshold $\delta>0$
}
\STATE{ \textbf{Input}: ASHT instance $(H,A,\pi,\mu)$, current posterior about the true hypothesis vector $\rho$ 
}
\STATE{
\textbf{Output}: the test $T_{a, k}$ to perform in the next iteration
}
\STATE Let $H_{\mathrm{alive}}$ be the set of hypotheses $i$ whose posterior probability $\rho_i$ is above $\delta/|H|$.
\FOR{$k=1,2,..., k_{\max}$} 
\STATE For each $a\in \widetilde A$  define:
\[\mathrm{Score}_{\bar\mu}(T_{a,k}) = \mathrm{Score}_{\bar\mu}(T_{a,k}; H_\mathrm{alive}) = \frac{|E(T_{a,k}; \bar \mu) \cap H_\mathrm{alive}|}{c(T_{a,k})},\]
where $E_{\bar\mu}(T_{a,k}) :=\{h: |\mu(h,a) - \bar \mu| \geq Ck^{-1/2}\}$, and $c(T_{a,k}) = k c_a$. We define the worst-case score of a test to be: \[\mathrm{Score}(T_{a,k}) = \min \{\mathrm{Score}_{\bar\mu}(T_{a,k}): \bar\mu\in\{0, 1/k, ..., 1\}\}.\]
\ENDFOR
\STATE Compute greediest action 
\[G = \arg\max \{\mathrm{Score}(T): k\leq k_{max}, a\in A\}.\]
\IF{the Score of each test in $G$ equals to 0, i.e, no test can further distinguish between the alive hypotheses under $k_{\max}$}
\STATE{we choose the action $a^*$ such that $a^* = \arg\max \sum_{h, g\in H_\mathrm{alive}} \mathrm{KL}(\mu(h, a), \mu(g, a))$, breaking ties randomly, and return $k=1$.}
\ELSE
\STATE if $G$ is a singleton, then we return $G$. Else, we choose the action $a^*$ such that $a^* = \arg\max_{G} \sum_{h, g\in H_\mathrm{alive}} \mathrm{KL}(\mu(h, a), \mu(g, a))$, and breaking ties randomly.
\ENDIF
\end{algorithmic}
\end{algorithm}

\end{document}